\pdfoutput=1

\documentclass[11pt]{article}

\usepackage[preprint]{acl}

\usepackage{times}
\usepackage{latexsym}

\usepackage[T1]{fontenc}

\usepackage[utf8]{inputenc}

\usepackage{microtype}

\usepackage{inconsolata}

\usepackage{graphicx}

\usepackage{hyperref}
\usepackage{url}
\usepackage{booktabs}       
\usepackage{amsfonts}       
\usepackage{amsthm}
\usepackage{amsmath}
\usepackage{amssymb}
\usepackage{nicefrac}       
\usepackage{microtype}      
\usepackage[ruled,linesnumbered]{algorithm2e}
\usepackage[noend]{algpseudocode}
\usepackage{graphicx}
\usepackage{wrapfig}
\usepackage{float, caption, subcaption}
\usepackage{mathtools}
\usepackage{enumerate}
\usepackage{caption}
\usepackage{cases}
\usepackage{mathrsfs}
\usepackage{verbatim}
\usepackage[most]{tcolorbox}
\usepackage{empheq}
\usepackage[capitalize]{cleveref}
\usepackage{pifont}
\usepackage{multirow}
\usepackage{colortbl}  
\usepackage{stfloats}
\usepackage{xcolor}


\usepackage{amsmath,amsfonts,bm}









\def\eqref#1{equation~\ref{#1}}









\def\ceil#1{\lceil #1 \rceil}
\def\floor#1{\lfloor #1 \rfloor}
\def\1{\bm{1}}







\def\vzero{{\bm{0}}}
\def\vone{{\bm{1}}}

\def\va{{\bm{a}}}
\def\vb{{\bm{b}}}
\def\vc{{\bm{c}}}

\def\ve{{\bm{e}}}
\def\vf{{\bm{f}}}
\def\vg{{\bm{g}}}
\def\vh{{\bm{h}}}

\def\vk{{\bm{k}}}

\def\vm{{\bm{m}}}

\def\vo{{\bm{o}}}
\def\vp{{\bm{p}}}
\def\vq{{\bm{q}}}

\def\vs{{\bm{s}}}
\def\vt{{\bm{t}}}
\def\vu{{\bm{u}}}
\def\vv{{\bm{v}}}

\def\vx{{\bm{x}}}
\def\vy{{\bm{y}}}
\def\vz{{\bm{z}}}



\def\mK{{\bm{K}}}

\def\mQ{{\bm{Q}}}

\def\mT{{\bm{T}}}

\def\mV{{\bm{V}}}
\def\mW{{\bm{W}}}

\DeclareMathAlphabet{\mathsfit}{\encodingdefault}{\sfdefault}{m}{sl}
\SetMathAlphabet{\mathsfit}{bold}{\encodingdefault}{\sfdefault}{bx}{n}


\def\gS{{\mathcal{S}}}


\def\sF{{\mathbb{F}}}

\def\sR{{\mathbb{R}}}

\def\sZ{{\mathbb{Z}}}










\def \ape {\text{ape}}

\def \rmd {\mathrm{d}}
\def \rme {\mathrm{e}}

\renewcommand{\mod}{\operatorname{mod}}

\theoremstyle{plain}
\newtheorem{theorem}{Theorem}[section]
\newtheorem{proposition}[theorem]{Proposition}
\newtheorem{lemma}[theorem]{Lemma}

\theoremstyle{definition}
\newtheorem{definition}[theorem]{Definition}
\newtheorem{assumption}[theorem]{Assumption}
\newtheorem{example}[theorem]{Example}
\newtheorem{remark}[theorem]{Remark}
\theoremstyle{remark}

\Crefname{assumption}{Assumption}{Assumptions}

%
%

\title{How Numerical Precision Affects Arithmetical Reasoning Capabilities of LLMs}



\newcommand*\samethanks[1][\value{footnote}]{\footnotemark[#1]}

\author{
 \textbf{Guhao Feng\thanks{Equal contribution.}\textsuperscript{1}},
 \textbf{Kai Yang\samethanks{}\textsuperscript{2}},
 \textbf{Yuntian Gu\textsuperscript{1}},
 \textbf{Xinyue Ai\textsuperscript{2}},
 \textbf{Shengjie Luo\textsuperscript{1}},\\
 \textbf{Jiacheng Sun\textsuperscript{3}},
 \textbf{Di He\textsuperscript{1}},
 \textbf{Zhenguo Li\textsuperscript{3}},
 \textbf{Liwei Wang\textsuperscript{1,4}}
\\
\\
 \textsuperscript{1}State Key Laboratory of General Artificial Intelligence, \\ School of Intelligence Science and Technology, Peking University \\
 \textsuperscript{2}School of EECS, Peking University \;
 \textsuperscript{3}Huawei Noah's Ark Lab \\
 \textsuperscript{4}Center for Machine Learning Research, Peking University
\\
 \small{
   \textbf{Correspondence:} \href{mailto:dihe@pku.edu.cn}{dihe@pku.edu.cn}
 }
}

\begin{document}
\maketitle
\begin{abstract}

Despite the remarkable success of Transformer-based large language models (LLMs) across various domains, understanding and enhancing their mathematical capabilities remains a significant challenge. In this paper, we conduct a rigorous theoretical analysis of LLMs' mathematical abilities, with a specific focus on their arithmetic performances. We identify numerical precision as a key factor that influences their effectiveness in arithmetical tasks. Our results show that Transformers operating with low numerical precision fail to address arithmetic tasks, such as iterated addition and integer multiplication, unless the model size grows super-polynomially with respect to the input length. In contrast, Transformers with standard numerical precision can efficiently handle these tasks with significantly smaller model sizes. We further support our theoretical findings through empirical experiments that explore the impact of varying numerical precision on arithmetic tasks, providing valuable insights for improving the mathematical reasoning capabilities of LLMs.


\end{abstract}

\section{Introduction}
\label{sec:intro}

\begin{table*}[t]
  \centering
  \renewcommand\arraystretch{1.5}
  \begin{tabular}{|c|c|c|}
    \hline
    \textbf{Arithmetic Tasks}           & \textbf{Standard Precision} & \textbf{Low Precision} \\
    \hline
    Integer Addition $\operatorname{ADD}_p(n)$       &     \cellcolor[RGB]{189,208,246} Constant     &     \cellcolor[RGB]{189,208,246}  $O(n^2)$                    \\
    \hline
    Iterated Addition $\operatorname{IterADD}_p(n, k)$     &     \cellcolor[RGB]{189,208,246} Constant        &    \cellcolor[RGB]{240,192,192}      Super-polynomial                 \\
    \hline
    Integer Multiplication $\operatorname{Mul}_p(n, l)$       &      \cellcolor[RGB]{189,208,246} $O(n^2)$       &  \cellcolor[RGB]{240,192,192}   Super-polynomial                      \\
    \hline
  \end{tabular}
  \caption{The model size \textit{w.r.t.} the input size required for various arithmetic tasks on bounded-depth Transformers, under both standard and low numerical precision.  Blue denotes the acceptable model size, and red represents the unaffordable model size.}
  \label{table:thm-result}
\end{table*}

Transformer-based LLMs, such as GPT \cite{openai2023gpt4}, Claude \cite{TheC3}, and LLAMA \cite{dubey2024llama}, have achieved impressive performance across a broad range of natural language tasks \cite{basyal2023textsummarizationusinglarge,shao-etal-2023-character,zhu-etal-2024-multilingual}. Despite the great success, significant challenges remain when applying LLMs to mathematical problem-solving. Unlike many typical NLP tasks, which often depend on pattern recognition and statistical correlations \cite{blei2003latent}, mathematical reasoning requires rigorous logical deduction in a specific order 
\citep{bubeck2023sparks,frieder2024mathematical}.
To address these challenges, various strategies have been proposed, including carefully designed prompting
strategies \citep{wei2022chain,yamauchi2023lpmlllmpromptingmarkuplanguage,imani-etal-2023-mathprompter} and inference-based searching method \citep{kang2024mindstarenhancingmathreasoning,wu2024empirical,snell2024scaling,brown2024large}. However, a comprehensive understanding of the intrinsic limitations that restrict the mathematical reasoning capabilities of LLMs remains elusive. 

In principle, mathematical reasoning, built on basic arithmetical operations, requires accurate computation of intermediate results throughout the reasoning process \citep{bubeck2023sparks,lee2024teaching}. There exist works \citep{feng2023towards,yang2024do} exploring the arithmetic capabilities of LLMs with Chain of Thought (CoT) prompting \citep{wei2022chain}. However, these investigations often deviate from the tokenization strategies employed by modern LLMs \citep{openai2023gpt4,dubey2024llama}, where numbers are typically segmented into tokens of at most three digits. Under the assumption of \citet{feng2023towards} and \citet{yang2024do}, each distinct number occupies a unique position in the vocabulary, leading to an essential mismatch with practical implementations. Moreover, recent studies have demonstrated that LLMs operating with reduced numerical precision (e.g., \texttt{int4}) exhibit a significant decline in performance on mathematical tasks \citep{jin2024comprehensive,marchisio2024does}.

In this paper, we provide a rigorous theoretical investigation of the arithmetical abilities of LLMs under the autoregressive paradigm. Specifically, we adopt the tokenization approach used in modern LLMs, where numbers are processed and generated token by token, with each token representing only a small number of digits. Under these assumptions, we identify \textbf{numerical precision} as a key factor influencing their performance in arithmetical tasks. Our analysis focuses on three elementary arithmetic tasks: integer addition, iterated addition, and integer multiplication, which serve as elementary building blocks in solving complex real-world math problems.

To elucidate the role of numerical precision, we first examine the expressiveness of Transformers operating under low precision, such as \texttt{int8} and \texttt{int4}. We establish foundational impossibility results for low-precision Transformers, demonstrating that such models require super-polynomial size with respect to input length to solve iterated addition and integer multiplication (\cref{thm:con-iter-add-con_dep,thm:con-mul-mod-con_dep}). Our proofs, grounded in complexity theory \citep{razborov1987lower,arora2009computational}, show that this limitation arises from the inability of individual neurons to store intermediate results during arithmetic computations. As a result, a significantly larger number of neurons is required to distribute the computation and avoid overflow.

We further demonstrate that increasing numerical precision is essential to addressing this limitation. Specifically, as numerical precision improves, the model size required to solve arithmetic tasks decreases significantly. In particular, we prove that a bounded-depth Transformer operating with standard precision (e.g., \texttt{float32}) can efficiently and reliably solve all three tasks under consideration. For both integer and iterated addition, the required model size remains constant and independent of the input length (\cref{thm:log-int-add-con,thm:log-iter-add-con}), while for integer multiplication, the model size scales quadratically w.r.t the input length (\cref{thm:log-mul-mod-con}). These results highlight that standard numerical precision is sufficient for LLMs to effectively perform arithmetic tasks. Our findings emphasize the practical importance of numerical precision in mathematical reasoning. While low-precision models may offer computational advantages, ensuring sufficient numerical precision is critical for tasks involving complex arithmetic. A summary of our main results is provided in \Cref{table:thm-result}.

In addition to theoretical analysis, we conduct extensive experiments to validate our conclusions. First, we evaluate the performance of Transformers trained from scratch on the aforementioned arithmetic tasks, systematically examining how problem size and numerical precision impact their capabilities. Furthermore, we also conduct experiments on LLAMA-3.1-8B Instruct \citep{dubey2024llama} to evaluate the performance of these arithmetic tasks under different numerical precision. Our empirical results demonstrate that both low-precision and standard-precision Transformers perform adequately on the integer addition task. However, as task complexity increases—particularly in iterated addition and integer multiplication—the decrease in precision in Transformers results in significant performance degradation. These findings align with our theoretical predictions and offer practical guidance for enhancing LLM performance in mathematical reasoning tasks.

\section{Preliminary}
\label{sec:pre}

An autoregressive Transformer, or decoder-only Transformer \citep{radford2019language, dai2019transformer}, is a neural network designed to model sequence-to-sequence mappings. For an input sequence $\vs$ of length $n$, each input token $s_i$ (for $i \in [n]$) is transformed into a $d$-dimensional vector $\vx_i^{(0)} = \text{Embed}(s_i) + \vp_i \in \mathbb{R}^d$, where $\text{Embed}(\cdot)$ represents the token embedding function, and $\vp_i$ denotes learnable positional embeddings. The model then consists of $L$ Transformer blocks, each following the form:
\begin{gather*}
    \vh^{(l)}_i = \vx^{(l-1)}_i + \mathrm{Attn}^{(l)} \left( \vx^{(l-1)}_i; \{ \vx^{(l-1)}_j : j \leq i \} \right), \\
    \vx^{(l)}_i = \vh^{(l)}_i + \mathrm{FFN}^{(l)}(\vh^{(l)}_i),
\end{gather*}
where $l \in [L]$. Here, $\mathrm{Attn}^{(l)}$ and $\mathrm{FFN}^{(l)}$ denote the multi-head self-attention layer and the feed-forward network of the $l$-th Transformer block:
\begin{equation*}
    \begin{aligned}
   & \mathrm{Attn}^{(l)}(\vx, \gS)=\sum_{h=1}^H\left(\mW_\text{O}^{(l,h)}\right)^\top \cdot \mathrm{H}^{(l, h)}(\vx, \gS), \\
   &\mathrm{H}^{(l, h)}(\vx, \gS) =  \\
   & \mathrm{softmax}_{\vz \in \gS}\left((\mW_\text{K}^{(l,h)}\vz)^\top(\mW_\text{Q}^{(l,h)}\vx)\right) \mW_\text{V}^{(l,h)}\vz, \\
   & \mathrm{FFN}^{(l)}(\vx)=\mW_2^{(l)}\sigma(\mW_1^{(l)}\vx), 
\end{aligned}
\end{equation*}
where $\mW_\text{Q}^{(l,h)}$,$\mW_\text{K}^{(l,h)}$,$\mW_\text{V}^{(l,h)}$,$\mW_\text{O}^{(l,h)}$ $\in \mathbb R^{\lceil \frac d H\rceil\times d}$ are the query, key, value, and output matrices of the $h$-th head in the $l$-th layer. The weight matrices in the feed-forward network are denoted as $\mW_1^{(l)},\mW_2^{(l)}\in \mathbb R^{d \times d}$. The activation function $\sigma$ is chosen to be GeLU \citep{hendrycks2016gaussian}, following the work of \cite{radford2019language}. 

The computed embedding $\vx_n^{(M)}$ is then used to predict the next token $s_{n+1}$, which is concatenated to the input to continue the sequence generation process. This process terminates when an \texttt{<EOS>} token is generated.
Further discussions on related work are listed in \cref{sec:related}.

\section{Problem Setup}
\label{sec:problem}

This paper explores the arithmetic reasoning capabilities of LLMs by focusing on three elementary arithmetic tasks: integer addition, iterated addition, and integer multiplication under the autoregressive paradigm. Below, we define the integer representations used throughout the study and provide formal descriptions for each task.

\begin{figure*}[t!]
    \centering
    \includegraphics[width=0.8\linewidth]{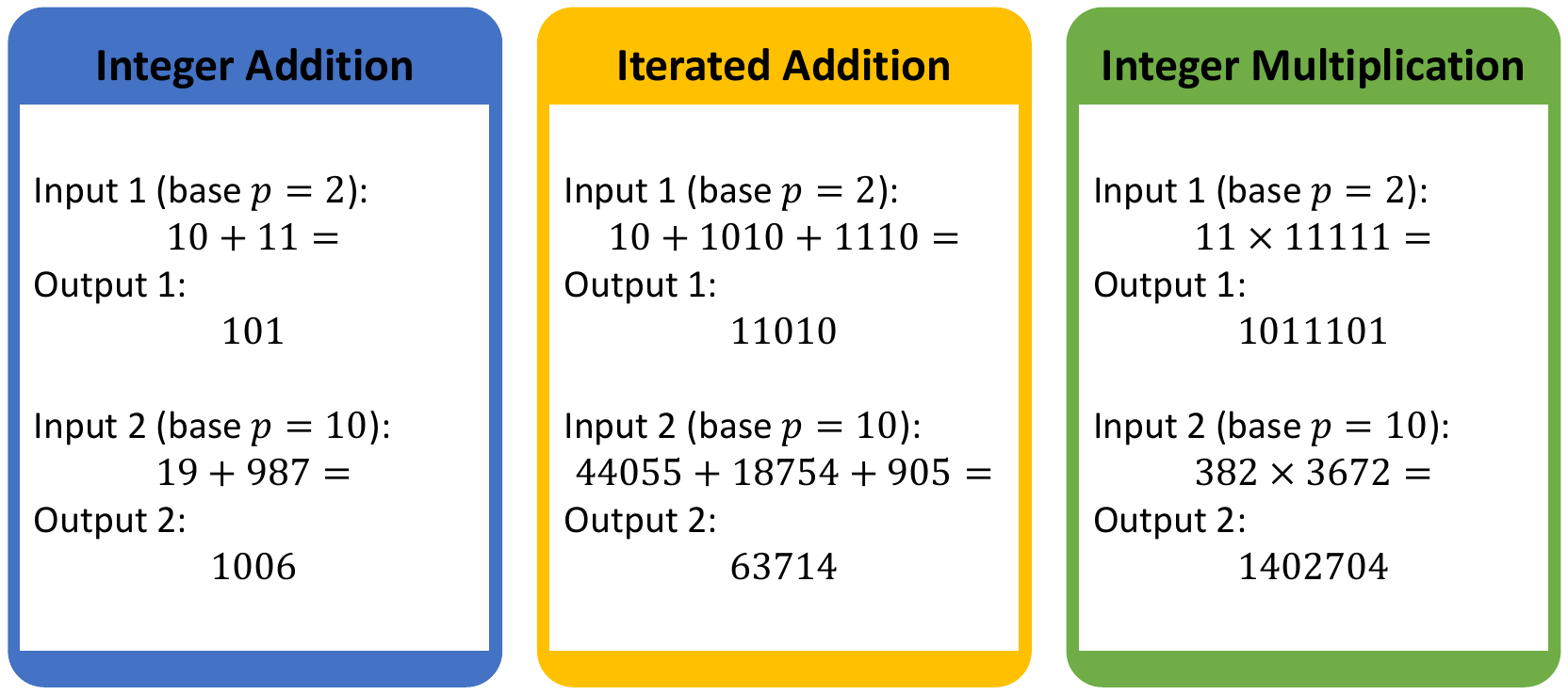}
    \caption{Examples for three elementary arithmetic tasks we consider in this paper: integer addition, iterated addition, and integer multiplication.}
    \label{fig:arithmetic-task}
\end{figure*}

\textbf{Integer Representation and Tokenization. } 
We consider all integers to be non-negative and represented in base-$p$ notation, where $p \geq 2$ is a fixed base. Specifically, an integer with $n$ digits is expressed as $(x_{n-1} \cdots x_0)_p$. To tokenize this sequence, we employ a tokenizer, denoted by $\mT_c$, that partitions $\mathbf{x}$ into tokens, each containing at most $c$ contiguous digits. Formally, let the sequence $\vt=[t_{k-1},\ldots,t_0]=\mT_c([x_{n-1}, \ldots, x_0])$, where $k=\ceil{\frac{n}{c}}$ we have 
\begin{equation}
\label{eq:tokenizer}
    \begin{aligned}
        t_i=\begin{cases}
            [x_{ic},x_{ic+1},\cdots,x_{ic+c-1}], & i < k-1;\\
            [x_{ic},x_{ic+1},\cdots,x_{n-1}], & i=k-1.\\
        \end{cases}
    \end{aligned}
\end{equation}
During sequence generation, the Transformer model outputs the target tokens sequentially, strictly following the tokenization scheme of tokenizer $\mT_c$. Unlike prior works \citep{feng2023towards,yang2024do}, which represent entire integers as single tokens, this approach aligns with prevalent tokenization strategies employed by modern LLMs \citep{openai2023gpt4,dubey2024llama} and enables Transformers to process and generate numbers token by token. Further discussion and illustrative examples of the tokenization scheme are provided in \cref{sec:tokenization}.

\textbf{Integer Addition.} 
Let $\va = (a_{n_1 - 1} \cdots a_0)_p$ and $\vb = (b_{n_2 - 1} \cdots b_0)_p$ denote two integers represented in base-$p$. Their sum is expressed as $\vs = (s_n \cdots s_0)_p = \va + \vb$. Let $\mT_c$ represent the tokenizer. The input sequence is constructed by concatenating the tokenized representations of $\va$ and $\vb$, i.e., $\mT_c(\va)$ and $\mT_c(\vb)$, with the addition operator token `$+$' placed between them, and the equality operator token `$=$' appended at the end. The task is to generate the tokenized representation of the result, $\mT_c(\vs)$, sequentially, one token at a time.

\textbf{Iterated Addition.} 
Now consider $k$ integers in base-$p$: $\va_1 = (a_{1,n_1-1} \cdots a_{1,0})_p, \; \ldots, \; \va_k = (a_{k,n_k-1} \cdots a_{k,0})_p$, where $n = \max\{n_1, \ldots, n_k\}$. Their sum is denoted as $\vs = (s_{n-1} \cdots s_0)_p = \sum_{i \in [k]} \va_i$, where $n=\max_{i\in[k]}\{n_k\}+\ceil{\log k}$. Let $\mT_c$ denote the tokenizer. The input sequence is formed by concatenating the tokenized representations of these integers, separated by the addition operator token `$+$', followed by the equality operator token `$=$' appended at the end. The objective is for the Transformer to generate the tokenized representation of the sum, $\mT_c(\vs)$, sequentially, one token at a time.

\textbf{Integer Multiplication.} 
The integer multiplication task involves computing the product of two integers, truncated to a predefined length $l$. Let $\va = (a_{n_1 - 1} \cdots a_0)_p$ and $\vb = (b_{n_2 - 1} \cdots b_0)_p$ represent two integers in base-$p$, and let $n = \max\{n_1, n_2\}$. Their product is given by $\vs = (s_{2n-1} \cdots s_0)_p = \va \times \vb$. Let $\mT_c$ denote the tokenizer. The input sequence is constructed by concatenating the tokenized representations of $\va$ and $\vb$, separated by the multiplication operator token `$\times$', with the equality operator token `$=$' appended at the end. The objective is to generate the tokenized representation of the product's least significant $l$ digits, $\mT_c([s_{l-1}, s_{l-2}, \ldots, s_0])$, where $l \leq 2n$.

\begin{remark}
    We consider a generalized case of integer multiplication where overflow may occur if the result exceeds the given digit length. Standard integer multiplication is a special case of this framework when $l = n_1+n_2$.
\end{remark}

\Cref{fig:arithmetic-task} presents examples of these tasks. Integer addition is the simplest of these tasks and can be viewed as a specific instance of iterated addition. Furthermore, integer multiplication inherently involves the summation of several intermediate products. Consequently, we present these tasks in increasing order of complexity. In the subsequent sections, we use the notations $\operatorname{ADD}_p(n)$ to denote addition with at most $n$ digits in base-$p$ arithmetic, $\operatorname{IterADD}_p(n, k)$ for the iterated addition of $k$ integers with at most $n$ digits each in base-$p$, and $\operatorname{MUL}_p(n, l)$ for the multiplication of two integers with at most $n$ digits in base-$p$, truncated to $l$ digits.

\section{Low-Precision Transformers Struggle with Basic Arithmetic Tasks}
\label{sec:constant-precision}

Recent studies \citep{marchisio2024does,jin2024comprehensive} have shown that LLMs operating under low-precision constraints encounter significant challenges in performing basic mathematical tasks. In this section, we examine the expressive limitations of Transformers under such constraints and seek to explain the sharp decline in their arithmetical capabilities. Specifically, we demonstrate that Transformers restricted to low-precision arithmetic exhibit substantial difficulty in solving even elementary arithmetic problems. 

To formalize these limitations, we build on the framework introduced by \citet{li2024chain} and utilize the setting of a \textbf{constant-precision Transformer} (See formal definition in \cref{sec:def_constant_precision}). In this setting, the internal states of the model's neurons are constrained to represent real numbers using only $c$ bits, where $c$ is a small constant independent of the input sequence length. These numbers may be represented by floating point in IEEE 754 formats \citep{kahan1996ieee} or fixed point formats. This configuration mirrors many practical deployment scenarios, in which LLMs often employ reduced-precision formats such as \texttt{float8}, \texttt{int8}, or even \texttt{int4}, particularly during inference \cite{han2015deep}. Given that these models typically process input sequences comprising thousands of tokens, it is reasonable and realistic to assume that the numerical precision remains fixed at a small constant, independent of sequence length. Under the constant-precision setting, we examine the expressiveness of the Transformer model in elementary arithmetic problems.

\begin{theorem}
\label{thm:con-int-add-lower}
Fix integers $p\geq 2$ and $c\in \mathbb{N}^*$. Consider the tokenizer $\mT_c$ defined in \cref{eq:tokenizer} for processing the input and output sequences. There exist constant-precision Transformers with constant depth (independent of $n$) and hidden dimension $d = O(n^2)$ that can solve the $\operatorname{ADD}_p(n)$ task. 

\end{theorem}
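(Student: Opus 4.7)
My plan is to construct an explicit constant-depth, constant-precision Transformer of hidden dimension $d = O(n^2)$ that performs $\operatorname{ADD}_p(n)$ via parallel carry lookahead. Because the tokenizer $\mT_c$ emits the output chunks in high-order-first order $[s_{k-1},\ldots,s_0]$, at the moment of generating $s_{k-1-j}$ the model has not yet produced the lower-order chunks on which its incoming carry depends; consequently the carry must be reconstructed from the input operands alone, which rules out a sequential ``propagate-from-previous-output'' construction and forces a parallel realisation of carry lookahead.

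The first Transformer block computes local addition information. I use a one-hot positional embedding of dimension $O(n)$ (each coordinate a single bit, safe under constant precision) together with fixed query/key projections that deterministically match the $i$-th chunk of $\va$ with the $i$-th chunk of $\vb$. The block's feed-forward network then evaluates, from these constant-size inputs, the local generate bit $g_i = \mathbf{1}[a_i+b_i\ge p^c]$ and the local propagate bit $p_i = \mathbf{1}[a_i+b_i = p^c-1]$, and stores them, together with the raw chunk values, in designated slots of the hidden state. All arguments here come from a constant-size alphabet, so the computation is trivially representable with $c$ bits.

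The second block performs the parallel carry computation. At each output position $j$ it uses multi-head attention with $H = \Theta(n)$ heads, where head $h$ is hard-wired via a position-independent query to fetch the pair $(g_h, p_h)$ from chunk position $h$ and deposit it into its own disjoint slot of the $O(n)$-dimensional value channel. Since each head targets a single position against an otherwise constant background, every head uses only an $O(1)$-scale softmax, which the constant-precision budget can handle, and the disjoint slot layout prevents averaging from collapsing information. The width budget $d = O(n^2)$ exactly accommodates $\Theta(n)$ heads of $\Theta(n)$ dimensions each. The FFN of this block then evaluates $c_{k-1-j} = g_{i^\ast}$ where $i^\ast = \max\{i < k-1-j : p_i = 0\}$ (or $0$ if no such $i^\ast$ exists); this is a constant-depth Boolean circuit on $O(n)$ input bits realisable by a sufficiently wide FFN. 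A final, shallow block combines $a_{k-1-j}$, $b_{k-1-j}$, and $c_{k-1-j}$ (all constant-size) into $s_{k-1-j}=(a_{k-1-j}+b_{k-1-j}+c_{k-1-j})\bmod p^c$.

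The main obstacle is routing $\Theta(n)$ bits of carry-relevant information through attention layers whose neurons hold only $c$ bits each: a naive single-head softmax over $\Theta(n)$ keys assigns each position a weight of order $1/n$, which underflows under constant precision. The multi-head design above sidesteps this, because every head's softmax only has to distinguish one target logit from a uniform background and therefore yields attention weights bounded away from zero independent of $n$, while the $O(n^2)$ width provides enough disjoint slots to keep the $\Theta(n)$ collected bits from being scrambled. Checking that every intermediate quantity (positional match scores, generate/propagate bits, the parallel gather, the carry-lookahead Boolean function, and the final mod-$p^c$ arithmetic) fits inside $c$ bits is then a routine verification, since each is either a local Boolean operation or a bounded-integer computation on constant-size arguments.
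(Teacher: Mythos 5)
Your overall plan — compute chunk-level generate/propagate bits, reconstruct the incoming carry from the input operands alone via parallel carry lookahead, and keep every scalar in a constant-size range — follows the same decomposition the paper uses; the carry formula you state (the highest non-propagating chunk below determines the carry) is equivalent to the paper's $c_i = \bigvee_{0\le j\le i}\bigl[f_j \wedge \bigwedge_{j\le \ell\le i} g_\ell\bigr]$, and the $O(n^2)$ width accounting for computing all the carries also matches. The genuine divergence, and the place where your argument has a gap, is the attention step.

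You correctly identify the danger that a single-head softmax over $\Theta(n)$ keys hands each target a weight of order $1/n$, but your proposed fix does not actually resolve it. You claim that because each of your $\Theta(n)$ heads ``only has to distinguish one target logit from a uniform background,'' the target's attention weight is ``bounded away from zero independent of $n$.'' This is false for softmax with a bounded logit range: if the target logit exceeds the $2n$ background logits by a fixed gap $\Delta$, the target weight is $\exp(\Delta)/\bigl(\exp(\Delta)+2n\bigr)$, which decays to zero as $n\to\infty$; and under constant precision the achievable gap $\Delta$ is capped at a constant $2B_s$, so you cannot let $\Delta$ grow with $n$. Moreover, with nonzero leakage onto the $2n$ background positions, your ``disjoint slot'' layout is not safe either: head $h$'s value projection deposits \emph{every} source position's generate/propagate bits into slot $h$, so slot $h$ accumulates $g_h + \epsilon\sum_{j\ne h} g_j$, and the corruption term is $\Theta(n\epsilon)$, which can be order one.

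What actually rescues the construction — and what the paper's proof makes explicit — is the \emph{saturation} behaviour of constant-precision arithmetic, not a multi-head trick. The paper (via its Lemma on $\exp$ under $B_s$-bounded fixed-point precision) sets the matched logit to $B_s$ and the unmatched logit to $-B_s$, so that $\exp(B_s)=B_s$ and $\exp(-B_s)=0$ exactly, and the softmax denominator $\sum_j \exp(\cdot)$ saturates at $B_s$; the resulting attention weights are exactly $1$ on every matched position and $0$ elsewhere. This degenerate softmax is what lets a \emph{single} head with one-hot value vectors route all $2n$ digits losslessly into disjoint slots in one shot, after which the whole carry-lookahead computation happens inside an $O(n^2)$-width MLP. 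If you replace your ``constant background, $O(1)$-scale softmax'' argument with this saturation argument (unmatched logits pushed to $-B_s$ so that $\exp=0$), your construction becomes correct — and at that point the $\Theta(n)$ heads are unnecessary, since one saturated head with position-indexed one-hot values already accomplishes the gather.

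One smaller point: you should also verify (as the paper does) that all MLP intermediates — the sums $a_i+b_i$, the thresholding for $f_i, g_i$, and the big monotone OR/AND for the carry — stay within the representable range, which imposes a mild lower bound on $B_s$ in terms of $p$ (the paper requires $B_s\ge 4p$). This is routine but needs to be said to make ``constant precision suffices'' precise.
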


\cref{thm:con-int-add-lower} suggests that the bounded-depth Transformers with reasonable hidden dimensions are capable of solving the integer addition task. However, as we will show in subsequent theorems, constant-precision Transformers exhibit pronounced limitations when considering more complex arithmetic problems. For the page limitation, we give the detailed proof of \cref{thm:con-int-add-lower} in \cref{sec:proof:con-int-add-lower}.
\begin{theorem}
\label{thm:con-iter-add-con_dep}
Fix integers $p\geq 2$ and $c,L\in \mathbb{N}^*$. Consider the tokenizer $\mT_c$ defined in \cref{eq:tokenizer} for processing the input and output sequences. For any polynomial $f$, there exist problem scales $n$ and $k$ such that no constant-precision autoregressive Transformer with $L$ layers and hidden dimension $d < f(n, k)$ can correctly solve the $\operatorname{IterADD}_p(n,k)$ task.
\end{theorem}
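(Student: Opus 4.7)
The plan is to derive the lower bound via circuit complexity, by reducing a problem known to require super-polynomial size in bounded-depth $\mathsf{AC}^0$ circuits to a suitable instance of $\operatorname{IterADD}_p(n,k)$, and then arguing that any constant-precision Transformer of constant depth and polynomial hidden dimension is captured by the $\mathsf{AC}^0$ class.

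First, I would carry out the reduction. Given $k$ Boolean inputs $b_1, \ldots, b_k \in \{0,1\}$, each $b_i$ is encoded as a single-digit base-$p$ integer $\va_i = b_i$. Setting $n = \lceil \log_p k \rceil + 1$, the sum $\vs = \sum_{i=1}^k \va_i$ directly encodes the population count of the $b_i$, and an appropriate bit of $\vs$ (for example the most significant digit) decides $\operatorname{MAJORITY}_k$. Hence any autoregressive Transformer that correctly solves $\operatorname{IterADD}_p(n,k)$ can be post-processed by a trivial constant-size circuit into one that decides $\operatorname{MAJORITY}_k$.

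Second, I would invoke the simulation result underlying the framework of \citet{li2024chain}: an autoregressive Transformer of constant depth $L$, constant numerical precision $c$, and hidden dimension $d$ processing an input sequence of length $m$ can be simulated, at each output position, by a Boolean circuit of depth $O(L)$ and size $\operatorname{poly}(m, d)$ lying in non-uniform $\mathsf{AC}^0$. The essential reason is that, since every neuron takes values in a finite alphabet of size $2^{O(c)}$, each feed-forward block is trivially realized by an $O(1)$-depth Boolean sub-circuit. The attention head is only subtly beyond $\mathsf{AC}^0$: with constant-precision logits, the softmax degenerates to a (nearly) uniform average over positions attaining the approximate maximum score, and since the output is itself quantized to $c$ bits, distinguishing its value reduces to a constant number of coarse approximate-counting decisions.

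Third, I would close the argument by contradiction. Suppose, for some polynomial $f$, there were a family of constant-precision Transformers of fixed depth $L$ and hidden dimension $d(n,k) < f(n,k)$ correctly solving $\operatorname{IterADD}_p(n,k)$ for every $(n,k)$. Specializing to inputs of the form produced by the reduction, with $m = O(nk)$, the simulation would yield a non-uniform $\mathsf{AC}^0$ family of polynomial size and constant depth deciding $\operatorname{MAJORITY}_k$. This contradicts the Furst--Saxe--Sipser / H\r{a}stad lower bound, which forces any depth-$D$ circuit computing $\operatorname{MAJORITY}_k$ to have size $2^{\Omega(k^{1/(D-1)})}$. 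Choosing $k$ large enough relative to $f$ produces the problem scale $(n,k)$ required by the theorem.

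The hardest step is the second one: verifying that the constant-precision softmax attention truly stays within $\mathsf{AC}^0$. The delicate subclaim is that, although aggregating over an attention pool of size $m$ normally requires threshold power, in the constant-precision regime the output depends only on a constant number of coarse count-buckets (those distinguishable at precision $c$), and each such bucket decision is an approximate-threshold predicate that is $\mathsf{AC}^0$-computable by Ajtai-style constructions. Carrying this through all $L$ layers in the autoregressive setting, while tracking how errors propagate through residual connections and layer normalization, is the technical bottleneck. Once this simulation lemma is in place, the remainder of the proof is the reduction above combined with the classical circuit lower bound, and the quantifier alternation in the theorem statement is handled by picking $k$ as a function of $f$ and $L$.
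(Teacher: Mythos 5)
Your overall strategy matches the paper's exactly: reduce $\operatorname{Maj}$ to $\operatorname{IterADD}$, invoke the \citet{li2024chain}-style result that a constant-precision, constant-depth, polynomial-width Transformer generating a single token lies in $\mathsf{AC}^0$, and contradict $\operatorname{Maj}\notin\mathsf{AC}^0$. However, your actual reduction has a genuine gap, and it is precisely the step the paper has to work to get right.

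You encode $\va_i = b_i$ (a single digit each), so the Transformer's output is the base-$p$ representation of the popcount $\sum b_i \in [0,k]$, and you claim ``an appropriate bit of $\vs$ (for example the most significant digit) decides $\operatorname{MAJORITY}_k$.'' That is false in general: for instance with $p=10$, $k=100$, the sum has up to three digits and its leading digit being nonzero is the event $\sum b_i = 100$, not $\sum b_i \geq 50$. Deciding majority genuinely requires inspecting all $\Theta(\log k)$ digits of the popcount and comparing against a threshold. But those digits are produced \emph{autoregressively}, one per Transformer forward pass, so recovering them from the model requires unrolling $\Theta(\log k)$ sequential invocations. The resulting Boolean circuit then has depth $\Theta(L\log k)$, not $O(1)$, which lands in $\mathsf{AC}^1$ rather than $\mathsf{AC}^0$ --- and $\operatorname{Maj}\in\mathsf{TC}^0\subset\mathsf{AC}^1$, so the contradiction vanishes. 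The ``trivial constant-size post-processing circuit'' you invoke cannot repair this, because the bottleneck is not the final comparison but the number of autoregressive steps needed to materialize the digits.

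The paper avoids this by aligning the majority threshold with a power of $p$ so that the decision is encoded in the \emph{first} output token alone. Concretely, it reduces to $\operatorname{IterADD}_p(2,k')$ with $k' = p^{\lceil\log_p k\rceil}$, sets $a_i = b_i\,(p^2-1)$ for $i\le k$, and chooses the padding summands so that $a_{k+1}+\cdots+a_{k'} = p^{\lceil\log_p k\rceil+1} - (p^2-1)\lceil k/2\rceil$. Then $\sum_{i} a_i \ge p^{\lceil\log_p k\rceil+1}$ holds exactly when $\sum b_i\ge\lceil k/2\rceil$, and this inequality is read off directly from the leading output digit $o_{\lceil\log_p k\rceil+1}$, which is the first token the autoregressive decoder emits. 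That single-token locality is what makes the single-token $\mathsf{AC}^0$ upper bound from \citet{li2024chain} actually applicable and closes the argument. Your proof needs an analogous padding device (or some other way to localize the decision in $O(1)$ generated tokens) before step three is sound. The remaining choices in your writeup --- appealing to H\r{a}stad's quantitative size bound rather than the qualitative Razborov--Smolensky result, and expanding on why constant-precision softmax stays in $\mathsf{AC}^0$ --- are compatible alternatives to what the paper does and are not where the problem lies.
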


\begin{theorem}
\label{thm:con-mul-mod-con_dep}
Fix integers $p\geq 2$ and $c,L\in \mathbb{N}^*$. Consider the tokenizer $\mT_c$ defined in \cref{eq:tokenizer} for processing the input and output sequences. For any polynomial $f$, there exist problem scales $n$ and $l$ such that no constant-precision autoregressive Transformer with $L$ layers and hidden dimension $d < f(n, l)$ can correctly solve the $\operatorname{MUL}_p(n, l)$ task.
\end{theorem}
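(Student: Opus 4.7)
The plan is to derive \cref{thm:con-mul-mod-con_dep} by reducing iterated addition to multiplication and then invoking \cref{thm:con-iter-add-con_dep}. The algebraic core is the classical embedding of iterated addition inside a single multiplication with sufficient spacing. Given $a_1,\ldots,a_k$ each of at most $n$ base-$p$ digits, set $m = n + \lceil \log_p k \rceil + 1$ and form
\begin{equation*}
A = \sum_{i=1}^{k} a_i\, p^{(i-1)m}, \qquad B = \sum_{j=0}^{k-1} p^{jm}.
\end{equation*}
Expanding gives $A\cdot B = \sum_{r=0}^{2k-2} c_r\, p^{rm}$ with $c_r = \sum_{i=\max(1,r+2-k)}^{\min(k,r+1)} a_i$; the choice of $m$ keeps every $c_r < p^m$, so no carry crosses a block boundary, and in particular $c_{k-1} = \sum_{i=1}^k a_i$. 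Hence the base-$p$ digits of $\sum_i a_i$ appear verbatim in positions $[(k-1)m,\, km-1]$ of $A\cdot B$. Both $A$ and $B$ have $n' = O(km)$ digits.

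Now suppose for contradiction that there exist a polynomial $f$ and a constant-precision $L$-layer autoregressive Transformer with hidden dimension $d < f(n', l)$ solving $\operatorname{MUL}_p(n', l)$ for all admissible $(n', l)$. I would construct a constant-precision Transformer solving $\operatorname{IterADD}_p(n, k)$ as follows. Prepend $O(1)$ Transformer layers that, reading the tokenized input $\mT_c(a_1), +, \ldots, \mT_c(a_k), =$, autoregressively emit the tokenized MUL prefix $\mT_c(A)\,\times\,\mT_c(B)\,=$: each $a_i$ is zero-padded to width $m$ and shifted into block $i$, while $B$ is the fixed pattern $1\,0^{m-1}\,1\,0^{m-1}\cdots 1$ with $k$ ones. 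Next, invoke the assumed MUL solver on this prefix (taking $l \geq km$) to produce $\mT_c(A\cdot B)$ token by token. Finally, append $O(1)$ layers that copy the subsequence of tokens covering positions $[(k-1)m, km-1]$, yielding $\mT_c(\sum_i a_i)$. Each auxiliary layer performs only local per-position operations (index arithmetic, padding, windowed copy), so both the extra depth and the added hidden dimension are independent of $n$ and $k$. The resulting model has depth $L + O(1)$ and dimension at most $d + O(1) \le \tilde f(n, k)$ for some polynomial $\tilde f$, contradicting \cref{thm:con-iter-add-con_dep}.

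The main obstacle is carrying out the encoding and decoding faithfully under the fixed tokenizer $\mT_c$: since $c$ need not divide $m$, each $m$-digit block of $A$ may straddle token boundaries, and the readout window $[(k-1)m, km-1]$ may be similarly misaligned. Establishing that the required shift-and-mask operations are realizable by constant-precision attention and feed-forward layers of constant width is the delicate step. A cleaner alternative, aligned with the circuit-complexity framework underlying \cref{thm:con-iter-add-con_dep} and \citet{li2024chain}, is to bypass the in-Transformer reduction: simulate the hypothetical MUL Transformer by a constant-depth polynomial-size threshold/$\mathrm{AC}^0$-type circuit, instantiate it on inputs $(A, B)$ of the above form, and read the iterated-addition answer from the prescribed digit window. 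This yields a polynomial-size circuit for $\operatorname{IterADD}_p(n, k)$, which contradicts \cref{thm:con-iter-add-con_dep} without having to realize the reduction inside the Transformer itself.
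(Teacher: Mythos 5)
Your algebraic core — embedding $\operatorname{IterADD}$ in a single multiplication by spacing the summands $m = n + \lceil\log_p k\rceil + 1$ digits apart so that the block sums $c_r$ never carry across block boundaries, in particular $c_{k-1} = \sum_i a_i$ — is correct and classical. But the step from this arithmetic identity to a contradiction with \cref{thm:con-iter-add-con_dep} has a genuine gap in both of the realizations you sketch, and the gap is precisely the one the paper's proof is designed to sidestep.

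The first realization (prepend/append $O(1)$ Transformer layers around the hypothetical MUL model) fails because an autoregressive Transformer cannot change the length or positional layout of its own input by adding layers. The IterADD input has roughly $k(n+1)$ tokens while the MUL input $\mT_c(A)\,\times\,\mT_c(B)\,=\,$ has roughly $2km + O(1)$ tokens; the discrepancy is $\Theta(k\log k)$. There is no way to ``emit the tokenized MUL prefix'' and then ``invoke the assumed MUL solver'' inside a single forward pass — the inner model's query/key positions simply do not exist in the outer model's sequence. This is not a tokenizer-alignment nuisance but a structural obstruction to composing autoregressive Transformers the way one composes circuits.

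The second realization (circuit-level) is closer to the right framework but still breaks. The known simulation (\citealp{li2024chain}) puts in $\mathsf{AC}^0$ the map from a full prefix (input plus already-generated output tokens) to the \emph{next single} output token. To read the IterADD answer you need roughly $m/c = \Theta\!\paren{(n+\log k)/c}$ consecutive MUL output tokens, located at digit positions $[(k-1)m,\,km-1]$, and each one is produced by a fresh invocation of the Transformer whose input includes the previously generated tokens. Cascading those invocations yields a circuit of depth $\Theta\!\paren{L \cdot m/c}$, which grows with $n$ and $k$ and is therefore not $\mathsf{AC}^0$. You cannot supply the needed prefix ``for free'' either, since the correct MUL prefix is exactly (a shuffling of) the digits of $\sum a_i$ — the very thing you are trying to compute. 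A further, smaller issue: \cref{thm:con-iter-add-con_dep} as stated is about Transformers, not circuits, so even a polynomial-size $\mathsf{AC}^0$ circuit for $\operatorname{IterADD}$ does not literally contradict it as a black box; you would still have to reopen its proof and argue $\texttt{Majority}\in\mathsf{AC}^0$.

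The paper's proof avoids all of this by collapsing the reduction to a \emph{single token}: it reduces $\texttt{Majority}$ directly to the first emitted token $o_{l-1}$ of $\operatorname{MUL}_p(n,l)$, choosing $\va$, $\vb$ and $l$ so that a carry into position $l-1$ occurs if and only if the majority holds. One autoregressive step of the hypothetical model is then a bona fide $\mathsf{AC}^0$ circuit deciding $\texttt{Majority}$, giving the contradiction with $\texttt{Majority}\notin\mathsf{AC}^0$. To rescue your approach you would need a similar one-token readout — e.g., reduce $\texttt{Majority}$ to $\operatorname{IterADD}$ as in the proof of \cref{thm:con-iter-add-con_dep}, then push the resulting \emph{single-bit} question through your $\operatorname{IterADD}\to\operatorname{MUL}$ embedding so that it lands in the leading MUL digit — which, once carried out, essentially reproduces the paper's direct reduction.
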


The detailed proof of \Cref{thm:con-iter-add-con_dep,thm:con-mul-mod-con_dep} are presented in \Cref{sec:proof_con-iter-add-con_dep,sec:proof:con-mul-mod-con_dep}.

\textbf{What accounts for this limitation?} As presented in \Cref{app:proof-constant}, our proof is grounded in circuit complexity theory. By modeling the constant-precision Transformer as a computational circuit, we rigorously analyze its expressive limitations through the lens of circuit complexity \citep{merrill2022saturated,merrill2023parallelism,feng2023towards,li2024chain}. Specifically, \citet{li2024chain} proves that the expressiveness of constant-precision Transformers with polynomial size and bounded depth is upper-bounded by the computation complexity class $\mathsf{AC}^0$. In contrast, we demonstrate that the complexity of tasks such as
$\operatorname{IterADD}$ and $\operatorname{MUL}$ exceeds that of $\mathsf{AC}^0$, using reductions from $\texttt{Majority}$, a well-established problem that has been provably unsolvable by the circuits in $\mathsf{AC}^0$ \citep{razborov1987lower,smolensky1987algebraic}. Consequently, these tasks are inherently hard for low-precision Transformers.

\textbf{Practical Implications.} While low-precision Transformers can effectively handle some of the simplest arithmetic tasks, such as basic integer addition, their capacity is severely limited when addressing more complex tasks. As demonstrated, low numerical precision, such as \texttt{int4} and \texttt{float8}, imposes fundamental constraints, preventing these models from solving problems that would require Transformers with super-polynomial size.

\section{Standard-Precision Transformers Are Sufficient for Arithmetic Tasks}
\label{sec:log-precision}
In \Cref{sec:constant-precision}, we demonstrated that low-precision Transformers struggle with arithmetic tasks due to their expressive limitations. In this section, we will show that increasing numerical precision is essential to overcoming this limitation. In particular, we focus on \textbf{standard-precision} Transformers and show that such models can overcome these limitations and solve arithmetic problems efficiently.

To formalize the notion of standard precision (e.g., \texttt{float32}), we follow \citet{feng2023towards} and adopt the setting of a \textbf{logarithmic-precision Transformer} (See formal definition in \cref{app:bg}). In this setting, the Transformer’s internal neurons can represent real numbers with up to $O(\log n)$ bits, where $n$ denotes the maximum input sequence length. 
Given that modern LLMs often limit their context length to hundreds of thousands of tokens \citep{openai2023gpt4,touvron2023llama2,TheC3}, it is natural to treat $32$ as the logarithmic scale corresponding to $100,000$. Hence, the logarithmic-precision setting reflects practical deployment scenarios.

We first establish that, under logarithmic precision, a Transformer with constant depth and dimension can solve both the integer addition and iterated addition tasks for arbitrarily large input lengths, as shown in \Cref{thm:log-int-add-con,thm:log-iter-add-con}. The detailed proof of \cref{thm:log-int-add-con,thm:log-iter-add-con} is presented in \cref{sec:proof_log-int-add-con,sec:proof_log-iter-add-con}.

\begin{theorem}
\label{thm:log-int-add-con}
Fix integers $p\geq 2$ and $c\in \mathbb{N}^*$. Consider the tokenizer $\mT_c$ defined in \cref{eq:tokenizer} for processing the input and output sequences. There exists a logarithmic-precision Transformer with constant depth and hidden dimension (independent of $n$) that can generate the correct output for any input on the $\operatorname{ADD}_p(n)$ task.
\end{theorem}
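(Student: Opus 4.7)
The plan is to construct an explicit autoregressive Transformer with depth $L$ and hidden dimension $d$ depending only on $p$ and $c$ that, at every generation step, emits the correct next token of $\mT_c(\vs)$. The subtlety is that the tokenizer orders tokens from most significant to least significant, so the very first emitted token is the highest-order output digit and can depend on carries that have propagated from the least significant position. Hence a single forward pass must resolve every carry simultaneously, which I do by implementing a carry-lookahead circuit inside the Transformer.

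I would begin with an embedding that stores, at each position, the token value, the token role (a digit of $\va$, a digit of $\vb$, an operator, or a previously emitted output), and the sequence position. A first attention/FFN block then equips every digit position with its \emph{logical digit index} measured from the least significant end, obtained by uniform attention over an indicator of operator tokens and rescaling by the total token count; only $O(\log n)$ bits are required. A second attention head pairs each position carrying $b_j$ with the position carrying $a_j$ of the same logical index, and a local FFN computes the pairwise sum $v_j = a_j + b_j \in [0, 2p^c - 2]$ together with its tag \emph{generate} ($v_j \ge p^c$), \emph{propagate} ($v_j = p^c - 1$), or \emph{kill} (otherwise), padding missing operands with zero.

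Next I invoke the carry-lookahead identity: the carry $c_i$ into logical position $i$ equals $g_{j^\star(i)}$ where $j^\star(i)$ is the largest $j < i$ that is not propagate (and $c_i = 0$ if none exists). I realize the selection of $j^\star(i)$ with a single attention head whose score rewards non-propagate keys in a way strictly monotone increasing in $j$ and is overwhelmed by a large constant penalty when $j$ is propagate; under logarithmic precision the softmax concentrates on $j^\star(i)$ with off-peak mass $O(1/\poly(n))$, which an FFN threshold cleans into the exact binary carry $c_i$. A final FFN then produces $s_i = (v_i + c_i) \bmod p^c$ and sends the overflow to the extra leading logical position. At generation step $j$, the last sequence position replays exactly this pipeline and reads off $s_{k-j}$.

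I expect the main obstacle to be step three: calibrating the attention so that the soft selection of $j^\star(i)$ is sharp enough under logarithmic precision to be thresholded to an exact bit, while each score still fits in $O(\log n)$ bits so that the softmax exponentials are representable. My plan is to adopt the attention-sharpening technique for log-precision Transformers developed by \citet{feng2023towards}, combining a large-constant propagate penalty with a logarithmic-scale positional bias; this creates a gap of $\Omega(\log n)$ between the target key and any competing non-propagate key, driving the post-softmax error below $1/\poly(n)$, which the carry FFN can round cleanly. Since every intermediate value has magnitude at most $\poly(n)$ and the pipeline uses $O(1)$ attention layers and FFNs of $O(1)$ width, the resulting Transformer meets the constant-depth, constant-width, logarithmic-precision specification of the theorem.
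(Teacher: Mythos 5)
Your proposal is correct and uses essentially the same carry-lookahead machinery the paper relies on, though you unroll it into a direct construction rather than invoking it as a special case of another result. The paper's own proof of this theorem is a one-liner: set $k=2$ in the iterated-addition construction of \cref{thm:log-iter-add-con} (so $m=\lceil\log_p 2\rceil=1$ and the neighboring-bit combination step trivializes), and that construction resolves carries exactly as you do---in closed carry-lookahead form, realized via single-head attention $\mathrm{COPY}$ operations (\cref{lem:attn-log-copy}) that select a maximal matching index under log precision. Your variant selects the largest non-propagate index $j^\star(i)$ and reads off its generate bit, whereas \cref{alg:Iter_ADD} computes the largest generate index $i_\wedge$ and largest kill index $i_\vee$ separately and compares them; the two predicates are equivalent and cost a comparable attention budget. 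One detail to tighten: uniform attention over an operator indicator rescaled by the token count gives a \emph{fraction}, not a logical digit index; to recover the index you should instead retrieve the absolute position of the relevant operator token with a $\mathrm{COPY}$ or $\mathrm{MEAN}$ head and subtract, which is what Block~1 of the paper's proof in \cref{sec:proof_log-iter-add-con} does via quadratic score matching. That is a small fix, and the rest of your argument meets the constant-depth, constant-width, $O(\log n)$-precision specification.
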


\begin{theorem}
\label{thm:log-iter-add-con}
Fix integers $p\geq 2$ and $c\in \mathbb{N}^*$. Consider the tokenizer $\mT_c$ defined in \cref{eq:tokenizer} for processing the input and output sequences. For any integers $n$ and $k$, there exists a logarithmic-precision Transformer with constant depth and hidden dimension $d$ (independent of $n$ and $k$) that can generate the correct output for any input on the $\operatorname{IterADD}_p(n, k)$ task.
\end{theorem}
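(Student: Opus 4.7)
The plan is to extend the two-operand construction from \Cref{thm:log-int-add-con} to a $k$-operand one, building a log-precision Transformer of constant depth and constant hidden dimension (depending only on $p$ and $c$) that emits the output tokens $s_0, s_1, \ldots$ autoregressively; at the $j$-th output step the model computes the column sum $\Sigma_j := \sum_{i=1}^k a_{i,j}$, the incoming carry $c_j$, and emits $s_j = (\Sigma_j + c_j) \bmod p^c$, where $a_{i,j}$ denotes the $j$-th chunk of the $i$-th summand.

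Positional tagging and column-sum computation are routine. A prefix block installs, for every token, positional labels indicating its role (summand/separator/output) and, for summand tokens, the summand index $i \in [k]$ and the chunk index $j$ from the low end; these can be derived from separator-token counts by a causal-attention prefix sum. The column sum is then gathered by one attention head whose query at the current output position gives a large positive score to summand tokens with chunk index $j$ and a strongly negative score elsewhere; after rescaling the softmax average by the known count $k$, the FFN recovers $\Sigma_j \in [0, k(p^c-1)]$, an integer of $O(\log k + c) = O(\log N)$ bits, representable in logarithmic precision.

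The main step is the carry $c_j$. I will use the exact identity
\[
c_j \;=\; \sum_{j' < j} (\Sigma_{j'} - s_{j'})\, p^{c(j'-j)},
\]
provable by induction on $j$ from the digit recurrence $\Sigma_{j'} + c_{j'} = s_{j'} + p^c c_{j'+1}$. Critically, the right-hand side is an \emph{exact integer}. I realize it with one attention head over previously generated output positions using logits $-c(j-j')\log p$ for $j' < j$ (masked out otherwise), so the unnormalized weights are $p^{c(j'-j)}$. The value vector at position $j'$ carries $\Sigma_{j'} - s_{j'}$, available from $j'$'s own cached hidden state (computed in step 2 and combined with the emitted $s_{j'}$). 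A companion head, or a FFN using the closed form $Z_j = (1 - p^{-cj})/(p^c - 1)$, recovers the softmax normalizer $Z_j$; multiplying by $Z_j$ inverts normalization. Because the weights decay geometrically at rate $p^{-c}$ and $|\Sigma_{j'} - s_{j'}| \le k p^c$, truncation past $j - j' = \Theta(\log k)$ combined with $O(\log N)$-bit precision error stays strictly below $1/2$, so a round-to-nearest FFN gate returns $c_j$ exactly. The output $s_j = (\Sigma_j + c_j) \bmod p^c$ is then produced by the same mod-$p^c$ FFN gate used in \Cref{thm:log-int-add-con}, noting that $\Sigma_j + c_j$ is an exact integer in logarithmic precision and $p^c$ is a constant.

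The crux is the carry step and, in particular, the choice of identity. A naive alternative such as $c_j = \lfloor \sum_{j'<j} \Sigma_{j'}\, p^{c(j'-j)} \rfloor$ would fail because the argument of the floor can lie within $p^{-cj}$ of an integer boundary, so no $O(\log N)$-bit approximation can reliably extract the floor. The corrected identity with $(\Sigma_{j'} - s_{j'})$ replaces the floor by an exact integer value, which any sufficiently accurate approximation rounds correctly. This identity is the main conceptual novelty over the two-operand case; the rest of the construction is a direct extension of \Cref{thm:log-int-add-con}, with attention summing over $k$ digits per column rather than two.
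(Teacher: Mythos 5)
Your carry identity $c_j=\sum_{j'<j}(\Sigma_{j'}-s_{j'})\,p^{c(j'-j)}$ is correct (it telescopes from the digit recurrence), and the idea of materializing a geometrically weighted exact integer to dodge the unstable floor is genuinely clever. Unfortunately, it requires reading the \emph{already-emitted} low-order output chunks $s_{j'}$, $j'<j$, which means your construction implicitly generates the answer from least significant to most significant chunk. The task in this paper is specified the other way around: $\mT_c(\vs)=[t_{k-1},\dots,t_0]$ is emitted with the most significant token first (see \cref{sec:tokenization} and \cref{eg:tokenization-base}, where $44505+9416=53921$ is tokenized as $[\dots,=,53,921]$). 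Thus when the model must produce chunk $s_j$, the chunks $s_{j'}$ with $j'<j$ do not yet exist in the context, and your carry head has nothing to attend to. This is not a cosmetic mismatch: the whole point of replacing $\lfloor\cdot\rfloor$ by the exact identity was to substitute the unknown fractional part with the emitted $s_{j'}$'s, and in the high-to-low order that information is precisely what is unavailable. The paper's construction sidesteps this entirely by computing every carry purely from the \emph{inputs}: after grouping digits into blocks of $m=\lceil\log_p k\rceil$, each block's overflow fits in one block, so the carry at block $i$ is determined by the standard generate/propagate form $c_i=\vone_{i_\wedge>i_\vee}$ with $i_\wedge=\max\{j\le i: q_j+b_{j-1}\ge p^m\}$, $i_\vee=\max\{j\le i: q_j+b_{j-1}\le p^m-2\}$, which uses only the input-derived $q_j,b_j$ and is therefore independent of emission order. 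That digit-grouping plus block-level carry-lookahead is the key structural idea you would need and do not have.

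A smaller but real concern, even granting a reversed output convention: recovering the softmax normalizer $Z_j=(1-p^{-cj})/(p^c-1)$ is not something a constant-size, constant-depth GeLU MLP does from the scalar $j$ — the map $j\mapsto p^{-cj}$ is an exponential, and the paper's MLP toolkit (\cref{lemma:MLP_multip,lemma:MLP_relu,lemma:MLP_select}) gives multiplication, ReLU approximation, and selection, not exponentials. A companion attention head with the same logits and value $1$ just returns $1$ (softmax is already normalized), and the variant with a one-hot value at $j'=0$ returns $p^{-cj}/Z_j$, from which extracting $Z_j$ requires a division your construction would also have to justify. The clean fix would be to plant $Z_j$ or $p^{-cj}$ directly in the positional embedding for output position $j$, but that step is not in your write-up.
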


We now turn to integer multiplication. As established in \Cref{thm:log-mul-mod-con}, a logarithmic-precision Transformer with constant depth and polynomial hidden dimensions is capable of solving the integer multiplication task. The detailed proof of \cref{thm:log-mul-mod-con} is presented in \cref{sec:proof-log-mul-mod-con}.

\begin{theorem}
\label{thm:log-mul-mod-con}
Fix integers $p\geq 2$ and $c\in \mathbb{N}^*$. Consider the tokenizer $\mT_c$ defined in \cref{eq:tokenizer} for processing the input and output sequences. For any integers $n$ and $l\leq 2n$, there exists a logarithmic-precision Transformer with constant depth (independent of $n$ and $k$) and hidden dimensions $O(n^2)$ that can generate the correct output for any input on the $\operatorname{MUL}_p(n, l)$ task.
\end{theorem}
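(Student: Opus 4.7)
The plan is to build a logarithmic-precision Transformer of constant depth and hidden dimension $O(n^2)$ that carries out schoolbook multiplication internally and then invokes \cref{thm:log-iter-add-con} to collapse the resulting carry-save representation into standard base-$p$ digits. At each output-token position, constantly many attention and FFN layers unpack the input chunk tokens and place the individual digits $a_i$ ($i<n_1$) and $b_j$ ($j<n_2$) into $O(n)$ dedicated coordinates of the hidden state; a subsequent FFN forms all $n_1 n_2\le n^2$ pairwise products $a_i b_j\le(p-1)^2$ and stores them in $O(n^2)$ coordinates.

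Next, I would aggregate these products by output-digit position: define $c_k=\sum_{i+j=k} a_i b_j$ for $k=0,\dots,l-1$. A single linear layer inside an FFN computes all the $c_k$'s in parallel. Each $c_k\le n(p-1)^2=O(n)$ fits comfortably in $O(\log n)$ bits, so logarithmic precision is enough to hold them exactly. Pairs with $i+j\ge l$ can be discarded since they contribute only multiples of $p^l$ and cannot affect the least-significant $l$ digits of the output.

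The remaining task is to output the base-$p$ digits of $V=\sum_{k=0}^{l-1} c_k p^k$, which is precisely an iterated addition of $l$ integers $c_k p^k$. I would invoke the constant-depth, constant-hidden-dimension construction underlying \cref{thm:log-iter-add-con}. That construction is stated for summands laid out along the sequence axis, but its internal operations consist only of positional digit sums, carry-related computations, and logarithmic-precision arithmetic. I would port it to the hidden-dimension axis of a single output position: allocate $O(n)$ virtual ``slots'' of $O(n)$ coordinates each to hold the $c_k$'s, and emulate each cross-slot attention operation of the inner construction by a dense linear map realized inside the outer FFN. The $O(n^2)$ hidden-dimension budget is exactly what such a linear map requires, and because the inner construction has constant depth the emulation costs only constantly many outer layers. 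The $m$-th output token is then read out by a positional gate that selects the $c$ digits $s_{mc},\dots,s_{mc+c-1}$ from $V$.

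I expect the main obstacle to be the third step: faithfully transplanting a constant-depth along-sequence construction into a computation that runs entirely within the hidden state at a single position, while preserving both the depth bound and the logarithmic-precision accounting. The supporting checks---that each $c_k$ and every intermediate quantity of the emulated inner circuit remain representable in $O(\log n)$ bits per neuron, and that truncation to the low $l$ digits commutes with the construction---are routine but must be carried out carefully.
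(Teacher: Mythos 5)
Your high-level decomposition matches the paper's: both proceed via (i) gathering all digits $a_i, b_j$ into the hidden state at a single position, (ii) forming the $O(n^2)$ pairwise products $a_i b_j$ with an FFN, (iii) aggregating them into per-digit partial sums $r_j = \sum_{i+j=k} a_i b_j = O(n p^2)$, and then (iv) performing carry propagation to convert the carry-save form into base-$p$ digits. The accounting you sketch for steps (i)--(iii) is correct and mirrors Block~1 of the paper's construction, including the observation that each $r_j$ (your $c_k$) fits comfortably in $O(\log n)$ bits.

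The gap is in step (iv). You propose to ``port'' the constant-depth attention circuit of \cref{thm:log-iter-add-con} into the hidden dimension at a single position, by replacing ``each cross-slot attention operation of the inner construction by a dense linear map realized inside the outer FFN.'' This does not work as stated. The inner construction relies heavily on \emph{data-dependent} attention: its blocks use \textsc{Copy} and \textsc{Single Copy} to find, e.g., the smallest index $w_1 \geq w$ with $\vone_{q_{w+1}+b_w \geq p^m}=1$, or to locate $\floor{s_i/p^m}$ via a query--key match. These are argmax-style selections over positions whose keys depend on the input; a fixed linear map cannot reproduce them. You flagged this as ``the main obstacle,'' but the resolution is not the one you guessed. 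The paper abandons black-box reuse of \cref{thm:log-iter-add-con} entirely: after the partial sums are co-located, it groups neighboring $m = \lceil\log_p n\rceil + 1$ digits so that every intermediate quantity ($s_i$, $b_i$, $q_i$) is $O(\mathrm{poly}(n))$, and then computes the decomposition $s_i = b_i p^m + q_i$, the predicates $f_i,g_i$, and the carry bits $c_i = \bigvee_{j\le i}(f_j \wedge \bigwedge_{j\le k\le i} g_k)$ \emph{entirely with ReLU/MLP gadgets} of width $O(n)$ per bit, using the identities
\[
\textstyle\bigwedge_{i\le\gamma}\alpha_i = \mathrm{ReLU}\!\bigl(\sum_i \alpha_i - \gamma + 1\bigr), \qquad \bigvee_{i\le\gamma}\alpha_i = 1 - \mathrm{ReLU}\!\bigl(1-\sum_i\alpha_i\bigr),
\]
and floor-by-threshold tricks of the form $\vone_{s\ge t} = \mathrm{ReLU}(s-t+1) - \mathrm{ReLU}(s-t)$. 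No intra-position attention emulation is needed; the whole carry-propagation lives in two FFN blocks of total width $O(n^2)$. So the fix you need is not a cleverer way to emulate the inner attention with linear maps, but to observe that once all summands share a position, the carries have a closed-form prefix-AND-OR expression that an $O(n^2)$-width MLP computes directly. With that replacement your proposal becomes essentially the paper's proof.
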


\Cref{thm:log-int-add-con,thm:log-iter-add-con,thm:log-mul-mod-con} demonstrate that, under standard precision, a bounded-depth Transformer with reasonable size can solve all elementary arithmetic tasks. Compared to the theoretical results for low-precision Transformers (\cref{thm:con-int-add-lower,thm:con-iter-add-con_dep,thm:con-mul-mod-con_dep}), even a modest increase in numerical precision leads to a substantial improvement in expressiveness for arithmetic tasks.

\textbf{The Reason for Increased Expressiveness.} The transition from constant precision to logarithmic precision enables Transformers to process and represent large numbers effectively, thereby expanding their expressiveness beyond the capabilities of low-precision models. In particular, the expressiveness of a logarithmic-precision Transformer with polynomial size and bounded depth is upper-bounded by the computational complexity class $\mathsf{TC}^0$ \citep{merrill2023parallelism}. Leveraging this increased precision, we constructively prove that logarithmic-precision Transformers are sufficient for solving these arithmetic tasks. These results underscore the critical role of numerical precision in enhancing the expressiveness of Transformer architectures.

\textbf{Practical Implications.} Our theoretical results underscore the critical importance of numerical precision when deploying Transformers for arithmetic tasks. Under low-precision settings, a Transformer requires super-polynomial model size to solve even elementary arithmetic problems, which is impractical for real-world applications. While low-precision models may offer computational efficiency, they are likely to fail in scenarios that demand accurate numerical reasoning, such as mathematical problem-solving or scientific computing. However, a slight increase in precision—such as using \texttt{float32}—enables Transformers to handle more complex arithmetic operations while maintaining a reasonable hidden dimension. Thus, employing sufficient numerical precision is crucial for ensuring both accuracy and robustness in arithmetic tasks, and should be a key consideration when designing or deploying LLMs for applications involving complex arithmetic reasoning.
\begin{figure*}[!t]
    \centering
    \includegraphics[width=\linewidth]{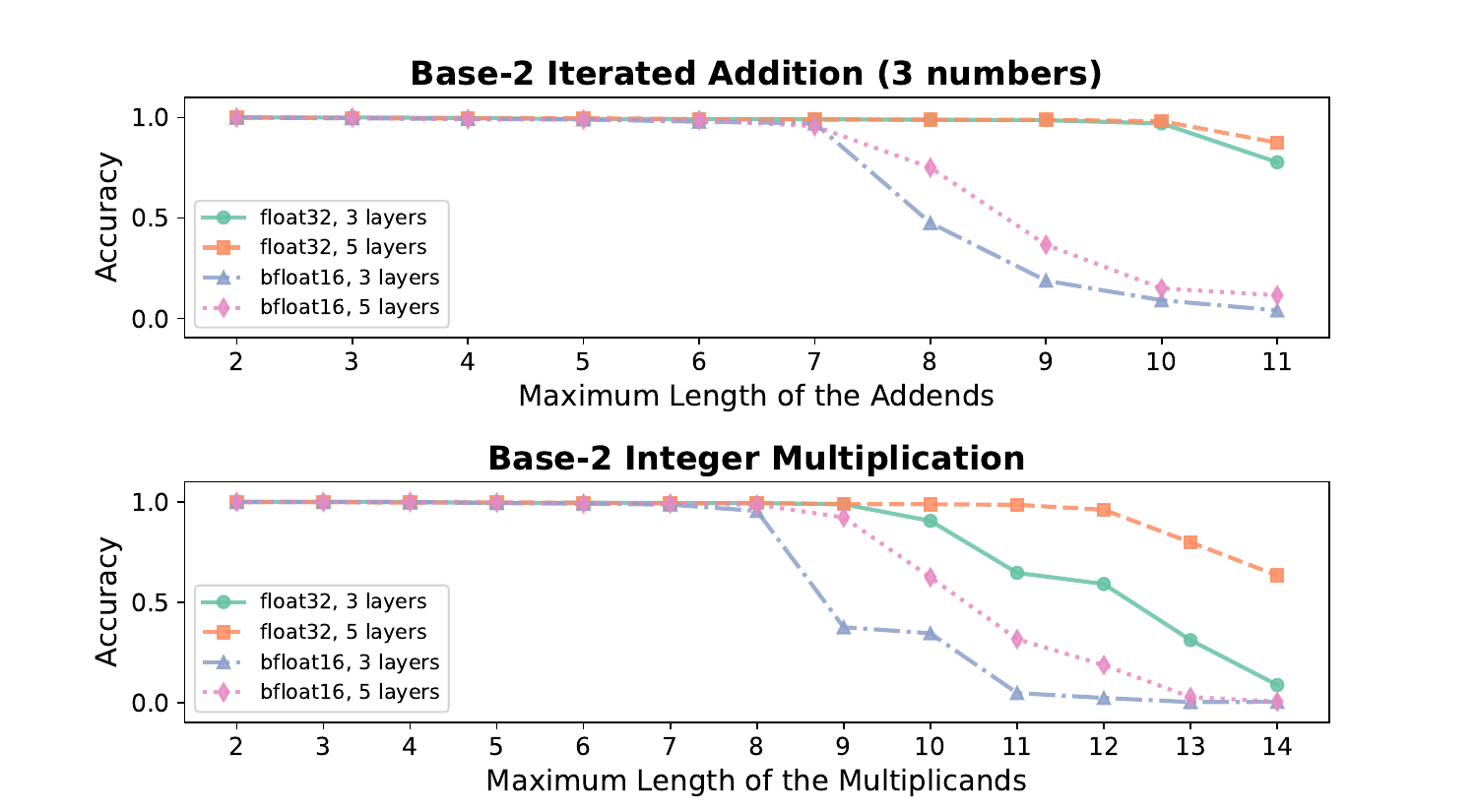}
    \caption{Model performance on different tasks in base-2. Within each sub-figure, the x-axis represents the maximum digits length and the y-axis represents the accuracy gained by each model. The figure indicates that, for all tasks, Transformers utilizing \texttt{float32} with 3 layers and 5 layers outperform their \texttt{bfloat16} counterparts.}
    \label{fig:experiments-results}
\end{figure*}

\section{Experiments}
\label{sec:exp}

In the preceding sections, we employ complexity theory to demonstrate that low-precision Transformers face significant challenges in performing elementary arithmetic tasks. To validate these theoretical insights, we conduct a series of experiments to compare the performance of Transformers under different precisions. The results provide empirical evidence that the model's ability to execute arithmetic operations drops as precision decreases, reinforcing our theoretical results.

\subsection{Experimental Setup}

\textbf{Tasks and datasets.} We evaluate three elementary arithmetic tasks: integer addition, iterated addition, and integer multiplication, as presented in \Cref{fig:arithmetic-task}. Each task involves a series of experiments with base $p = 2, 10$ and varying choices of digit length $n$. For integer addition, we examine the addition of integers in both base-2 and base-10, with digit lengths $n \in \{4, 8, 16, 32, 64\}$. For iterated addition, we examine the addition of three numbers in base-2, with digit lengths $n \in [2, 11]$, as well as in base-10, with digit lengths $n \in [1, 4]$. Similarly, for integer multiplication, we run experiments in base-2 with digit lengths $n \in [2, 14]$, and in base-10 with digit length $n \in [2, 5]$. Both training data and test data are dynamically generated. We use a batch size of 512 with 100k steps, resulting in a total training dataset size of 51.2M. Further details regarding the data generation function and the construction of datasets are provided in \Cref{alg:itadd-implement,alg:mult-implement}.

\textbf{Training and Evaluation.} All experiments use Transformers as the backbone. We trained models with 3 and 5 layers and evaluated their performance on each task. Detailed model and training configurations are listed in \cref{tab:model_config,tab:training_config}. No prompts or chat templates were added to the dataset. The models were trained with cross-entropy loss over the answer tokens. During evaluation, the models were required to produce exact answers, with accuracy reported as the evaluation metric. For each task, accuracy was computed over 50k test samples. To assess the impact of numerical precision, experiments were conducted with \texttt{float32} and \texttt{bfloat16}.

\begin{figure*}[!t]
    \centering
    \includegraphics[width=\linewidth]{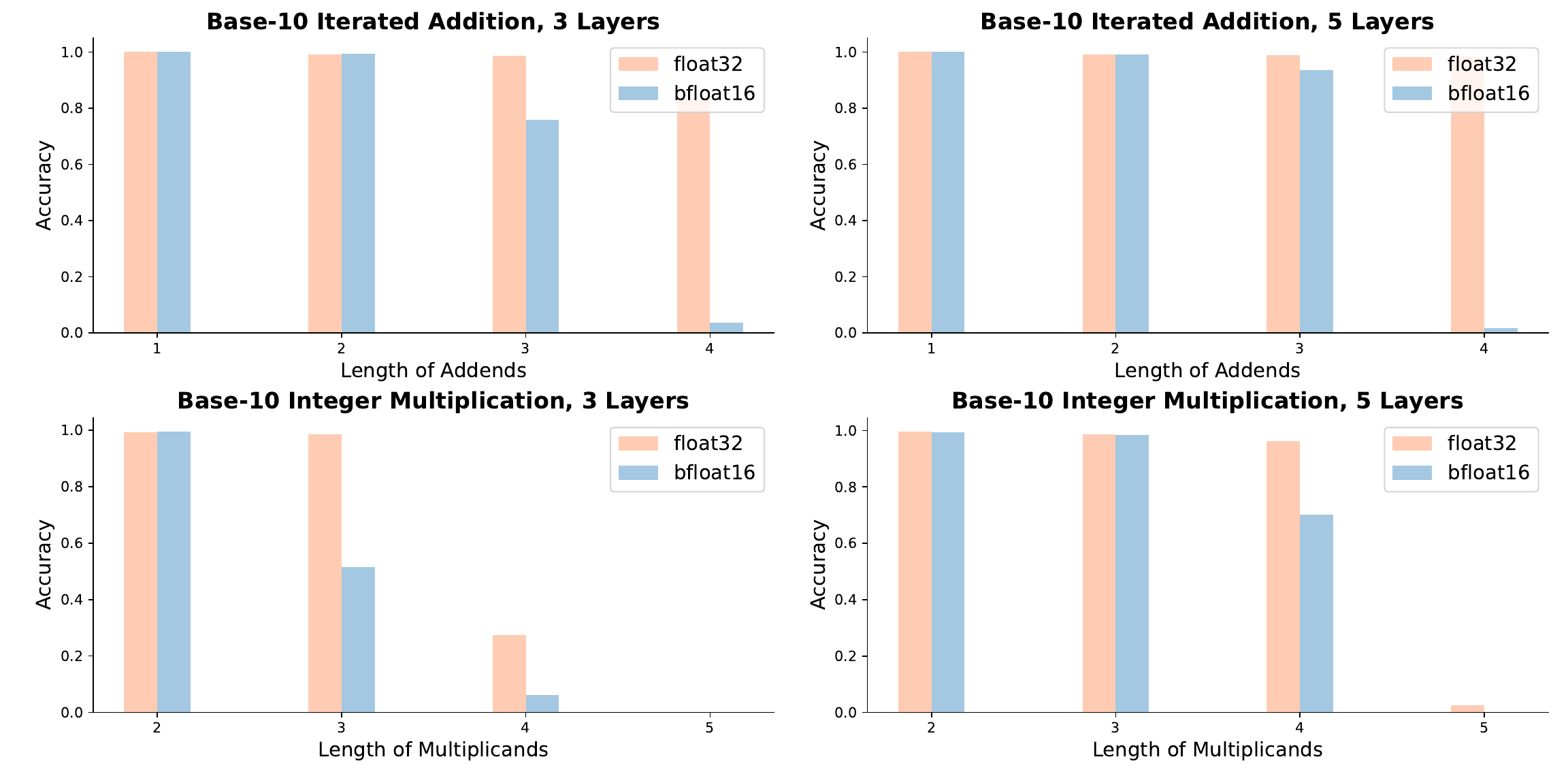}
    \caption{Model performance on iterated addition tasks involving three numbers and integer multiplication tasks. Each sub-figure presents a comparison of the performance between \texttt{float32} and \texttt{bfloat16}.}
    \label{fig:base10-results}
\end{figure*}

\subsection{Experimental Results}

Integer addition proved relatively simple, maintaining over 94\% accuracy even as digit lengths increased to 32 across both base-2 and base-10 for both \texttt{float32} and \texttt{bfloat16} (see \Cref{sec:int_add_result}). 

The results for iterated addition and multiplication in base-2 are shown in \Cref{fig:experiments-results}, while the corresponding base-10 results are presented in \Cref{fig:base10-results}. In each sub-figure, the x-axis represents the maximum digit length for addends or multiplicands, while the y-axis indicates test accuracy. 

For iterated addition, accuracy under \texttt{bfloat16} declined significantly as the digit length increased, while \texttt{float32} consistently achieved near-perfect accuracy across all model depths. Specifically, in base-2, 16-bit precision exhibited a pronounced decline for digit lengths between 7 and 10, whereas 32-bit precision maintained high accuracy. In base-10, at digit lengths up to 10, \texttt{float32} achieved over 90\% accuracy, whereas \texttt{bfloat16} struggled to produce correct results.

In the multiplication task, the gap between the two precisions became even more apparent as digit lengths increased. For example, at a digit length of 13 in base-2, 16-bit precision accuracy dropped sharply, signifying its inability to handle such inputs. Similarly, in base-10, 16-bit precision showed a marked reduction in accuracy, particularly for inputs with lengths of 3 in 3-layer models and lengths of 4 in 5-layer models. These results underscore the critical role of precision in achieving reliable performance for elementary arithmetic tasks, consistent with our theoretical findings.

\subsection{Further Experiments on LLMs}

\begin{figure*}[!t]
    \centering
    \includegraphics[width=\linewidth]{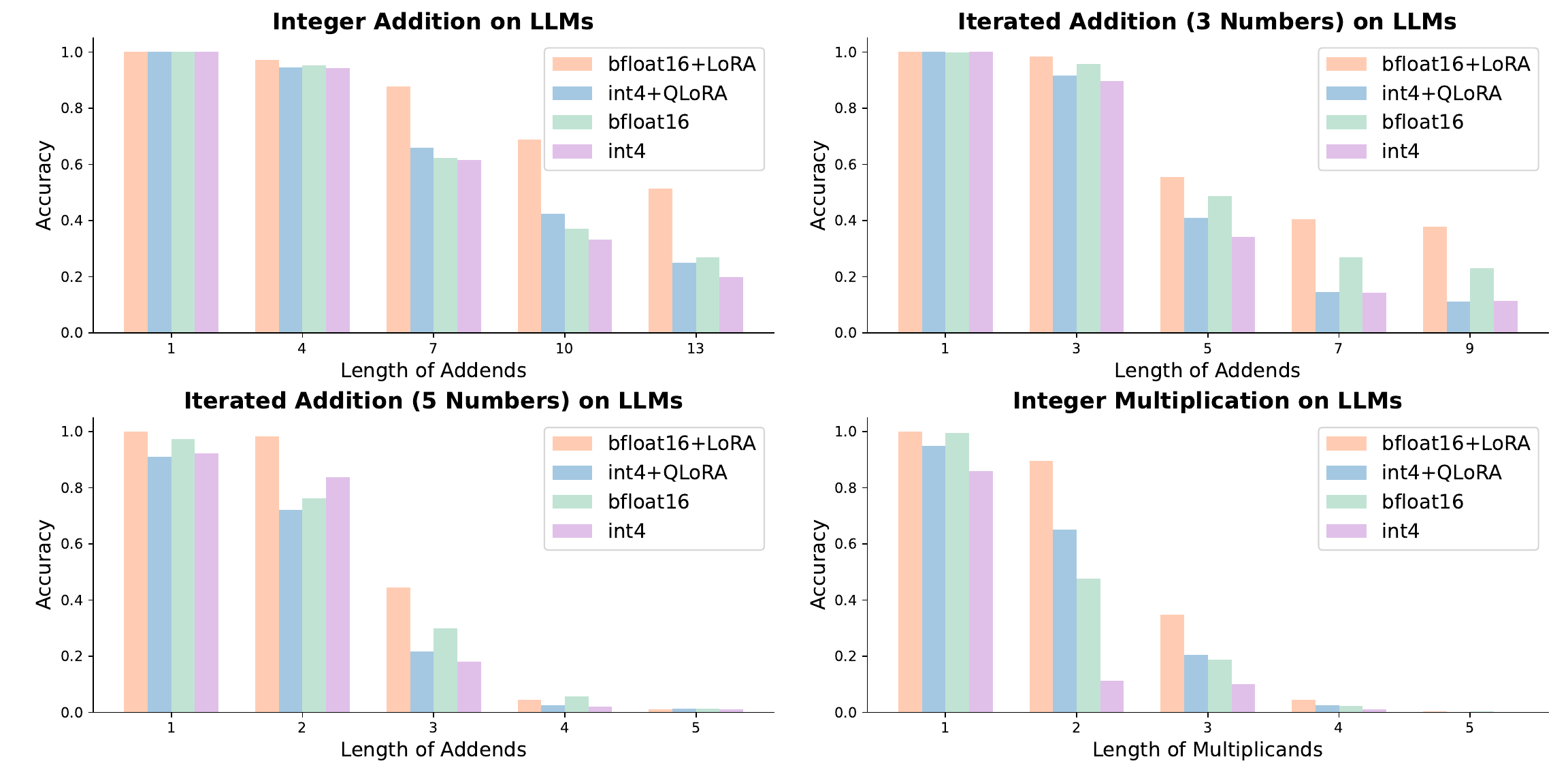}
    \caption{The performance of LLAMA-3.1-8B Instruct model on arithmetic tasks in base-10. In each sub-figure, we compare the original model in \texttt{bfloat16} and the quantized model in \texttt{int4}, alongside fine-tuned models, with LoRA using \texttt{bfloat16} and QLoRA using \texttt{int4}.}
    \label{fig:exp:QLoRA}
\end{figure*}

To further substantiate our theoretical results, we conducted additional experiments on LLMs, specifically evaluating the LLAMA-3.1-8B Instruct model on elementary arithmetic tasks.

\textbf{Task Description.} For integer addition, we tested the addition of two base-10 integers with digit lengths ranging from $1$ to $13$. For iterated addition, we extended the task to include three and five base-10 numbers, with digit lengths spanning $1$ to $9$ and $1$ to $5$, respectively. For integer multiplication, we evaluated the multiplication of two base-10 numbers, with digit lengths varying from $1$ to $5$. Data generation followed the same procedure as earlier experiments, with details provided in \Cref{alg:itadd-implement,alg:mult-implement}.

\textbf{Model Configuration.} All experiments used the LLAMA-3.1-8B Instruct model \citep{dubey2024llama}. To study the effects of reduced precision, we evaluated the model under four settings:
\begin{itemize} \setlength\itemsep{-3pt}
    \item Original model operating under \texttt{bfloat16}
    \item Quantized model operating under \texttt{int4}
    \item Fine-tuned model using LoRA (\texttt{bfloat16})
    \item Fine-tuned model using QLoRA (\texttt{int4})
\end{itemize}
The baseline configuration employs the original LLaMA-3.1-8B Instruct model, which operates under \texttt{bfloat16} precision. To assess the effects of reduced precision, we applied 4-bit quantization using the AWQ algorithm \citep{lin2024awq}. Further, we fine-tuned the model using LoRA and QLoRA \citep{hu2021lora,dettmers2024qlora}. The fine-tuning configurations for LoRA and QLoRA are listed in \cref{tab:lora_config} in \cref{sec:detail-llm}. For the LoRA fine-tuning experiments, model weights were maintained in \texttt{bfloat16}. In contrast, the QLoRA experiments extended this setup by enabling 4-bit quantization, represented by the \texttt{int4}. Fine-tuning was performed individually for each task. Furthermore, we add a baseline of GPT-4o \citep{openai2023gpt4} as a reference, whose results are listed in \cref{tab:gpt4}.

\textbf{Dataset for Fine-tuning.} We generated the fine-tuning data for both multiplication and addition tasks, with both multiplicands and addends varying in length from 1 to 9. The dataset comprised a total of 60k samples, including 5k samples with lengths between 1 and 6, and 10k samples with lengths between 7 and 9. The generation process of the dataset is the same as in previous experiments. Furthermore, we add the few-shot learning prompt to the raw dataset and apply the LLaMA chat template for data preprocessing. The prompt for few-shot learning can be found in \cref{tab:prompt_add,tab:prompt_mul}.

\textbf{Evaluation.} For the evaluation, we employ a few-shot learning approach for inference. The prompts are the same as the prompts of fine-tuning dataset and can be found in \cref{sec:gen_data}. The generation configurations for LLMs can be also found in \Cref{tab:generation_config} in \cref{sec:detail-llm}. During inference, the LLMs were tasked with producing exact solutions to the given arithmetic problems. Both the original model and the model fine-tuned with LoRA are evaluated using $\texttt{bfloat16}$, whereas the quantized model and the model fine-tuned with QLoRA are evaluated using $\texttt{int4}$. For each task, we evaluate the model on 1k samples to compute the accuracy serving as the evaluation metric.

The results of the experiments are shown in \Cref{fig:exp:QLoRA}. Each sub-figure presents the results of a task, where the x-axis denotes the maximum length of the addends or multiplicands, and the y-axis represents the test accuracy. For each task, reducing numerical precision in both the original and fine-tuned models leads to a significant decrease in accuracy. Specifically, in the iterated addition task for 3 numbers, accuracy drops by nearly $20\%$ as the length of the addends increases. Similarly, for models fine-tuned with QLoRA and LoRA, lowering precision also results in a decline in accuracy. Furthermore, in some cases, even after fine-tuning a low-precision model with QLoRA, the performance does not surpass that of the original model with standard precision. These experimental findings support our theoretical results that numerical precision is a critical factor in the success of iterated addition and integer multiplication tasks. Overall, the results underscore the consistency between the precision requirements for these elementary arithmetic tasks and our theoretical predictions.

\section{Conclusion}
\label{sec:con}

In this work, we have theoretically analyzed the impact of numerical precision on LLMs for arithmetical reasoning. By focusing on three elementary arithmetic tasks, integer addition, iterated addition, and integer multiplication, we demonstrate that the Transformers operating under standard precision can handle these tasks effectively. In contrast, Transformers with low precision struggle with complex arithmetic tasks, excelling only at integer addition. 
Extensive experimental results corroborate our theoretical findings, showing that standard precision models outperform low precision ones. 
We believe this study offers valuable insights for developing more powerful LLMs in mathematics.

\section{Limitations}

One limitation of this work is that we have not fully explored all key components of mathematical reasoning. While the arithmetic tasks considered are foundational, there remain other essential elements of mathematical reasoning whose dependence on numerical precision is still unclear. 
Additionally, our focus was exclusively on numerical precision, but we acknowledge that other factors are likely to play a significant role in applying LLMs to mathematical reasoning. 
We leave these explorations for future work.

\section*{Acknowledgments}
Liwei Wang is supported by National Science Foundation of China (NSFC62276005). Di He is supported by National Science Foundation of China (NSFC62376007). This work is partially supported by the Shanghai Committee of Science and Technology (Grant No. 21DZ1100100).

\bibliography{custom}

\newpage
\onecolumn

\appendix

\section{Related Work}
\label{sec:related}

\subsection{LLMs for Mathematical Reasoning}

\paragraph{Mathmetical Reasoning.} Recent studies highlight the limitations of current LLMs in mathematical reasoning \cite{ahn-etal-2024-large,srivastava-etal-2024-evaluating}. \citet{10.1145/3626772.3657945} demonstrated that advanced models like GPT-4 can generate relevant answers, but these answers are not always accurate. Additionally, \citet{mao2024champcompetitionleveldatasetfinegrained} found that current LLMs struggle even with verifying the solutions to mathematical problems. To enhance the mathematical capabilities of LLMs, several studies have carefully designed prompting strategies \cite{shakarian2023independentevaluationchatgptmathematical,cheng-yu-2023-analyzing,gu2023llmspotentialbrainstormingpartners,lu2024mathvista} or finetuned LLMs on mathematics-related datasets \cite{an2024learningmistakesmakesllm,liang-etal-2024-mint,raiyan-etal-2023-math,mishra-etal-2022-lila,yue2024mammoth}. Other approaches include inference-based searching methods \cite{kang2024mindstarenhancingmathreasoning}, the application of external tools \cite{yamauchi2023lpmlllmpromptingmarkuplanguage,heyueya2023solvingmathwordproblems,chen2023program}, and the introduction of simulated interaction processes \cite{wu2024mathchatconversetacklechallenging} or self-verification mechanisms \cite{wang2023selfconsistency,zhou2024solving}.

\paragraph{Arithmetical Reasoning.} 
\citet{bubeck2023sparks} highlighted arithmetical reasoning as a key component of true mathematical ability.
However, \citet{saxton2018analysing,NEURIPS2023_deb3c281} identified significant challenges that LLMs encounter when solving elementary arithmetic tasks, such as multi-digit addition and multiplication.
A common approach to mitigate these difficulties is to reverse the output digit order \cite{shen2024positional}, or both the input and output digit order simultaneously \cite{lee2024teaching}. 
Other studies have focused on developing improved positional encodings \cite{golkar2024xval,mcleish2024transformersarithmeticrightembeddings} or positional tokens \cite{nogueira2021investigatinglimitationstransformerssimple} that are more suitable for arithmetic tasks. 
\citet{zhou2024what,zhou2024transformers} further examined the length extrapolation capabilities of LLMs in solving basic arithmetic problems, emphasizing the importance of data formats and positional embeddings for better generalization.

\subsection{Computational Powers of Transformers}

Another more relevant line of work investigates the theoretical expressive power of Transformers from a computational perspective.

\paragraph{Universal Approximation.}
Early theoretical work on Transformers primarily focused on their function approximation capabilities. \citet{yun2019transformers} demonstrated that Transformers can universally approximate any continuous sequence-to-sequence functions, given sufficient size. This universality result has since been extended to various Transformer variants, such as Sparse Transformers \cite{yun2020n}, Linear Transformers \cite{alberti2023sumformer}, and Transformers with relative positional encodings (RPE) \cite{luo2022your}. 
Additionally, previous studies established that infinite-precision Transformers are Turing-complete \cite{perez2019turing,perez2021attention}, while \citet{wei2022statistically} showed that finite-precision Transformers are approximately Turing-complete. Although these results highlight Transformers' computational capacity, our work develops expressiveness results under more practical settings, exploring the differences in expressiveness across varying levels of numerical precision.

\paragraph{Formal Language Learning.}
Another line of research focuses on the ability of Transformers to learn formal languages. 
\citet{liu2023transformers} explored how Transformers simulate finite state automata, while \citet{bhattamishra2020ability,yao2021self} studied their ability to recognize counter languages and Dyck languages, respectively. 
On the negative side, \citet{hahn2020theoretical} showed that Transformers are not capable of learning distributions over languages. 
In addition to affirmative results, several works have characterized the limitations of Transformers from the perspective of formal language modeling \citep{hahn2020theoretical,bhattamishra2020ability,weiss2021thinking,yao2021self,david2023tighter} or circuit simulation \citep{hao2022formal,merrill2022saturated,merrill2023parallelism}. 
However, few of these studies focus on the autoregressive Transformers commonly used in LLMs, which we investigate in this paper.

\paragraph{Chain-of-Thought and In-Context Learning.}
Chain-of-Thought prompting \citep{wei2022chain} plays a crucial role in tasks requiring complex reasoning structures, and several studies aim to understand its underlying mechanisms. For instance, \citet{feng2023towards,li2024chain} analyzed CoT from an expressiveness perspective, and \citet{yang2024do,wen2024rnnstransformersyetkey} examined CoT across more different model variants. In-context learning \cite{brown2020language,garg2022what} is another powerful aspect of LLMs. Some theoretical work has shown that in-context learning can be explained through gradient descent \cite{akyurek2022learning,dai2023can,von2023transformers}, while others attribute it to the induction heads mechanism \cite{elhage2021mathematical,olsson2022context}.

\subsection{Scaling Laws of Precision}
Concurrent works \cite{kumar2024scalinglawsprecision,ouyang2024lowbitquantizationfavorsundertrained} explore the impact of numerical precision on scaling laws, particularly in the contexts of training and quantization. \citet{kumar2024scalinglawsprecision} introduced ``precision-aware'' scaling laws, demonstrating that low-precision training effectively reduces a model’s ``effective parameter count''  but may still be compute-optimal for larger models. Their framework unifies the effects of both training and post-training quantization. \citet{ouyang2024lowbitquantizationfavorsundertrained} examined quantization-induced degradation (QiD), showing that larger or undertrained models exhibit greater robustness to low-bit quantization, whereas fully trained models experience significant performance degradation. While both studies underscore precision as a critical dimension in scaling laws, they leave theoretical gaps in understanding the role of precision for LLMs. Our work focuses on addressing these gaps by analyzing the impact of numerical precision on elementary arithmetic reasoning tasks.

\section{Additional Background and Preliminary}
\label{app:bg}

\subsection{Circuit Complexity}

Circuit complexity classes capture various aspects of computational complexity, typically bounding circuit width and depth. For a more detailed introduction, we refer to \citet{arora2009computational}.

We begin by defining Boolean circuits. A Boolean circuit over a basis of gates is represented as a finite-size directed acyclic graph (DAG), where each vertex corresponds to either a basis function (or gate) or an input bit. Some internal nodes are designated as outputs, and the \textit{fan-in} of a vertex is defined as its in-degree.
Building on this definition, we can define the complexity classes $\mathsf{NC}^i$, $\mathsf{AC}^i$, and $\mathsf{TC}^i$:

\begin{itemize} \item $\mathsf{NC}^i$: This class consists of constant fan-in, polynomial-sized circuits made up of AND, OR, and NOT gates, with a depth of $O(\log^i n)$. \item $\mathsf{AC}^i$: This class includes unbounded fan-in, polynomial-sized circuits composed of AND, OR, and NOT gates (with NOT gates allowed only on inputs), also having a depth of $O(\log^i n)$. \item $\mathsf{TC}^i$: This class extends $\mathsf{AC}^i$ by allowing majority gates. \end{itemize}

The relationships among the $\mathsf{NC}$, $\mathsf{AC}$, and $\mathsf{TC}$ hierarchies are as follows:

\begin{equation*} \mathsf{NC}^i \subset \mathsf{AC}^i \subset \mathsf{TC}^i \subset \mathsf{NC}^{i+1}, \; \mathsf{NC}^0 \subsetneq \mathsf{AC}^0 \subsetneq \mathsf{TC}^0. \end{equation*}

\subsection{Constant-precision Transformer}
\label{sec:def_constant_precision}

Previous work has investigated the expressiveness of constant-precision Transformers \cite{li2024chain}, utilizing a simplified version of the IEEE 754 standards \cite{8766229}. Our constant-precision setting is analogous, and we will introduce the floating-point representations we consider here.

\begin{definition}
A $(e + 2s + 1)$-floating point representation includes $e$ exponent bits, $2s$ precision bits, and one sign bit. The numbers representable under this representation are defined as follows:
\begin{equation*} 
\sF_{e,s} := \{S \cdot 2^{-s+E} \mid -2^{-2s} + 1 \leq S \leq 2^{2s} - 1, -2^{e-1} \leq E \leq \max(2^{e-1} - 1, 0), S, E \in \sZ\}. 
\end{equation*}
For any $x \in \sR$, its representation under this floating-point format is determined by rounding to the nearest value in $\sF$. In the event of a tie, we select the number with the smaller absolute value. 
\end{definition}

In this paper, we focus on the case where $e = 0$, which means all representable numbers take the form $S \cdot 2^{-s}$, with $S \in \sZ$ such that $-2^{-2s} + 1 \leq S \leq 2^{2s} - 1$. 
However, this is necessary only for \Cref{thm:con-int-add-lower}, while \Cref{thm:con-iter-add-con_dep,thm:con-mul-mod-con_dep} do not depend on specific numerical representations. 

\begin{definition}[Constant-Precision Transformer]
    A \emph{constant-precision Transformer} is a Transformer in which each neuron and activation are restricted to using a constant number of bits for computation.
\end{definition}

\citet{li2024chain} demonstrated that constant-precision Transformers with constant depth belong to the complexity class $\mathsf{AC}^0$.

\subsection{Logarithmic-precision Transformer} 
\label{sec:def_log_precision}
A key limitation of constant-precision representation is that it fails to capture the input size $n$ within a single neuron. To address this, we consider logarithmic precision, allowing for $O(\log n)$ bits for numerical representations. 

\begin{definition}[Logarithmic-Precision Transformer]
    A \emph{logarithmic-precision Transformer} is a Transformer in which each neuron and activation are allowed to use $O(\log n)$ bits for computation, where $n$ denotes the size of the input.
\end{definition}

Logarithmic-precision Transformers possess several advantageous properties \cite{feng2023towards,feng2023rethinking}:

\begin{itemize}
    \item For floating-point representations with $O(\log n)$ bits, any real number $x \in O(\mathrm{poly}(n))$  can be represented with $ O(\mathrm{poly}(1/n)) $ error.
    \item Each neuron in the Transformer can only store $O(\log n)$ bits of information, which means it cannot retain all input data. Consequently, computation must be distributed across the network, aligning with the operational principles of Transformers.
\end{itemize}

Previous work \citep{merrill2022saturated,merrill2023parallelism} has shown that logarithmic-precision Transformers fall within the complexity class \( \mathsf{TC}^0 \).

\subsection{Tokenization Scheme}
\label{sec:tokenization}

In this section, we formalize the tokenization scheme adopted in this paper and provide the necessary definitions and examples to establish a foundation for the subsequent analysis.

\begin{definition}[Tokenizer $\mT_c$]
    Let $\mathbf{x} = (x_{n-1} \cdots x_0)_p$ denote an $n$-digit integer in base $p$. The tokenizer $\mT_c$ maps $\mathbf{x}$ into $k = \lceil \frac{n}{c} \rceil$ tokens, represented as $\bm{t} = [t_{k-1}, \cdots, t_0]$, where 
    \begin{equation*}
    \begin{aligned}
        t_i = 
        \begin{cases}
            [x_{ic}, x_{ic+1}, \cdots, x_{ic+c-1}], & i < k; \\
            [x_{ic}, x_{ic+1}, \cdots, x_{n-1}], & i = k.
        \end{cases}
    \end{aligned}
    \end{equation*}
    Furthermore, for any operator (e.g., ``$+$'', ``$\times$'', ``$=$''), the tokenizer $\mT_c$ assigns each operator a single token.
\end{definition}

\begin{example}
    Consider the $5$-digit integer $13215$. Under the tokenizer $\mT_3$, it is tokenized into $[13, 215]$.
\end{example}

A key property of this tokenization scheme is that, for any fixed base $p$ and tokenizer $\mT_c$, the resulting tokenized sequence can be reinterpreted as an arithmetic expression in base $p^c$. Specifically, there exists a one-to-one mapping $\tau$ between the vocabulary of the base-$p$ tokenizer $\mT_c$ and the vocabulary of the base-$p^c$ tokenizer $\mT_1$, such that
\begin{equation*}
    \tau(\mT_c([a_{c-1}, \cdots, a_0]_p)) = \mT_1\left(\left[\sum_{i \in [c]} a_i p^i\right]\right).
\end{equation*}

\begin{proposition}
    Let $\va$ be an integer. If $\vt = \mT_c(\va) = [t_{k-1}, \cdots, t_0]$ and $\vt^\prime = \mT_1(\va) = [t_{k-1}^\prime, \cdots, t_0^\prime]$, then for all $i$, we have $\tau(t_i) = t_i^\prime$.
    \label{prop:tokenizer}
\end{proposition}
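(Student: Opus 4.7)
The plan is to prove the proposition by unfolding the definitions of $\mT_c$, $\mT_1$, and $\tau$, and reducing the claim to the standard change-of-base identity that groups $c$ consecutive base-$p$ digits into one base-$p^c$ digit. Concretely, given $\va = (x_{n-1}\cdots x_0)_p$ with $n = kc$ in the generic case (I will handle $c \nmid n$ separately), the goal reduces to showing that the quantity $d_i := \sum_{j=0}^{c-1} x_{ic+j}\, p^j$ is simultaneously (a) the base-$p^c$ value encoded by the $i$-th token of $\mT_c(\va)$, and (b) the $i$-th digit of $\va$ when $\va$ is written in base $p^c$, so that after applying $\tau$ the two token sequences match coordinate by coordinate.

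First, I would write $\va = \sum_{j=0}^{n-1} x_j p^j$ and reindex the sum by $j = ic + j'$ with $i \in \{0,\ldots,k-1\}$ and $j' \in \{0,\ldots,c-1\}$, obtaining $\va = \sum_{i=0}^{k-1} d_i \,(p^c)^i$. Since each $x_{ic+j'} \in \{0,\ldots,p-1\}$, one checks $0 \le d_i \le p^c - 1$, so this display exhibits $\va$ in base $p^c$ with digits $d_{k-1},\ldots,d_0$. Reading off $\mT_1(\va)$ under the base-$p^c$ interpretation then gives $t_i' = [d_i]$ for each $i$. On the other side, by definition of $\mT_c$, the $i$-th token of $\mT_c(\va)$ is $t_i = [x_{ic}, x_{ic+1}, \ldots, x_{ic+c-1}]$, and by the very definition of $\tau$ quoted just above the proposition,
\[
\tau(t_i) \;=\; \tau\!\left(\mT_c([x_{ic},\ldots,x_{ic+c-1}])\right) \;=\; \mT_1\!\left(\left[\sum_{j=0}^{c-1} x_{ic+j}\, p^j\right]\right) \;=\; \mT_1([d_i]) \;=\; [d_i] \;=\; t_i'.
\]

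The only point requiring care, and the closest thing to an obstacle, is the boundary token $t_{k-1}$ when $c \nmid n$. There the token has length $n - (k-1)c < c$, and one must argue that padding the missing high positions with zeros does not change the induced value: the rearrangement still gives $d_{k-1} = \sum_{j=0}^{n-1-(k-1)c} x_{(k-1)c+j}\, p^j$, which coincides with the top digit of $\va$ in base $p^c$ since $\va < p^n \le (p^c)^k$. It is also worth confirming that both sides produce exactly $k$ tokens; this follows from interpreting the $n$-digit input of $\mT_c$ (possibly with leading zeros) as the $k$-digit base-$p^c$ input of $\mT_1$, again possibly with a leading-zero top digit. With this bookkeeping in place, the coordinate-wise equality $\tau(t_i) = t_i'$ holds for every $i \in \{0,\ldots,k-1\}$, completing the proof.
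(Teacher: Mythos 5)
Your proposal is correct and follows the natural change-of-base argument. Worth noting: the paper states this proposition without proof, treating it as an immediate consequence of how $\tau$ is defined and the identity $\va = \sum_i d_i (p^c)^i$; your write-up essentially supplies the unwritten proof the authors had in mind, with appropriate care for the ragged final token when $c \nmid n$ and for the length agreement $k = \lceil n/c \rceil$ on both sides.
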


This property is particularly significant as it allows us to abstract away the specific effects of the tokenizer $\mT_c$ and focus exclusively on the case where the tokenizer is $\mT_1$. This simplification is leveraged in proving the main theorems presented in this paper.

\Cref{eg:tokenization-base} illustrates how a tokenized sequence in base-10, generated using the tokenizer $\mT_3$, can be equivalently interpreted as a sequence in base-1000.

\begin{example}
    Consider the base-10 arithmetic expression $44505 + 9416 = 53921$. When tokenized using $\mT_3$, the sequence becomes $[44, 505, +, 9, 416, =, 53, 921]$. This tokenized representation can then be reinterpreted as an arithmetic expression in base-$1000$.
    \label{eg:tokenization-base}
\end{example}

\section{Technical Lemmas}
\label{app:lem}

\subsection{Technical Lemmas for Logarithmic Precision MLP}

In this subsection, we present several foundational results concerning logarithmic precision multi-layer perceptrons (MLPs), as introduced in \cite{feng2023towards}. For brevity, proofs of these results are omitted here but are available in the appendix of \cite{feng2023towards}.

\begin{lemma}[\citealp{feng2023towards}, Lemma C.1]
\label{lemma:MLP_multip}
    Let $\epsilon > 0$. There exists a two-layer MLP $f:\mathbb{R}^2 \to \mathbb{R}$ with four hidden units and $\mathrm{GeLU}$ activation, such that for any $a, b \in [-M, M]$, the inequality $|f(a, b) - ab| \leq \epsilon$ holds. Furthermore, the $\ell_\infty$ norm of $f$ is bounded by $O(\mathrm{poly}(M, 1/\epsilon))$.
\end{lemma}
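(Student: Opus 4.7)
The plan is to reduce multiplication to squaring via the polarization identity $ab=\tfrac{1}{4}\bigl[(a+b)^2-(a-b)^2\bigr]$, and then to approximate the squaring map $x\mapsto x^2$ by a symmetric second‑order finite difference of the smooth GeLU activation. Concretely, since $\sigma(z)=z\Phi(z)$ is analytic with $\sigma''(0)=2\phi(0)=\sqrt{2/\pi}\neq 0$, the even symmetrization $\sigma(hz)+\sigma(-hz)$ kills all odd Taylor terms, giving
\begin{equation*}
\sigma(hz)+\sigma(-hz)=2\sigma(0)+\sigma''(0)\,h^{2}z^{2}+\sum_{k\geq 2}\tfrac{2\sigma^{(2k)}(0)}{(2k)!}h^{2k}z^{2k}.
\end{equation*}
Rearranging yields $z^{2}=\tfrac{1}{\sigma''(0)h^{2}}\bigl(\sigma(hz)+\sigma(-hz)-2\sigma(0)\bigr)+R(h,z)$, where the remainder satisfies $|R(h,z)|=O(h^{2}z^{4})$ uniformly on compacts, since $\sigma^{(2k)}(0)$ decays rapidly (they are polynomial multiples of $\phi(0)$).

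I would then define the 4‑hidden‑unit network by applying this squaring approximation once to the pre‑activation $a+b$ and once to $a-b$, using two GeLU units for each: the first layer computes $h(a+b),-h(a+b),h(a-b),-h(a-b)$ (with a constant absorbed as bias), and the output layer linearly combines the four activations with weights $\pm\tfrac{1}{4\sigma''(0)h^{2}}$ and subtracts the constant $\tfrac{2\sigma(0)}{4\sigma''(0)h^{2}}\cdot(1-1)=0$ (so the $\sigma(0)$ terms cancel automatically between the $(a+b)^{2}$ and $(a-b)^{2}$ blocks, which is a minor convenience). The resulting $f(a,b)$ satisfies
\begin{equation*}
f(a,b)-ab=\tfrac{1}{4}\bigl(R(h,a+b)-R(h,a-b)\bigr)=O\bigl(h^{2}M^{4}\bigr)
\end{equation*}
for $|a|,|b|\leq M$.

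The main steps to carry out in order are: (i) verify the Taylor expansion of GeLU rigorously, in particular that $\sigma''(0)\neq 0$ and that the tail $\sum_{k\geq 2}\tfrac{|\sigma^{(2k)}(0)|}{(2k)!}(hz)^{2k-2}$ is bounded by an absolute constant for $|hz|\leq 1$; (ii) choose $h=\Theta(\sqrt{\epsilon}/M^{2})$ so that the remainder on the range $|a\pm b|\leq 2M$ is at most $\epsilon$; (iii) read off the weight sizes — the first‑layer weights are $\pm h=O(\sqrt{\epsilon}/M^{2})$, the output weights are $\pm\tfrac{1}{4\sigma''(0)h^{2}}=O(M^{4}/\epsilon)$, and the bias absorbs $-2\sigma(0)$, so $\|f\|_{\infty}$ in the sense of all weight/bias magnitudes is $O(\mathrm{poly}(M,1/\epsilon))$ as required.

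The one step that needs a little care, rather than the obvious calculations, is bounding the Taylor remainder uniformly in the admissible range: GeLU is not a polynomial, so I cannot truncate exactly, but because $\sigma^{(2k)}(0)$ is of the form $p_{k}(0)\phi(0)$ for an explicit Hermite‑type polynomial $p_{k}$, the even Taylor series of $\sigma$ converges on all of $\mathbb{R}$ and its tail past the quadratic term is uniformly $O((hz)^{4})$ on $|hz|\leq 1$. Once this bound is in hand, the choice of $h$ and the resulting polynomial weight bound are immediate, and the polarization identity combines the two squaring approximators into the claimed 4‑unit network without any further work.
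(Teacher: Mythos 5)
Your proposal is correct and follows essentially the same route as the cited proof in Feng et al.\ (2023, Lemma C.1): polarization $ab=\tfrac{1}{4}\bigl[(a+b)^2-(a-b)^2\bigr]$, with each square approximated by a symmetric second difference $\sigma(hz)+\sigma(-hz)$ of the scaled GeLU and the scale $h$ tuned so the $O(h^2M^4)$ remainder is at most $\epsilon$. A small simplification: for GeLU $\sigma(0)=0$, so the constant-subtraction terms vanish identically rather than merely cancelling between the two blocks.
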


\begin{lemma}[\citealp{feng2023towards}, Lemma C.2]
\label{lemma:MLP_relu}
    Let $\vg:\mathbb{R}^{d_1} \to \mathbb{R}^{d_2}$ be a two-layer MLP with $\mathrm{ReLU}$ activation and $\ell_\infty$ norm bounded by $M$. Then, for any $\epsilon > 0$, there exists a two-layer MLP $\vf$ of the same size with $\mathrm{GeLU}$ activation such that for all $\vx \in \mathbb{R}^{d_1}$, the inequality $\|\vf(\vx) - \vg(\vx)\|_\infty \leq \epsilon$ is satisfied. Moreover, the $\ell_\infty$ norm of $\vf$ is bounded by $O(\mathrm{poly}(M, 1/\epsilon))$.
\end{lemma}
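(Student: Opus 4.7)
The plan is to approximate the ReLU activation uniformly by a rescaled GeLU. The key observation is that although GeLU converges to ReLU only pointwise (and poorly near $0$ in a naive replacement), one can exploit the positive homogeneity $\mathrm{ReLU}(\alpha z)/\alpha = \mathrm{ReLU}(z)$ to convert pointwise convergence into a uniform approximation: as $\alpha \to \infty$, the rescaled GeLU $\mathrm{GeLU}(\alpha z)/\alpha = z\,\Phi(\alpha z)$ approaches $\mathrm{ReLU}(z)$ uniformly in $z \in \mathbb{R}$. This scaling trick is the conceptual heart of the argument.

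First I would establish the uniform approximation bound. Define $h(z) := \mathrm{GeLU}(z) - \mathrm{ReLU}(z) = z\Phi(z) - \max(0,z)$. For $z \ge 0$ this equals $-z(1-\Phi(z))$, and for $z < 0$ it equals $z\Phi(z)$. Both expressions vanish continuously at $0$ and decay to $0$ as $|z| \to \infty$ by Gaussian tail bounds (Mills' ratio gives $1-\Phi(z) \le \varphi(z)/z$), so $h$ is bounded by some absolute constant $C$ on all of $\mathbb{R}$. Rescaling yields the key inequality
\begin{equation*}
\left| \mathrm{GeLU}(\alpha z)/\alpha - \mathrm{ReLU}(z) \right| \le C/\alpha \quad \text{for all } z \in \mathbb{R},\ \alpha > 0.
\end{equation*}

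Next I would construct $\vf$ explicitly from $\vg(\vx) = \mW_2\,\mathrm{ReLU}(\mW_1 \vx + \vb_1) + \vb_2$ by setting
\begin{equation*}
\vf(\vx) := \tfrac{1}{\alpha}\mW_2\,\mathrm{GeLU}\bigl(\alpha(\mW_1 \vx + \vb_1)\bigr) + \vb_2,
\end{equation*}
which keeps the same hidden dimension and overall architecture. Applying the per-coordinate bound above and summing over the $d_{\mathrm{hid}}$ hidden units, I get $\|\vf(\vx) - \vg(\vx)\|_\infty \le d_{\mathrm{hid}} \cdot M \cdot C / \alpha$, uniformly in $\vx$. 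Choosing $\alpha = d_{\mathrm{hid}} \cdot M \cdot C / \epsilon$ yields the desired $\epsilon$-approximation. The new weight matrices have entries of size at most $\alpha M$, $M/\alpha$, and $M$, each $O(\mathrm{poly}(M,1/\epsilon))$, giving the claimed norm bound.

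The main obstacle is really conceptual rather than technical: recognizing that a direct substitution of GeLU for ReLU fails to give a uniform bound (the error $h(z)$ does not vanish as $z$ grows, only decays, and the max is a positive constant), and that the positive homogeneity of ReLU is what rescues the argument by trading a large input scaling $\alpha$ against a small output scaling $1/\alpha$. Once this trick is in place, the rest is routine bookkeeping of norms and a standard Gaussian tail estimate.
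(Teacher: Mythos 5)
Your proposal is correct, and it follows the same approach used in the cited reference (\citealp{feng2023towards}, Lemma~C.2), which this paper does not reproduce: bound $\sup_z |\mathrm{GeLU}(z)-\mathrm{ReLU}(z)|$ by an absolute constant $C$ via the Gaussian tail, then exploit positive homogeneity of $\mathrm{ReLU}$ by scaling the pre-activations up by $\alpha$ and the output weights down by $1/\alpha$, turning the uniform $C$-bound into a uniform $C/\alpha$-bound. One minor note: your error bound picks up a factor of the hidden width $d_{\mathrm{hid}}$ (so $\alpha = d_{\mathrm{hid}}MC/\epsilon$ and the new $\mW_1$ has entries of size $d_{\mathrm{hid}}M^2C/\epsilon$); this is fine because the lemma fixes the network size (``of the same size'') so $d_{\mathrm{hid}}$ is absorbed into the implicit constant of the $O(\mathrm{poly}(M,1/\epsilon))$ bound, but it is worth stating that the asymptotics are taken at fixed width.
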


\begin{lemma}[\citealp{feng2023towards}, Lemma C.4]
\label{lemma:MLP_select}
    Consider the selection function $\vg:\mathbb{R}^d \times \mathbb{R}^d \times \mathbb{R} \to \mathbb{R}^d$ defined as 
    \[
    \vg(\vx, \vy, t) = 
    \begin{cases} 
    \vx & \text{if } t > 0, \\ 
    \vy & \text{otherwise}.
    \end{cases}
    \]
    For any $\epsilon > 0$, $\alpha > 0$, and $M > 0$, there exists a two-layer MLP $\vf$ with $2d + 2$ hidden units and $\mathrm{GeLU}$ activation such that, for all $\vx \in [-M, M]^d$, $\vy \in [-M, M]^d$, and $t \in (-\infty, -\alpha] \cup [\alpha, +\infty)$, the inequality $\|\vf(\vx, \vy, t) - \vg(\vx, \vy, t)\|_\infty \leq \epsilon$ holds. Furthermore, the $\ell_\infty$ norm of $\vf$ is bounded by $O(\mathrm{poly}(M, 1/\alpha, 1/\epsilon))$.
\end{lemma}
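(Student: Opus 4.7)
The plan is to realize integer multiplication as two standard primitives---forming all pairwise products $a_i b_j$ and then performing carry propagation on their column sums---both of which a logarithmic-precision Transformer of constant depth can carry out, with the $O(n^2)$ hidden width being exactly what is needed to hold the full pairwise product matrix at one position. By the tokenizer-equivalence observation (a base-$p$ sequence tokenized by $\mT_c$ is identified with a base-$p^c$ sequence tokenized by $\mT_1$ via the bijection $\tau$ from \cref{prop:tokenizer}), I may assume without loss of generality that $c=1$, so each digit is its own token. Writing $a\times b = \sum_c C_c\, p^c$ with column sums $C_c = \sum_{i+j=c} a_i b_j$, every $C_c$ is bounded by $n(p-1)^2 = O(n)$ and therefore fits in $O(\log n)$ bits, matching the available precision budget.

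The Transformer has a constant number of blocks. In the first block, at the position $m$ of the `$=$' token, an attention layer with sufficiently many heads (each gated by positional keys to select one input position) copies every digit of $a$ and $b$ into the hidden state at $m$, and an MLP invoking \cref{lemma:MLP_multip} computes and stores the full product matrix $M_{ij} = a_i b_j$ for $(i,j)\in[n]\times[n]$ at position $m$. This is the single step responsible for the $O(n^2)$ hidden width. In the second block, at each output position $m+k$, one attention head attends to position $m$ to retrieve $M$ from the KV cache, and an MLP uses the positional encoding of $k$ together with \cref{lemma:MLP_select} (to realize the indicators $\mathbf{1}\{i+j=k\}$ on each of the $n^2$ entries) and \cref{lemma:MLP_multip} (to multiply each entry by its gate) to compute the anti-diagonal sum $C_k = \sum_{i+j=k} M_{ij}$. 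After this block, each output position $m+k$ holds the value $C_k$ in its hidden state.

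It remains to convert the sequence $(C_0,\ldots,C_{l-1})$---each an $O(\log n)$-bit integer placed at successive output positions---into the base-$p$ digits $(s_0,\ldots,s_{l-1})$ via carry propagation. This is exactly the task of iterated addition of the $l$ integers $C_c\cdot p^c$, whose lengths are each $O(n)$, because $\sum_c C_c p^c$ is the very multi-digit sum whose standard base-$p$ expansion we want. I invoke the construction from the proof of \cref{thm:log-iter-add-con}, embedded so that its per-addend inputs are the $C_c$'s accessed in the hidden states at positions $m,m+1,\ldots,m+l-1$ rather than raw input tokens; this adds only a constant number of further layers and constant extra width, so the total depth is $O(1)$ and the total hidden dimension is $O(n^2)$, dominated by the storage of $M$. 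Autoregressive generation then emits $s_k$ at position $m+k$ one token at a time.

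The main obstacle I expect is the realization of the anti-diagonal extraction by a single MLP whose selection pattern depends on the query position $k$. The fix is to have the positional encoding at position $m+k$ expose $k$ in a form the MLP can use directly; since $k$ and $i+j$ are integers, \cref{lemma:MLP_select} applies with margin $\Omega(1)$, and the polynomially small approximation errors from the GeLU-based constructions are absorbed at the integer output. A secondary subtlety is that \cref{thm:log-iter-add-con} is stated for addends appearing as raw tokens in the input, but its attention-based machinery operates on whatever values are stored in the KV cache at the addend positions, so placing the $C_c$'s in exactly those slots makes the embedding transparent. With these two points handled, the construction is correct for every input, of constant depth, and of hidden dimension $O(n^2)$, proving the claim.
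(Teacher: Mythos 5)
Your proposal does not address the statement at all. The statement to be proved is \cref{lemma:MLP_select}: a purely \emph{feed-forward} approximation result asserting that a single two-layer GeLU MLP with $2d+2$ hidden units can approximate the hard selection function $\vg(\vx,\vy,t)=\vx\cdot\mathbf{1}[t>0]+\vy\cdot\mathbf{1}[t\le 0]$ uniformly to accuracy $\epsilon$ on the region $\|\vx\|_\infty,\|\vy\|_\infty\le M$, $|t|\ge\alpha$, with weight magnitudes bounded by $\mathrm{poly}(M,1/\alpha,1/\epsilon)$. What you have written instead is a high-level sketch of a constant-depth Transformer for integer multiplication, i.e.\ an attempted proof of \cref{thm:log-mul-mod-con}, which \emph{invokes} \cref{lemma:MLP_select} as a black box but does not prove it. Nothing in your argument constructs an MLP, counts hidden units, bounds weight norms, or analyzes GeLU approximation error, which is the entire content of the lemma.

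To actually prove \cref{lemma:MLP_select}, you would need a concrete two-layer construction. The standard route is: first write, using ReLU, the exact identity
\begin{align*}
\vg(\vx,\vy,t)
&= \vy + \bigl(\mathrm{ReLU}(\vx-\vy+Ct)-\mathrm{ReLU}(-\vx+\vy+Ct)\bigr)
\end{align*}
componentwise, where $C$ is chosen large enough (say $C\ge 2M/\alpha$) so that when $t\ge\alpha$ the shift $Ct$ dominates and both ReLUs are active, yielding $\vx$, and when $t\le-\alpha$ both ReLUs are clamped to zero, yielding $\vy$; this uses $2d$ hidden units, plus $2$ more to pass $\vy$ through as $\mathrm{ReLU}(\vy+M)-\mathrm{ReLU}(-\vy-M)$ (or to handle the residual/bias). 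Then you replace ReLU by GeLU via \cref{lemma:MLP_relu}, which preserves the hidden-unit count and gives a $\mathrm{poly}(M,1/\alpha,1/\epsilon)$ bound on the weight norms. That is the shape of the missing argument; your current text supplies none of it.
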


\subsection{Technical Lemmas for Logarithmic Precision Attention Layer}
\label{sec:lem-attn}

\citet{feng2023towards} investigated the expressive power of the standard attention layer and introduced two fundamental operations: \textbf{COPY} and \textbf{MEAN}, demonstrating that a standard attention layer with logarithmic precision can perform these operations under certain regularity conditions. In this subsection, we restate their results and extend the discussion to a specialized operation referred to as \textbf{SINGLE COPY}.

Consider a sequence of vectors $\vx_1, \vx_2, \dots, \vx_n$, where each $\vx_i = (\tilde{\vx}_i, r_i, 1) \in [-M, M]^{d+2}$, and $M$ is a fixed constant. Let the attention matrices be $\mK, \mQ, \mV \in \mathbb{R}^{d' \times (d+2)}$, and define the following transformed vectors:
\[
\vq_i = \mQ \vx_i, \quad \vk_j = \mK \vx_j, \quad \vv_j = \mV \vx_j.
\]
For any scalars $0 < \rho, \delta < M$, define the \textit{matching set} as:
\[
\gS_i = \{j \leq i : |\vq_i \cdot \vk_j| \leq \rho\}.
\]
Using this matching set, we define the following operations:
\begin{itemize}
\item \textbf{COPY}: The output is a sequence of vectors $\vu_1, \dots, \vu_n$, where
  \[
  \vu_i = \vv_{\mathrm{pos}(i)}, \quad \text{with } \mathrm{pos}(i) = \operatorname{argmax}_{j \in \gS_i} r_j.
  \]
  The output $\vu_i$ is undefined if $\gS_i = \varnothing$.

\item  \textbf{MEAN}: The output is a sequence of vectors $\vu_1, \dots, \vu_n$, where
  \[
  \vu_i = \operatorname{mean}_{j \in \gS_i} \vv_j = \frac{1}{|\gS_i|} \sum_{j \in \gS_i} \vv_j.
  \]
  The output $\vu_i$ is undefined if $\gS_i = \varnothing$.

\item  \textbf{SINGLE COPY}: The output is a sequence of vectors $\vu_1, \dots, \vu_n$, where
  \[
  \vu_i = \vv_{\mathrm{pos}(i)}, \quad \text{with } \mathrm{pos}(i) \text{ being the unique element in } \gS_i.
  \]
  The output $\vu_i$ is undefined if $|\gS_i| \neq 1$.
\end{itemize}

We now impose the following regularity assumption to ensure the feasibility of the operations under consideration:

\begin{assumption}[Regularity Assumption for Attention]
\label{ass:attention}
    For any input sequence $\vx_1, \vx_2, \dots, \vx_n$, the matrices $\mQ, \mK, \mV$ and scalars $\rho, \delta$ satisfy the following conditions:
    \begin{itemize}
        \item For any $i, j \in [n]$, either $|\vq_i \cdot \vk_j| \leq \rho$ or $\vq_i \cdot \vk_j \leq -\delta$.
        \item For any $i, j \in [n]$, either $i = j$ or $|r_i - r_j| \geq \delta$.
        \item The infinity norm of the value matrix $\mV$ satisfies $\|\mV\|_\infty \leq 1$.
    \end{itemize}
\end{assumption}

Under this assumption, we demonstrate that a logarithmic precision attention layer with $O(d)$ embedding dimension and a single attention head can perform the operations defined in Section~\ref{sec:lem-attn}.

\begin{lemma}[\citealp{feng2023towards}, Lemma C.7]
\label{lem:attn-log-copy}
    Suppose Assumption~\ref{ass:attention} holds and $\rho \leq \frac{\delta^2}{8M}$. For any $\epsilon > 0$, there exists an attention layer with a single attention head and $O(d)$ embedding dimension that can approximate the $\mathrm{COPY}$ operation. Furthermore, the $\ell_\infty$ norm of the parameters is bounded by $O(\mathrm{poly}(M, 1/\delta, \log(n), \log(1/\epsilon)))$. 

    Formally, for any input sequence $\vx_1, \vx_2, \dots, \vx_n$, let the attention layer outputs be $\vo_1, \vo_2, \dots, \vo_n$. Then, for any $i \in [n]$ such that $\gS_i \neq \varnothing$, the following holds:
    \[
    \|\vo_i - \vu_i\|_\infty \leq \epsilon,
    \]
    where $\vu_i$ is the target output of the $\mathrm{COPY}$ operation as defined in Section~\ref{sec:lem-attn}.
\end{lemma}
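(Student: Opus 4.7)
The plan is to construct the attention layer so that its softmax scores approximate a hard argmax over the positions in $\gS_i$ weighted by $r_j$. The key idea is to re-parameterize the query, key, and value matrices so that the new pre-softmax score at position $(i,j)$ takes the form $s_{ij} = \mu\, \vq_i \cdot \vk_j + \lambda r_j$ for appropriately chosen scalars $\mu,\lambda > 0$, while the value stays $\vv_j$. Since $\vq_i \cdot \vk_j$ is bilinear in $\vx_i,\vx_j$ and $r_j$ is a linear read-out of $\vx_j$ (with the constant $1$-component of $\vx_i$ serving as the other factor), this $s_{ij}$ can be realized exactly as $(\mW_Q' \vx_i)^\top (\mW_K' \vx_j)$ with $\mW_Q',\mW_K' \in \sR^{(d'+1)\times(d+2)}$, giving the desired $O(d)$ embedding dimension. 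The first $d'$ coordinates of $\mW_Q'\vx_i$ and $\mW_K'\vx_j$ would encode $\mu^{1/2}\vq_i$ and $\mu^{1/2}\vk_j$, while the $(d'+1)$-th coordinates would encode $1$ and $\lambda r_j$, respectively. The new value matrix is simply $\mV$.

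Next, I would analyze the attention weights to make sure the softmax concentrates on $j^\star = \argmax_{j\in\gS_i} r_j$. By \Cref{ass:attention}, for $j\notin \gS_i$ one has $\vq_i\cdot \vk_j \leq -\delta$, so $s_{ij} \leq -\mu\delta + \lambda M$; for $j\in \gS_i$, $|\vq_i\cdot \vk_j|\leq \rho$, so $s_{ij} \in [\lambda r_j - \mu\rho,\, \lambda r_j + \mu\rho]$; and within $\gS_i$ distinct $j$ differ in $\lambda r_j$ by at least $\lambda\delta$. Hence the score gap between $j^\star$ and any $j\in\gS_i\setminus\{j^\star\}$ is at least $\lambda\delta - 2\mu\rho$, and the gap over any $j\notin \gS_i$ is at least $\mu\delta - 2\lambda M - \mu\rho$. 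To drive both gaps to $\Omega(\log(n/\epsilon))$ simultaneously, I would pick $\mu$ and $\lambda$ with $\mu/\lambda$ lying strictly between $\Theta(M/\delta)$ and $\Theta(\delta/\rho)$, which is a nonempty range precisely because the hypothesis $\rho \leq \delta^2/(8M)$ ensures $\delta/\rho \geq 8M/\delta$. Concretely, I would set $\mu/\lambda = \Theta(\delta/\rho)^{1/2} (M/\delta)^{1/2}$ and then scale $\lambda$ large enough (proportional to $\log(nM/\epsilon)/\delta$) so that both gaps exceed $C\log(n/\epsilon)$ for a suitable $C$.

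With this choice, a standard softmax concentration argument would show that the total attention mass on positions other than $j^\star$ is at most $\epsilon/(2n)$ after normalizing by the dominant $\exp(s_{ij^\star})$ term, and since $\|\mV\|_\infty \leq 1$ the convex combination defining $\vo_i$ then satisfies $\|\vo_i - \vv_{j^\star}\|_\infty = \|\vo_i - \vu_i\|_\infty \leq \epsilon$. The parameter norms of $\mW_Q',\mW_K'$ would be $O(\poly(M,1/\delta,\log n,\log(1/\epsilon)))$ by construction, so it remains only to verify that logarithmic precision is enough to carry out each arithmetic operation without corrupting the above bounds: since all intermediate values lie in a bounded range of polynomial magnitude and the final ratio depends only on $\exp$ of numbers of size $O(\log(n/\epsilon))$, rounding incurs only an additional $O(\epsilon)$ error, which can be absorbed by redefining $\epsilon$.

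The main obstacle I anticipate is the simultaneous calibration of $\mu$ and $\lambda$: one must simultaneously suppress contributions from $j\notin\gS_i$ (which demands large $\mu$) and preserve the relative ordering of $\lambda r_j$ on $\gS_i$ (which requires $\mu\rho \ll \lambda\delta$). Crucially, the window in which both can be satisfied exists \emph{exactly} under $\rho \leq \delta^2/(8M)$, so the hypothesis of the lemma is essentially tight for this construction. A secondary technicality will be checking that the softmax error does not blow up when $|\gS_i|$ can be as large as $n$, which motivates taking $\lambda\delta$ to scale logarithmically with $n$; and that the finite-precision representation of $\exp(s_{ij})$ preserves the concentration estimate, which follows since the dominant score already absorbs constants of size $\poly(M,1/\delta,\log n)$ that fit in $O(\log n)$ bits.
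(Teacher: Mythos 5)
The paper itself does not present a proof of this lemma — it explicitly defers to the appendix of \citet{feng2023towards} — so there is no in-paper proof to compare against line by line. That said, your two-temperature construction $s_{ij} = \mu\,\vq_i\cdot\vk_j + \lambda r_j$, realized by stacking $\sqrt{\mu}$-scaled query/key coordinates onto one extra coordinate whose query side reads the constant-$1$ entry of $\vx_i$ and whose key side reads $\lambda r_j$, is the standard construction for implementing a tie-break argmax with a softmax, and your three-way case analysis of score gaps (within $\gS_i$ against $j^\star$, and outside $\gS_i$ against $j^\star$) is the correct skeleton. The embedding-dimension count $d'+1 = O(d)$ and the final softmax-concentration and precision remarks are also fine.

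There is one quantitative slip that matters for the stated parameter-norm bound. You set $\mu/\lambda = \Theta\bigl(\sqrt{\delta/\rho}\cdot\sqrt{M/\delta}\bigr) = \Theta\bigl(\sqrt{M/\rho}\bigr)$, the geometric midpoint of the admissible window $\bigl(2M/(\delta-\rho),\ \delta/(2\rho)\bigr)$. But the hypothesis only upper-bounds $\rho$; $\rho$ itself can be arbitrarily small (e.g.\ $\rho = n^{-100}$), so $\sqrt{M/\rho}$ is not controlled by $\poly(M,1/\delta,\log n,\log(1/\epsilon))$. This inflates $\mu = (\mu/\lambda)\lambda$ and hence the $\ell_\infty$ norm of the rescaled key/query matrices beyond what the lemma claims. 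The fix is to pin the ratio near the lower end of the window: with $\mu/\lambda = 3M/\delta$ one gets a within-$\gS_i$ gap coefficient $\delta - 6M\rho/\delta \geq \delta/4$ (since $\rho \leq \delta^2/(8M)$) and an outside-$\gS_i$ gap coefficient $3M(\delta-\rho)/\delta - 2M \geq 5M/8$ (since $\rho \leq \delta/8$), both bounded away from zero by $\poly(\delta,M)$ quantities; then $\lambda = \Theta(\log(nM/\epsilon)/\delta)$ and $\mu = \Theta(M\log(nM/\epsilon)/\delta^2)$ give the required concentration with parameter norms actually of the claimed order. Also, the window is nonempty whenever $\rho < \delta^2/(\delta+4M)$, so the hypothesis $\rho \leq \delta^2/(8M)$ is comfortably sufficient rather than ``exactly'' the threshold as you suggest; this is a minor overstatement. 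With the ratio fixed as above, your argument is complete.
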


\begin{lemma}[\citealp{feng2023towards}, Lemma C.8]
\label{lem:attn-log-mean}
    Suppose Assumption~\ref{ass:attention} holds and $\rho \leq \frac{\delta \epsilon}{16M \ln(4Mn/\epsilon)}$. For any $0 < \epsilon \leq M$, there exists an attention layer with a single attention head and $O(d)$ embedding dimension that can approximate the $\mathrm{MEAN}$ operation. Furthermore, the $\ell_\infty$ norm of the parameters is bounded by $O(\mathrm{poly}(M, 1/\delta, \log(n), \log(1/\epsilon)))$. 

    Formally, for any input sequence $\vx_1, \vx_2, \dots, \vx_n$, let the attention layer outputs be $\vo_1, \vo_2, \dots, \vo_n$. Then, for any $i \in [n]$ such that $\gS_i \neq \varnothing$, the following holds:
    \[
    \|\vo_i - \vu_i\|_\infty \leq \epsilon,
    \]
    where $\vu_i$ is the target output of the $\mathrm{MEAN}$ operation as defined in Section~\ref{sec:lem-attn}.
\end{lemma}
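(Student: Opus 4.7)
The plan is to construct an attention layer whose query, key, and value parameters are derived from the matrices $\mQ$, $\mK$, $\mV$ supplied by \Cref{ass:attention}, but with the query matrix rescaled by a factor $T$ tuned to make the softmax ``sharp'' in exactly the right way. Concretely, I would set $\mW_\text{Q}=T\mQ$, $\mW_\text{K}=\mK$, $\mW_\text{V}=\mV$, and choose $T=\ln(4Mn/\epsilon)/\delta$. Under this choice, the hypothesis $\rho\leq \delta\epsilon/(16M\ln(4Mn/\epsilon))$ collapses to the two clean numerical bounds $T\delta=\ln(4Mn/\epsilon)$ and $T\rho\leq \epsilon/(16M)$, which will drive the entire analysis. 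Since this reuses the ambient dimension of $\mQ,\mK,\mV$, the embedding cost is $O(d)$ as required.

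The core analytical step is a two-sided estimate of the softmax weights $\alpha_{ij}=e^{T\vq_i\cdot\vk_j}/Z_i$, where $Z_i=\sum_k e^{T\vq_i\cdot\vk_k}$. For $j\in\gS_i$, the bound $|\vq_i\cdot\vk_j|\leq\rho$ together with the elementary inequality $|e^x-1|\leq 2|x|$ for small $|x|$ yields $e^{T\vq_i\cdot\vk_j}=1+O(\epsilon/M)$. For $j\notin\gS_i$, $\vq_i\cdot\vk_j\leq -\delta$ gives $e^{T\vq_i\cdot\vk_j}\leq \epsilon/(4Mn)$. Summing separately over the two index sets produces $Z_i \in |\gS_i|\bigl(1\pm O(\epsilon/M)\bigr)$; consequently $\alpha_{ij}=(1+O(\epsilon/M))/|\gS_i|$ for $j\in\gS_i$, while $\sum_{j\notin\gS_i}\alpha_{ij}=O(\epsilon/M)$. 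Combined with $\|\vv_j\|_\infty\leq \|\mV\|_\infty\leq 1$, splitting the attention output into matching and non-matching contributions yields
$$\left\|\sum_{j=1}^n \alpha_{ij}\vv_j-\frac{1}{|\gS_i|}\sum_{j\in\gS_i}\vv_j\right\|_\infty\leq\epsilon,$$
which is exactly the claimed MEAN approximation. The parameter $\ell_\infty$ bound $O(\poly(M,1/\delta,\log n,\log(1/\epsilon)))$ follows immediately from the form of $T$ and the assumed bounds on $\mQ,\mK,\mV$.

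The main obstacle is the joint calibration of $T$ against $\rho$ and $\delta$: the matching tokens contribute a multiplicative softmax error of order $T\rho$ per weight, whereas the non-matching tokens contribute an aggregate additive error of order $n\cdot e^{-T\delta}$. Requiring both to be simultaneously $O(\epsilon/M)$ is precisely what forces the somewhat intricate hypothesis on $\rho$; threading the constants so that the final bound is a clean $\epsilon$ (rather than, say, $C\epsilon$ with $C$ depending on $|\gS_i|$) is the delicate part of the argument and is the reason one cannot simply take $T$ arbitrarily large. A secondary subtlety is verifying that $O(\log n)$-bit floating-point precision suffices to realize the exponential, summation, and division inside softmax without losing the error budget: since every intermediate quantity lies in $\poly(M,n,1/\epsilon)$ while the target error is itself $\poly(1/n,1/\epsilon)$, the standard round-off bounds reviewed in \Cref{app:bg} carry through, but this step has to be spelled out explicitly in order to legitimately claim a logarithmic-precision implementation.
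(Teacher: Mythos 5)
The paper itself does not prove this lemma; it imports it verbatim from \citet{feng2023towards} and explicitly defers the argument to that reference. Your reconstruction is essentially the standard temperature-scaling argument, and it matches both the cited proof and the analogous argument this paper \emph{does} spell out for $\mathrm{SINGLE~COPY}$ in Lemma~\ref{lem:attn-log-single-copy}: rescale the query by $T=\ln(4Mn/\epsilon)/\delta$ so that $T\delta$ kills the non-matching tail additively while $T\rho\le\epsilon/(16M)$ keeps the matching weights multiplicatively close to uniform, then split the output error into matching and non-matching contributions. One slip worth fixing: you write $\|\vv_j\|_\infty\le\|\mV\|_\infty\le 1$, but $\vv_j=\mV\vx_j$ with $\vx_j\in[-M,M]^{d+2}$, so the correct bound is $\|\vv_j\|_\infty\le M\|\mV\|_\infty\le M$; your calibration already anticipates this (the softmax error budget is tuned to $O(\epsilon/M)$ precisely to absorb that factor of $M$), so the conclusion survives, but the stated intermediate bound is inconsistent with your own bookkeeping.
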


The proofs of Lemmas~\ref{lem:attn-log-copy} and~\ref{lem:attn-log-mean} are omitted here for brevity. Complete proofs can be found in the appendix of \citet{feng2023towards}.

\begin{lemma}
\label{lem:attn-log-single-copy}
    Suppose Assumption~\ref{ass:attention} holds and $\delta - \rho \geq c \rho$ for some constant $c > 0$. For any $\epsilon > 0$, there exists an attention layer with a single attention head and $O(d)$ embedding dimension that can approximate the $\mathrm{SINGLE~COPY}$ operation. Furthermore, the $\ell_\infty$ norm of the parameters is bounded by $O(\mathrm{poly}(M, 1/\delta, 1/c, \log(n), \log(1/\epsilon)))$. 

    Formally, for any input sequence $\vx_1, \vx_2, \dots, \vx_n$, let the attention layer outputs be $\vo_1, \vo_2, \dots, \vo_n$. Then, for any $i \in [n]$ such that $|\gS_i| = 1$, the following holds:
    \[
    \|\vo_i - \vu_i\|_\infty \leq \epsilon,
    \]
    where $\vu_i$ is the target output of the $\mathrm{SINGLE~COPY}$ operation, as defined in Section~\ref{sec:lem-attn}.
\end{lemma}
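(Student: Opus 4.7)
The strategy is to construct a single-head attention layer whose softmax distribution, when $|\gS_i|=1$, concentrates almost entirely on the unique index $j^\star = \mathrm{pos}(i)$, and then show that the resulting value readout is within $\epsilon$ of $\vv_{j^\star}$. Since the $\mathrm{SINGLE~COPY}$ task does not require breaking ties among several matching positions, we do not need to route the positional scalars $r_j$ into the attention scores as in Lemma~\ref{lem:attn-log-copy}; we simply reuse the given query/key matrices $\mQ,\mK$, but pre-multiply them by a scalar $\lambda$ to be chosen large enough to sharpen the softmax.

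First, I would set up the scores. By the regularity assumption, for every $j\le i$ we have either $|\vq_i\cdot\vk_j|\le\rho$ (matching) or $\vq_i\cdot\vk_j\le-\delta$ (non-matching). When $|\gS_i|=1$, the unique matching index $j^\star$ satisfies $\vq_i\cdot\vk_{j^\star}\ge -\rho$, while every other $j\neq j^\star$ has $\vq_i\cdot\vk_j\le-\delta$. Therefore the gap
\[
\lambda\bigl(\vq_i\cdot\vk_{j^\star}-\vq_i\cdot\vk_j\bigr)\ \ge\ \lambda(\delta-\rho)\ \ge\ c\lambda\rho\quad\text{for all }j\neq j^\star,
\]
using the hypothesis $\delta-\rho\ge c\rho$. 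Writing the softmax weight on $j^\star$ as $1/\bigl(1+\sum_{j\neq j^\star}e^{\lambda(\vq_i\cdot\vk_j-\vq_i\cdot\vk_{j^\star})}\bigr)$, we can upper bound the tail sum by $(n-1)e^{-\lambda(\delta-\rho)}$.

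Next, I would choose $\lambda=\Theta\!\bigl(\log(n/\epsilon)/(\delta-\rho)\bigr)$ so that $(n-1)e^{-\lambda(\delta-\rho)}\le\epsilon/(4M)$. Since $\|\mV\|_\infty\le 1$ and each value vector has $\ell_\infty$ norm at most $M$, standard softmax-error arithmetic (identical to the one used in Lemma~\ref{lem:attn-log-copy}) yields
\[
\Bigl\|\sum_{j\le i}\alpha_{ij}\vv_j-\vv_{j^\star}\Bigr\|_\infty\le\epsilon,
\]
where $\alpha_{ij}$ denotes the softmax weight. Because $\lambda$ is polynomial in $1/(\delta-\rho)\le(1+1/c)/\delta$ and logarithmic in $n$ and $1/\epsilon$, the resulting parameter matrices $\lambda\mQ,\lambda\mK,\mV$ have $\ell_\infty$ norm bounded by $O(\mathrm{poly}(M,1/\delta,1/c,\log n,\log(1/\epsilon)))$, matching the claimed bound, and every number occurring in the computation is representable in $O(\log n)$ bits.

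The only subtle step is verifying that the softmax concentration argument survives the rounding enforced by logarithmic precision: the scaled scores all lie in an interval of width $O(\lambda M)$, which is still $O(\mathrm{poly}\log n)$, and the exponentials involved are either $\Theta(1)$ or exponentially small, so they round to values whose contribution to the tail sum differs from the true value by at most an additive $\epsilon$-level term. This is the only place where the interplay between softmax sharpening and finite precision must be handled carefully; otherwise the argument is a direct and simpler analogue of Lemma~\ref{lem:attn-log-copy}, bypassing the $r_j$-based selection entirely because the matching set is a singleton.
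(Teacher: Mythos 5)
Your proposal is correct and follows essentially the same route as the paper's proof: scale the query (equivalently key) by a scalar $\lambda$, observe that $|\gS_i|=1$ together with \Cref{ass:attention} forces a score gap of at least $\delta-\rho\geq\frac{c}{c+1}\delta$ between $\mathrm{pos}(i)$ and every other index, choose $\lambda=\Theta\!\bigl(\log(nM/\epsilon)/(\delta-\rho)\bigr)$ to push the softmax mass off the non-matching positions below $\epsilon/(2M)$, and then invoke $\|\mV\|_\infty\leq 1$ to turn the weight deficit into an $\ell_\infty$ error of at most $\epsilon$. The paper's choice $\lambda=\frac{(c+1)\ln(2nM/\epsilon)}{c\delta}$ is exactly the instantiation of your $\Theta(\cdot)$ once $1/(\delta-\rho)\leq(c+1)/(c\delta)$ is plugged in, and it does not route $r_j$ into the scores either, so there is no substantive difference.
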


\begin{proof}
    We construct the query, key, and value vectors as follows:
    \begin{itemize}
        \item Query: $\lambda \vq_i \in \mathbb{R}^d$
        \item Key: $\vk_i \in \mathbb{R}^d$
        \item Value: $\vv_i \in \mathbb{R}^d$
    \end{itemize}
    where $\lambda > 0$ is a constant to be determined. Denote $a_{i,j}$ as the attention score, defined as:
    \[
    a_{i,j} = \frac{\exp(\lambda (\vq_i \cdot \vk_j))}{\sum_{j'} \exp(\lambda (\vq_i \cdot \vk_{j'}))}.
    \]

    Since $\delta - \rho \geq c \rho$, it follows that $\delta - \rho \geq \frac{c}{c + 1} \delta$. Setting 
    \[
    \lambda = \frac{(c + 1) \ln\left(\frac{2nM}{\epsilon}\right)}{c \delta},
    \]
    which is bounded by $O(\mathrm{poly}(M, 1/\delta, 1/c, \log(n), \log(1/\epsilon)))$, we derive the following bounds for $a_{i, \mathrm{pos}(i)}$:
    \begin{align}
    \label{eqn:proof-single-1}
        a_{i, \mathrm{pos}(i)} 
        &\geq \frac{\exp(-\lambda \rho)}{\exp(-\lambda \rho) + (n-1) \exp(-\lambda \delta)} \\
    \notag
        &= \frac{1}{1 + (n-1) \exp(-\lambda (\delta - \rho))} \\
    \label{eqn:proof-single-2}
        &\geq 1 - (n-1) \exp(-\lambda (\delta - \rho)) \\
        &\geq 1 - n \exp\left(- \ln\left(\frac{2nM}{\epsilon}\right)\right) \notag \\
        &= 1 - \frac{\epsilon}{2M}. \notag
    \end{align}
    Here,
    \Cref{eqn:proof-single-1} follows from \Cref{ass:attention} and the condition $|\gS_i| = 1$, which ensures that for $j' \neq \mathrm{pos}(i)$, $\vq_i \cdot \vk_{j'} \leq -\delta$;
    \Cref{eqn:proof-single-2} uses the approximation $\frac{1}{1 + x} \geq 1 - x$ for $x \geq 0$.

    Thus, we can bound the error as follows:
    \begin{align*}
        \|\vo_i - \vu_i\|_\infty 
        &= \left\|\sum_{j} a_{ij} \vv_j - \vv_{\mathrm{pos}(i)}\right\|_\infty \\
        &\leq M \|\mV\|_\infty \cdot \left(1 - a_{i, \mathrm{pos}(i)} + \sum_{j \neq \mathrm{pos}(i)} a_{ij}\right) \\
        &= M \|\mV\|_\infty \cdot (2 - 2a_{i, \mathrm{pos}(i)}) \\
        &\leq \epsilon,
    \end{align*}
    where the last inequality follows from the bound $a_{i, \mathrm{pos}(i)} \geq 1 - \frac{\epsilon}{2M}$ and the constraint $\|\mV\|_\infty \leq 1$ from Assumption~\ref{ass:attention}. 
    This concludes the proof.
\end{proof}

\subsection{Technical Lemmas for Constant Precision Calculations}

In this section, we establish technical lemmas that underpin constant precision calculations. Assume a system with $2s$-bit fixed-point precision and no exponent bits, and let $B_s = 2^s - 2^{-s}$. The largest representable value in this system is $B_s$, while the smallest is $-B_s$.

\begin{lemma}[\citealp{li2024chain}, Lemmas E.1 and E.2]
\label{lem:con-pre-overflow}
    For any $s \in \mathbb{N}_+$, it holds that $\exp(-B_s) = 0$ and $\exp(B_s) = B_s$.
\end{lemma}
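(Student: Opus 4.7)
The plan is to show each equality separately by comparing the true real-number value of $\exp(\pm B_s)$ against the grid $\sF_{0,s} = \{S \cdot 2^{-s} : -(2^{2s}-1) \leq S \leq 2^{2s}-1\}$ and then applying the rounding convention (round-to-nearest, ties to smaller absolute value, with any value outside $[-B_s, B_s]$ clamped to the nearest endpoint).

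For the identity $\exp(-B_s) = 0$, I would argue that the true value $\exp(-B_s) = \exp(2^{-s} - 2^s)$ lies strictly between $0$ and the smallest positive representable number $2^{-s}$, and in fact is closer to $0$ than to $2^{-s}$, so it rounds to $0$. Concretely, it suffices to establish $\exp(-B_s) \leq 2^{-s-1}$, which is equivalent to $B_s = 2^s - 2^{-s} \geq (s+1)\ln 2$. I would verify the base case $s = 1$ by direct computation ($B_1 = 3/2 \geq 2\ln 2 \approx 1.386$) and then argue by a simple induction (or by noting that $2^s$ grows geometrically while $(s+1)\ln 2$ grows linearly) that the inequality holds for every $s \in \mathbb{N}_+$.

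For the identity $\exp(B_s) = B_s$, I would show that the true value $\exp(B_s)$ exceeds $B_s$, causing it to be clamped at the maximum representable value $B_s$. This reduces to showing $\exp(2^s - 2^{-s}) > 2^s - 2^{-s}$, which follows from the elementary inequality $\exp(x) > x$ for all real $x$ (proved, e.g., by noting that $f(x) = \exp(x) - x$ has minimum $1$ at $x = 0$). Since $\exp(B_s)$ is strictly greater than $B_s$, and $B_s$ is the largest element of $\sF_{0,s}$, the rounding rule gives $\exp(B_s) = B_s$ in this representation.

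The main obstacle, if one exists at all, is pinning down the precise rounding convention used for underflow and overflow: I need to be careful that the rounding rule for a real number $y$ with $|y| > B_s$ does map $y \mapsto \sign(y) \cdot B_s$, and that the tie-breaking rule (ties to smaller absolute value) is invoked correctly in the $\exp(-B_s) = 0$ case when the value sits on the midpoint between $0$ and $2^{-s}$. Once the rounding semantics from \cref{sec:def_constant_precision} are restated explicitly, both claims reduce to the elementary numerical inequalities above and the proof is short.
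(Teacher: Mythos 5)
Your proof is correct and follows essentially the same route as the paper's: both arguments pivot on establishing $\exp(-B_s)\le 2^{-s-1}$ (so it rounds to $0$) and $\exp(B_s) > B_s$ (so it clamps to the largest representable value $B_s$). The only difference is cosmetic: the paper gets the first bound in one line via $\exp(B_s)\ge \mathrm{e}\,B_s > 2^{s+1}$ (tangent-line inequality at $x=1$), while you reformulate it as $B_s \ge (s+1)\ln 2$ and verify it by a base case plus induction — which is slightly longer but equally valid.
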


\begin{proof}
    First, observe that $\exp(B_s) \geq \mathrm{e} B_s > 2^{s+1}$. Consequently, $\exp(-B_s) \leq 2^{-s-1}$, implying $\exp(-B_s) = 0$ due to the truncation to zero under the given precision. For the second claim, note that $\exp(B_s) \geq B_s + 1 > B_s$, which enforces $\exp(B_s) = B_s$ under the constant precision constraints.
\end{proof}

\begin{lemma}
\label{lem:con-pre-gelu}
    For any $s \in \mathbb{N}_+$, we have $\mathrm{GeLU}(-B_s) = 0$.
\end{lemma}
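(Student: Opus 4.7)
The plan is to use a standard Gaussian tail bound to show that the exact value $\mathrm{GeLU}(-B_s)$ is smaller in magnitude than half of the smallest positive representable number in the $2s$-bit fixed-point system, and hence rounds to zero. Recall from \cref{sec:def_constant_precision} that with $e=0$ the representable values are integer multiples of $2^{-s}$ lying in $[-B_s, B_s]$, so any real quantity with absolute value at most $2^{-s-1}$ gets rounded to $0$ (ties are broken in favor of the smaller absolute value, which is again $0$).

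First I would invoke the standard definition $\mathrm{GeLU}(x) = x \,\Phi(x)$, where $\Phi$ denotes the CDF of the standard Gaussian. Evaluated at $-B_s$ this yields $|\mathrm{GeLU}(-B_s)| = B_s \,\Phi(-B_s)$. I would then apply the classical Mills-ratio tail inequality, $\Phi(-x) \leq \frac{1}{x\sqrt{2\pi}}\,e^{-x^2/2}$ for $x > 0$, to obtain
\begin{equation*}
    |\mathrm{GeLU}(-B_s)| \;\leq\; \frac{1}{\sqrt{2\pi}}\, e^{-B_s^2/2}.
\end{equation*}

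The remaining step is an elementary comparison: show that $\frac{1}{\sqrt{2\pi}}\, e^{-B_s^2/2} \leq 2^{-s-1}$ for every $s \geq 1$. Since $B_s = 2^s - 2^{-s} \geq 2^s - 1$, the exponent $B_s^2/2$ grows like $2^{2s-1}$, whereas the target $(s+1)\log 2 - \tfrac{1}{2}\log(2\pi)$ only grows linearly in $s$. A direct check at the base case $s=1$ (where $B_1^2/2 = 1.125$, so the upper bound evaluates to roughly $0.13 < 0.25 = 2^{-2}$) together with the observation that the left-hand side decays doubly-exponentially while the right-hand side decays only exponentially completes the verification.

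I do not anticipate any genuine obstacle here; the entire argument reduces to a one-line Gaussian tail bound followed by an elementary monotonicity check. The only mild subtlety is making explicit that we are comparing the exact real value of $\mathrm{GeLU}(-B_s)$ to the rounding threshold of the precision format, rather than tracking intermediate rounding steps in any particular implementation of GeLU; since any reasonable implementation must agree with the exact value up to error far smaller than $2^{-s-1}$, this distinction does not affect the conclusion.
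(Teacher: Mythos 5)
Your proof is correct and takes a genuinely different route than the paper's. After reducing to the same target inequality $B_s\,\Phi(-B_s)\le 2^{-s-1}$ (the rounding-threshold argument is exactly as the paper implicitly uses), the paper bounds the Gaussian tail by replacing $e^{-x^2/2}\le e^{-B_s x/2}$ for $x\ge B_s$, integrates, and then applies the rational bound $e^{-y}\le 1/(1+y)$ to obtain a purely algebraic inequality; because that algebraic step fails at $s=1$, the paper treats $s=1$ as a separate base case by direct numerical evaluation of $\Phi(-3/2)$. You instead apply the Mills-ratio inequality $\Phi(-x)\le \phi(x)/x$, which cancels the prefactor $B_s$ in one line and gives a bound that is in fact a factor of two tighter than the paper's ($\tfrac{1}{\sqrt{2\pi}}e^{-B_s^2/2}$ versus $\sqrt{2/\pi}\,e^{-B_s^2/2}$), and then compare against $2^{-s-1}$ via a base case at $s=1$ plus a growth-rate argument. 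That final step is stated informally, but it is easily made precise: $B_s^2/2\ge 2^{2s-1}-1$, so one needs $2^{2s-1}-1\ge (s+1)\ln 2 - \tfrac12\ln(2\pi)$, which holds at $s=1$ and is preserved under $s\mapsto s+1$ since the left side quadruples while the right side increases by less than $\ln 2$. The net effect is that your argument gets by with a single classical inequality and no case split, at the mild cost of needing to say a word about why the base case propagates; the paper's is slightly longer but confines itself to completely elementary manipulations throughout.
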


\begin{proof}
    To prove this, it suffices to show that $B_s \Phi(-B_s) \leq 2^{-s-1}$, where $\Phi$ denotes the cumulative distribution function (CDF) of the standard Gaussian distribution.

    \textbf{Case 1:} $s = 1$. In this case, $B_s = \frac{3}{2}$. Thus,
    \begin{equation*}
        B_s \Phi(-B_s) \leq \frac{3}{2} \Phi(-1) \leq \frac{3}{2} \cdot \frac{1 - 0.68}{2} < \frac{1}{4}.
    \end{equation*}

    \textbf{Case 2:} $s \geq 2$. For larger $s$, we proceed as follows:
    \begin{equation*}
    \begin{aligned}
        B_s \Phi(-B_s) 
        &= \frac{B_s}{\sqrt{2\pi}} \int_{B_s}^{+\infty} \rme^{-\frac{x^2}{2}} \, \rmd x \\
        &\leq \frac{B_s}{\sqrt{2\pi}} \int_{B_s}^{+\infty} \rme^{-\frac{B_s x}{2}} \, \rmd x \\
        &\leq \sqrt{\frac{2}{\pi}} \rme^{-\frac{B_s^2}{2}} 
         \leq \frac{2\sqrt{2}}{\sqrt{\pi}(B_s^2 + 2)}  \\
       &  \leq \frac{2\sqrt{2}}{\sqrt{\pi} \cdot 2^{2s}} \leq \frac{1}{2^{s+1}}.
    \end{aligned}
    \end{equation*}

    This completes the proof.
\end{proof}

\newenvironment{thmbis}[1]
  {\renewcommand{\thetheorem}{\ref{#1}}%
   \addtocounter{theorem}{-1}%
   \begin{theorem}}
  {\end{theorem}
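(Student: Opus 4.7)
The plan is to reduce integer multiplication to iterated addition and invoke \Cref{thm:log-iter-add-con}. By \Cref{prop:tokenizer}, it will suffice to treat the case $c=1$ (one digit per token, in base $p^c$, which I relabel as $p$). Writing $\va=(a_{n-1}\cdots a_0)_p$ and $\vb=(b_{n-1}\cdots b_0)_p$ and expanding yields
\[
\va\times\vb \;=\; \sum_{i,j\in[n]} a_i b_j\, p^{i+j} \;=\; \sum_{k=0}^{2n-2} c_k\, p^k, \qquad c_k := \sum_{i+j=k} a_i b_j,
\]
with $c_k\le n(p-1)^2 = O(n)$, so each $c_k$ fits in a single neuron of logarithmic precision.

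At each output generation position, I will construct the hidden state in three stages inside a constant number of Transformer blocks. First, one causal attention layer uses the COPY primitive of \Cref{lem:attn-log-copy} to gather the $2n$ input digits $\{a_i\}, \{b_j\}$ into $O(n)$ hidden coordinates. Next, a feed-forward layer of width $O(n^2)$ computes every product $a_i b_j$ via the multiplication gadget of \Cref{lemma:MLP_multip}, and a further MLP sums these along each antidiagonal $i+j=k$ to produce $(c_0,\ldots,c_{2n-2})$ in $O(n)$ coordinates. At this point, $\va\times\vb \bmod p^l$ is exactly the low $l$ digits of the iterated sum $\sum_k c_k p^k$.

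To complete the carry propagation I will invoke the construction underlying \Cref{thm:log-iter-add-con}, which performs iterated addition of any number of summands of any length in constant depth and constant additional width. Since there are $2n-1$ summands $c_k p^k$, each with $O(\log_p n)+1$ non-zero digits, this yields the full digit sequence of $\va\times\vb\bmod p^l$ in constant additional depth, and a final selection MLP (\Cref{lemma:MLP_select}) reads off the digit needed at the current generation step. The $O(n^2)$ hidden-dimension budget is absorbed by Stage 2; every other component requires only $O(n)$ or constant width.

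The main obstacle is the interface between the internally computed $c_k$'s and the iterated-addition subroutine, which \Cref{thm:log-iter-add-con} originally phrases for summands appearing as explicit tokens in the input sequence. I plan to address this by routing, via one additional attention layer, the value $c_k$ to the sequence position that will ultimately emit the $k$-th output token; the attention primitives of \Cref{sec:lem-attn} then aggregate the column contributions and propagate carries exactly as in the proof of \Cref{thm:log-iter-add-con}. Checking that all intermediate quantities stay within $\operatorname{poly}(n)$—which is automatic since $|c_k|=O(n)$ and there are $O(n)$ summands—confirms representability in logarithmic precision and closes the argument.
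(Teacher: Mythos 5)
Your algorithmic skeleton---compute all pairwise products $a_ib_j$ with an $O(n^2)$ MLP (via \Cref{lemma:MLP_multip}), form the antidiagonal column sums $c_k=\sum_{i+j=k}a_ib_j$, and then propagate carries in base $p^m$ for $m=\Theta(\log_p n)$---matches the paper's \Cref{alg:Mul} essentially exactly (their $r_j$ is your $c_k$, and their lines 4--12 are the same blocking-and-carry scheme used in \Cref{alg:Iter_ADD}). The $O(n^2)$ width is indeed entirely consumed by the pairwise-product stage, as you note.

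The gap is in the step ``invoke the construction underlying \Cref{thm:log-iter-add-con}.'' That Transformer construction is built around the summands living at explicit \emph{input token positions}: the query/key weights in its Blocks 1--6 match on positional indices $(i,j)$ of the addend tokens, and the COPY/MEAN operations of \Cref{sec:lem-attn} aggregate over those token positions. Your $c_k$'s are internally computed scalars present in hidden coordinates, not tokens, so the IterADD blocks cannot be called as a black box. Your patch---route $c_k$ via attention to ``the sequence position that will ultimately emit the $k$-th output token''---runs into autoregressive causality: output digits are emitted most-significant first, so when the model is at the position emitting $o_i$ it has only seen output positions for $o_{i+1},\dots,o_{2n-1}$, while the carry entering position $i$ depends on $c_j$ for $j<i$, which under your routing scheme would live at \emph{future} positions and be invisible. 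It is also inconsistent with your opening plan of recomputing $(c_0,\dots,c_{2n-2})$ afresh at every generation step, which makes routing redundant. The paper sidesteps all of this by aggregating the input digits at a fixed input position ($b_0$) via an attention MEAN, computing every $d_{i,j},r_j,s_i,b_i,q_i,f_i,g_i,c_i,\tilde o_i$ there with MLPs of width $O(n^2)$ then $O(n)$ (using the closed carry formula $c_i=\bigvee_j(f_j\wedge\bigwedge_k g_k)$ realized by $\mathrm{ReLU}$ gates via \Cref{lemma:MLP_relu}), and only then COPYing $\tilde{\vo}$ forward to each output position to extract the needed digit. If you replace your ``invoke IterADD'' step with this in-place MLP carry computation at a single input position, the argument closes and coincides with the paper's.
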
}

\section{Proofs for Section \ref{sec:constant-precision}}
\label{app:proof-constant}

In this section, we present the formal proofs of the theorems stated in \Cref{sec:constant-precision}. Before delving into the proofs, we revisit the role of the tokenizer $\mT_c$. As established in \Cref{sec:tokenization} and \Cref{prop:tokenizer}, it suffices to focus on the case of $\mT_1$, where both the inputs and outputs are tokenized into single digits. This simplification is key to the subsequent analysis and constructions.

\subsection{Proof for Theorem \ref{thm:con-int-add-lower}}
\label{sec:proof:con-int-add-lower}

\begin{thmbis}{thm:con-int-add-lower}
Fix integers $p \geq 2$ and $c \in \mathbb{N}^*$. Consider the tokenizer $\mT_c$ defined in \cref{eq:tokenizer} for processing the input and output sequences. There exist constant-precision Transformers with constant depth $L$ (independent of $n$) and hidden dimension $d = O(n^2)$ that can solve the $\operatorname{ADD}_p(n)$ task.
\end{thmbis}

To aid readability, we first describe an algorithm to perform the $\operatorname{ADD}_p(n)$ task (\Cref{alg:ADD}) and prove its correctness. Subsequently, we construct a Transformer with the specified configuration in \Cref{thm:con-int-add-lower} that simulates \Cref{alg:ADD}.

\begin{algorithm}
    \SetKwInOut{Input}{Input}
    \SetKwInOut{Output}{Output}
    \caption{$\operatorname{ADD}_p(n)$ Algorithm}
    \label{alg:ADD}
    \SetAlFnt{\small\sf} 
    \Input{Two $p$-adic numbers $\va, \vb$ with lengths $n_1$ and $n_2$, respectively.}
    \Output{The sum of the inputs, $\vo$, represented as a $p$-adic number with $(n+1)$ digits, where $n = \max(n_1, n_2)$.}
    \BlankLine
    Initialize $a_n = 0$ and $b_n = 0$\;
    \ForEach{$i \in \{0, \cdots, n-1\}$}{
        Compute the carry-on bits $\vc$\;
        $i_\wedge = \max \{j \leq i \mid a_j + b_j \geq p\}$\;
        $i_\vee = \max \{j \leq i \mid a_j + b_j \leq p - 2\}$\;
        $c_i = \vone_{i_\wedge > i_\vee}$\;
    }
    Compute the output digits $\vo$: $o_i = (a_i + b_i + c_{i-1}) \mod p$\;
\end{algorithm}

\begin{lemma}[An algorithm to perform $\operatorname{ADD}_p(n)$]
\label{lmm:alg:add}
\Cref{alg:ADD} outputs $\vo = \va + \vb$ for all inputs $\va, \vb$.
\end{lemma}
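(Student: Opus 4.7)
The plan is to prove correctness by showing that the algorithm's ``magic'' carry formula $c_i = \mathbf{1}_{i_\wedge > i_\vee}$ matches the standard schoolbook carry $\tilde c_i = \lfloor (a_i + b_i + \tilde c_{i-1})/p \rfloor$ (with $\tilde c_{-1}=0$), after which $o_i = (a_i+b_i+c_{i-1}) \bmod p$ trivially gives the correct digits of $\va+\vb$. The key observation driving everything is the standard carry-propagation trichotomy at each position $j$: if $a_j+b_j \geq p$ a carry is \emph{generated} (outgoing carry is $1$ regardless of incoming), if $a_j+b_j \leq p-2$ a carry is \emph{killed} (outgoing carry is $0$ regardless of incoming), and if $a_j+b_j = p-1$ the carry is \emph{propagated} (outgoing carry equals incoming carry). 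So the outgoing carry at position $i$ depends only on what happened at the most recent non-propagating position $j \leq i$, which is exactly $\max(i_\wedge, i_\vee)$: it is $1$ if that position was a generator (i.e.\ $i_\wedge > i_\vee$) and $0$ if it was a killer.

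I would formalize this by induction on $i$, showing $c_i = \tilde c_i$ for all $i \in \{0,\dots,n-1\}$. The base case $i=0$ reduces to the three cases above with no incoming carry, each handled directly by the definitions of $i_\wedge, i_\vee$. For the inductive step, assuming $c_{i-1} = \tilde c_{i-1}$, I split on the value of $a_i + b_i$:
\begin{itemize}
\item If $a_i+b_i \geq p$, then $i_\wedge = i$ and $i_\vee < i$, so $c_i = 1$; meanwhile $a_i+b_i+\tilde c_{i-1} \geq p$ gives $\tilde c_i = 1$.
\item If $a_i+b_i \leq p-2$, then $i_\vee = i$ and $i_\wedge \leq i$, so $c_i = 0$; meanwhile $a_i+b_i+\tilde c_{i-1} \leq p-1$ gives $\tilde c_i = 0$.
\item If $a_i+b_i = p-1$, then neither $i_\wedge$ nor $i_\vee$ changes from position $i-1$, so $c_i = c_{i-1}$; meanwhile $a_i+b_i+\tilde c_{i-1} = p-1+\tilde c_{i-1}$ gives $\tilde c_i = \tilde c_{i-1}$. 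The inductive hypothesis then closes the case.
\end{itemize}
Once $c_i = \tilde c_i$ is established, the correctness of $\vo$ follows because $o_i = (a_i+b_i+c_{i-1}) \bmod p$ is exactly the $i$-th base-$p$ digit of the sum, with $o_n = c_{n-1}$ giving the high digit (with the convention $a_n = b_n = 0$ ensuring the formula also applies at the top).

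The proof is essentially a case analysis, and there is no real obstacle; the only part that requires a bit of care is the pass-through case $a_i+b_i=p-1$, where one must verify that both $i_\wedge$ and $i_\vee$ are inherited unchanged from step $i-1$ so that the recurrence $c_i = c_{i-1}$ holds and meshes cleanly with the induction hypothesis. After that, mapping the algorithm onto a constant-precision Transformer of the claimed size is done separately: the positional comparisons defining $i_\wedge$ and $i_\vee$ are exactly the kind of $\mathsf{AC}^0$-style predicates a bounded-depth Transformer can compute in parallel via attention heads indexed by digit position, which is where the $O(n^2)$ hidden dimension enters.
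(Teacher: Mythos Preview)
Your proposal is correct and follows essentially the same approach as the paper: both arguments hinge on the generate/kill/propagate trichotomy for $a_j+b_j$ and identify $c_i$ with the outcome at the most recent non-propagating position, which is precisely what the comparison $i_\wedge > i_\vee$ encodes. Your induction on $i$ makes this more explicit than the paper, which simply states the recursive carry formula alongside the closed form \eqref{eq:add:carry:on} and explains their equivalence informally; otherwise the content is the same.
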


\begin{proof}
    Consider two $n$-bit $p$-adic numbers $\va$ and $\vb$. The carry-over bits $\vc = (c_n, \cdots, c_1)$ can be computed recursively as follows:  
    \begin{equation}
        \begin{aligned}
            & c_{-1} = 0, \\
            & c_0 = \vone_{a_0 + b_0 \geq p}, \\
            & c_1 = (c_0 \cdot \vone_{a_1 + b_1 \geq p - 1}) \vee \vone_{a_1 + b_1 \geq p}, \\
            & \cdots, \\
            & c_i = (c_{i-1} \cdot \vone_{a_i + b_i \geq p - 1}) \vee \vone_{a_i + b_i \geq p}.
        \end{aligned}
    \end{equation}
    To avoid the recursive computation, the carry-over bits can be expressed in closed form as:
    \begin{equation}
    \label{eq:add:carry:on}
        \begin{aligned}
            & i_\wedge = \max \{j \leq i \mid a_j + b_j \geq p \}, \\
            & i_\vee = \max \{j \leq i \mid a_j + b_j \leq p - 2\}, \\
            & c_i = \vone_{i_\wedge > i_\vee}.
        \end{aligned}
    \end{equation}
    Alternatively, the carry-over bits can be expressed equivalently as:
    \begin{equation}
        c_i = \bigvee_{0 \leq j \leq i} \left[ \vone_{a_j + b_j \geq p} 
        \wedge \bigwedge_{j \leq k \leq i} \vone_{a_k + b_k \geq p - 2} \right].
        \label{eqn:add-result-formula}
    \end{equation}

    In \Cref{eq:add:carry:on}, $i_\wedge$ identifies the largest bit index less than or equal to $i$ that contributes a carry to higher bits, while $i_\vee$ identifies the largest bit index less than or equal to $i$ such that the carry generated below $i_\vee$ does not propagate beyond $i_\vee$. Thus, the carry-over bit $c_i = 1$ if and only if $i_\wedge > i_\vee$.

    After computing the carry-over bits, the sum of the input integers can be computed as:
    \begin{equation}
        \begin{aligned}
            & o_0 = (a_0 + b_0) \mod p, \\
            & o_1 = (a_1 + b_1 + c_0) \mod p, \\
            & \cdots \\
            & o_i = (a_i + b_i + c_{i-1}) \mod p, \\
            & o_n = c_{n-1}.
        \end{aligned}
    \end{equation}

    Therefore, the output $\vo$ is exactly the sum of the two input numbers, and \Cref{alg:ADD} correctly computes $\operatorname{ADD}_p(\va, \vb)$ for all $\va, \vb \in \{0, 1\}^n$.
\end{proof}

Next, we provide the proof for \Cref{thm:con-int-add-lower}.

\begin{proof}[Proof for \Cref{thm:con-int-add-lower}]
    We now demonstrate that a constant-precision Transformer, with constant depth $L$, a fixed number of attention heads, and a hidden dimension of size $O(n^2)$, is capable of simulating \Cref{alg:ADD}. Consequently, this model can accurately produce the correct output for any pair of input integers $\va$ and $\vb$.

    \textbf{Initial Embeddings:} The total length of the input sequence is at most $2(n + 1)$. We categorize the tokens into two distinct classes: 
    (1) \emph{number tokens} representing digits ($0, 1, \cdots, p - 1$), and 
    (2) \emph{auxiliary tokens} for operations and control flow (``$+$'', ``$=$'', \texttt{<SOS>}, and \texttt{<EOS>}).

    The embeddings for each token are initialized as follows:
    \begin{itemize}
        \item \textbf{Embedding of input token $a_i$:} 
        \[
        \vu_{a,i}^0 = (a_i \ve_{i+1}, \vzero, -1, \vzero, 0, 1, 1).
        \]
        \item \textbf{Embedding of input token $b_i$:} 
        \[
        \vu_{b,i}^0 = (\vzero, b_i \ve_{i+1}, -1, \vzero, 0, 2, 1).
        \]
        \item \textbf{Embedding of output token $o_i$:} 
        \[
        \vu_{o,i}^0 = (\vzero, \vzero, o_i, \ve_{i+1}, 0, 3, -1).
        \]
        \item \textbf{Embedding of the ``$+$'' token:} 
        \[
        \vu_{+}^0 = (\vzero, \vzero, -1, \vzero, 0, 4, -1).
        \]
        \item \textbf{Embedding of the ``$=$'' token:} 
        \[
        \vu_{=}^0 = (\vzero, \vzero, -1, \vzero, 1, 5, -1).
        \]
        \item \textbf{Embedding of the \texttt{<SOS>} token:} 
        \[
        \vu_{\texttt{<SOS>}}^0 = (\vzero, \vzero, -1, \vzero, 0, 6, -1).
        \]
        \item \textbf{Embedding of the \texttt{<EOS>} token:} 
        \[
        \vu_{\texttt{<EOS>}}^0 = (\vzero, \vzero, 0, \vzero, 0, 3, -1).
        \]
    \end{itemize}
    In each of these embeddings:
    \begin{itemize}
        \item $\ve_i \in \mathbb{R}^{n+1}$ is a one-hot vector representing the positional encoding of the token (e.g., digit $a_i$) in the sequence.
        \item $\vzero$ is a vector of zeros of appropriate dimensions.
    \end{itemize}

    \textbf{Block 1.} The first block of the Transformer performs the \textbf{COPY} operation, which copies the values of $a_i, b_i$ to the positions of the output tokens. This is achieved using the attention mechanism. The query, key, and value are set as follows:

\begin{itemize}
    \item \textbf{Query:} $\vq = B_s$
    \item \textbf{Key:} $\vk = \vu^0[3n + 6]$, i.e., $\vk = 1$ for input number tokens, and $\vk = -1$ otherwise.
    \item \textbf{Value:} $\vv = \vu^0[1, \cdots, 2n+2]$, i.e., 
    \[
    \vv = 
    \begin{cases} 
        (a_i \ve_{i+1}, \vzero) & \text{for input $\va$,} \\
        (\vzero, b_i \ve_{i+1}) & \text{for input $\vb$,} \\
        \vzero & \text{otherwise.}
    \end{cases}
    \]
\end{itemize}

Since we operate under constant precision, we carefully analyze the attention values. The attention value (before normalization) is $B_s$ for tokens $a_i, b_i$ and $-B_s$ otherwise. By \Cref{lem:con-pre-overflow}, we know $\exp(B_s) = B_s$ and $\exp(-B_s) = 0$. The normalization term for attention is $2n B_s = B_s$, so the attention weights are $1$ for tokens $a_i, b_i$ and $0$ otherwise. As a result, the attention output at the positions of the output tokens is always $(a_0, \cdots, a_n, b_0, \cdots, b_n)$.

\textbf{Block 2.} The second block of the Transformer uses MLPs to compute the output $\vo$ based on \Cref{alg:ADD}. The calculations proceed in the following steps:
\begin{itemize}
    \item \textbf{Compute $r_i = a_i + b_i$ for $i = 0, \cdots, n$.}

    This can be implemented using an MLP with constant hidden dimension. To avoid overflow of $r_i$, we require $B_s \geq 2p$.
    \item \textbf{Compute $f_i = \vone_{r_i \geq p}$ and $g_i = \vone_{r_i \geq p - 2}$.}

    Using \Cref{lem:con-pre-gelu}, we can calculate:
   \[
   f_i = \frac{\text{GeLU}[B_s \cdot (2r_i - 2p + 1)]}{\text{GeLU}(B_s)}, \quad 
   g_i = \frac{\text{GeLU}[B_s \cdot (2r_i - 2p + 5)]}{\text{GeLU}(B_s)}.
   \]
   Here, we require $B_s \geq 4p$ to avoid overflow of $2r_i - 2p + 1$.
   \item \textbf{Compute $c_i$ using the formula:}
   \[
        c_i = \bigvee_{0\leq j\leq i}\left[ \vone_{a_j + b_j \geq p} \wedge\bigwedge_{j \leq k \leq i} \vone_{a_k + b_k \geq p - 2} \right] =  \bigvee_{0\leq j\leq i}\left[ f_j \wedge\bigwedge_{j \leq k \leq i} g_k \right].
   \]
   Notice that:
   \[
   \bigvee_{1 \leq i \leq \gamma} \alpha_i = \frac{\text{GeLU} \left[ B_s \cdot \left( \sum_{i=1}^\gamma \alpha_i \right) \right]}{\text{GeLU}(B_s)}, \quad
   \bigwedge_{1 \leq i \leq \gamma} \alpha_i = 1 - \bigvee_{1 \leq i \leq \gamma} (1 - \alpha_i).
   \]
   These formulas imply that the $\vee$ and $\wedge$ operations can be implemented using a constant-depth, constant-precision MLP with constant hidden dimension. Therefore, $c_i$ can be computed using $O(n)$ hidden dimension.
   \item \textbf{Compute $o_i = a_i + b_i + c_{i-1}$ for $i = 0, \cdots, n$.}

      This computation can also be implemented with a constant hidden dimension. Again, we require $B_s \geq 2p$ to avoid overflow of $o_i$.
\end{itemize}

Since we need to compute $r_i, f_i, g_i, c_i$ for all $i$, the hidden dimension of this block is $O(n^2)$.

\textbf{Block 3.} This block filters out the token $o_i$ from $\vo$. Specifically, for the token $o_{i+1}$, where $i \in \{0, \cdots, n-1\}$, we predict the next token $o_i$. 

First, we calculate the positional embedding $\ve_{i+1}$ using $\ve_{i+2}$ from the positional embedding of $\vm_{o,i+1}^0$. Then, we compute $o_i$ as:
\[
o_i = \langle \ve_{i+1}, \vo \rangle.
\]
Using the property $x = \text{GeLU}(x) - \text{GeLU}(-x)$, this can be expanded as:
\[
o_i = \sum_{j=1}^{n+1} \ve_{i+1}[j] \vo[j] = 
\sum_{j=1}^{n+1} \left[ \text{GeLU}(\ve_{i+1}[j] - B_s (2 - 2 \vo[j])) - \text{GeLU}(-\ve_{i+1}[j] - B_s (2 - 2 \vo[j])) \right].
\]
Thus, $o_i$ can be calculated using $O(n)$ hidden dimension. The final output from this layer is given by:
\[
\ve_{o,i+1}^3 = 
\begin{cases} 
    (o_i, \ve_{i+1}) & \text{if \(i > 0\)}, \\
    (0, \vzero) & \text{if \(i = 0\)}.
\end{cases}
\]

\textbf{Predict Next Token.} Given the output embeddings from the last Transformer layer, $\ve_{o,i}^3$, and the word embeddings, the Transformer predicts the next token by finding the nearest word embedding.

\textbf{Precision Requirements.} In this construction, we require $B_s \geq 4p$, which guarantees that constant precision is sufficient for all computations.
\end{proof}

\subsection{Proof for Theorem \ref{thm:con-iter-add-con_dep}}
\label{sec:proof_con-iter-add-con_dep}

\begin{thmbis}{thm:con-iter-add-con_dep}
Fix integers $p\geq 2$ and $c,L\in \mathbb{N}^*$. Consider the tokenizer $\mT_c$ defined in \cref{eq:tokenizer} for processing the input and output sequences. For any polynomial $f$, there exist problem scales $n$ and $k$ such that no constant-precision autoregressive Transformer with $L$ layers and hidden dimension $d < f(n, k)$ can correctly solve the $\operatorname{IterADD}_p(n,k)$ task.
\end{thmbis}

\begin{proof}
Assume, for the sake of contradiction, that there exist integers $p \geq 2$, $L$, and a polynomial $f$, such that for all problem scales $n$ and $k$, there exists a constant-precision autoregressive Transformer with $L$ layers and hidden dimension $d \leq f(n, k)$ that can solve the $\operatorname{IterADD}_p(n, k)$ task correctly.

We now consider the majority function $\operatorname{Maj}(b_1, \dots, b_k)$, where $b_i \in \{0, 1\}$. To establish the contradiction, we construct a reduction from $\operatorname{Maj}(b_1, \dots, b_k)$ to $\operatorname{IterADD}_p(2, k')$, where $k' = p^{\lceil \log_p k \rceil} \leq pk$. Specifically, let $a_i = b_i (p^2 - 1)$ for $i = 1, \dots, k$, and define the remaining terms as follows:
\begin{equation*}
    a_{k+1} + \dots + a_{k'} = p^{\lceil \log_p k \rceil + 1} - (p^2 - 1) \left\lceil \frac{k}{2} \right\rceil.
\end{equation*}
This construction is feasible because:
\begin{equation*}
    p^{\lceil \log_p k \rceil + 1} - (p^2 - 1)\left\lceil \frac{k}{2} \right\rceil 
    \leq (p^{\lceil \log_p k \rceil} - k)(p^2 - 1),
\end{equation*}
which holds for $p \geq 2$. Consequently, the following equivalence relationships hold:
\begin{equation*}
    \operatorname{Maj}(b_1, \dots, b_k) = 1 
    \iff \sum_{i=1}^k b_i \geq \left\lceil \frac{k}{2} \right\rceil
    \iff \sum_{i=1}^{k'} a_i \geq p^{\lceil \log_p k \rceil + 1}
    \iff o_{\lceil \log_p k \rceil + 1} > 0,
\end{equation*}
where $o_{\lceil \log_p k \rceil + 1}$ is the output token corresponding to the final layer of the Transformer.

Now, observe that a bounded-depth, fixed-precision decoder-only Transformer with polynomial hidden dimension, which generates a single token, operates within the complexity class $\mathsf{AC}^0$. However, by the reduction above, solving $\operatorname{IterADD}_p(2, k')$ implies the ability to compute $\operatorname{Maj}$. This leads to a contradiction, as $\operatorname{Maj} \notin \mathsf{AC}^0$. 

Thus, no constant-precision autoregressive Transformer with $L$ layers and hidden dimension $d \leq f(n, k)$ can solve the $\operatorname{IterADD}_p(n, k)$ task in general.
\end{proof}

\subsection{Proof for Theorem \ref{thm:con-mul-mod-con_dep}}
\label{sec:proof:con-mul-mod-con_dep}

\begin{thmbis}{thm:con-mul-mod-con_dep}
Fix integers $p\geq 2$ and $c,L\in \mathbb{N}^*$. Consider the tokenizer $\mT_c$ defined in \cref{eq:tokenizer} for processing the input and output sequences. For any polynomial $f$, there exist problem scales $n$ and $l$ such that no constant-precision autoregressive Transformer with $L$ layers and hidden dimension $d < f(n, l)$ can correctly solve the $\operatorname{MUL}_p(n, l)$ task.
\end{thmbis}

\begin{proof}
Assume, for contradiction, that there exist integers $p \geq 2$, $L$, and a polynomial $f$, such that for all problem scales $n$ and $l$, there exists a constant-precision autoregressive Transformer with $L$ layers and hidden dimension $d \leq f(n, l)$ that can correctly solve the $\operatorname{MUL}_p(n, l)$ task.

Now, consider the majority function $\operatorname{Maj}(c_1, \dots, c_k)$, where $c_i \in \{0, 1\}$. We construct a reduction from $\operatorname{Maj}(c_1, \dots, c_k)$ to $\operatorname{MUL}_p(n, l)$. Specifically, let 
\begin{equation*}
    n = \left( \ceil{\log_p k} + 1 \right) \left( p^{\ceil{\log_p k}} + \left \lfloor \frac{k}{2} \right \rfloor \right) = O(k \log k), \quad
    l = n + \ceil{\log_p k} = O(k \log k).
\end{equation*}
We extend $c_i$ by defining
\begin{equation*}
    k' = p^{\ceil{\log_p k}} + \left \lfloor \frac{k}{2} \right \rfloor, \quad
    c_{k+1} = \cdots = c_{k'} = 1,
\end{equation*}
and construct the sequences $\va$ and $\vb$ as follows:
\begin{equation*}
    \va = c_1 \underbrace{0 \cdots 0}_{\ceil{\log_p n}} c_2 \underbrace{0 \cdots 0}_{\ceil{\log_p n}} \cdots c_{k'} \underbrace{0 \cdots 0}_{\ceil{\log_p n}}, \quad
    \vb = 1 \underbrace{0 \cdots 0}_{\ceil{\log_p n}} 1 \underbrace{0 \cdots 0}_{\ceil{\log_p n}} \cdots 1 \underbrace{0 \cdots 0}_{\ceil{\log_p n}}.
\end{equation*}

Under this construction, the following equivalences hold:
\begin{equation*}
    \operatorname{Maj}(c_1, \dots, c_k) = 1 \iff 
    c_1 + \cdots + c_k \geq \left \lceil \frac{k}{2} \right \rceil \iff 
    c_1 + \cdots + c_{k'} \geq p^{\ceil{\log_p k}} \iff 
    o_{l - 1} > 0,
\end{equation*}
where $o_{l-1}$ denotes the first output token.

Now, observe that a bounded-depth, fixed-precision decoder-only Transformer with polynomial hidden dimension, which generates a single token, operates within the complexity class $\mathsf{AC}^0$. However, by the reduction above, solving $\operatorname{MUL}_p(n, l)$ implies the ability to compute $\operatorname{Maj}$. This leads to a contradiction, as $\operatorname{Maj} \notin \mathsf{AC}^0$. 

Hence, no constant-precision autoregressive Transformer with $L$ layers and hidden dimension $d \leq f(n, l)$ can correctly solve $\operatorname{MUL}_p(n, l)$ task for any problem scale $n$ and $l$.
\end{proof}

\section{Proofs for Section \ref{sec:log-precision}}
\label{app:proof-log}

In this section, we provide the formal proofs of the theorems stated in \Cref{sec:log-precision}. Before proceeding with the proofs, we revisit the role of the tokenizer $\mT_c$. As established in \Cref{sec:tokenization} and \Cref{prop:tokenizer}, it is sufficient to focus on the case of $\mT_1$, where both the input and output sequences are tokenized into single digits. This simplification is crucial for the subsequent analysis and constructions. Notably, this reasoning parallels the argument presented at the beginning of \Cref{app:proof-constant} (Proof of \Cref{sec:constant-precision}).

\subsection{Proof for Theorem \ref{thm:log-int-add-con}}
\label{sec:proof_log-int-add-con}

\begin{thmbis}{thm:log-int-add-con}
Fix integers $p\geq 2$ and $c\in \mathbb{N}^*$. Consider the tokenizer $\mT_c$ defined in \cref{eq:tokenizer} for processing the input and output sequences. There exists a logarithmic-precision Transformer with constant depth and hidden dimension (independent of $n$) that can generate the correct output for any input on the $\operatorname{ADD}_p(n)$ task.
\end{thmbis}

\begin{proof}
    The result in \cref{thm:log-int-add-con} follows as a special case of \cref{thm:log-iter-add-con}. Specifically, by setting $k = 2$ in \cref{thm:log-iter-add-con}, the proof is complete. Observe that in this case, $m = \ceil{\log_p k} = 1$, which implies that the combination of neighboring bits is unnecessary.
\end{proof}

\subsection{Proof for Theorem \ref{thm:log-iter-add-con}}
\label{sec:proof_log-iter-add-con}

\begin{thmbis}{thm:log-iter-add-con}
Fix integers $p\geq 2$ and $c\in \mathbb{N}^*$. Consider the tokenizer $\mT_c$ defined in \cref{eq:tokenizer} for processing the input and output sequences. For any integers $n$ and $k$, there exists a logarithmic-precision Transformer with constant depth and hidden dimension $d$ (independent of $n$ and $k$) that can generate the correct output for any input on the $\operatorname{IterADD}_p(n, k)$ task.

\end{thmbis}

For ease of understanding, we first present an algorithm to compute $\operatorname{IterADD}_p(n, k)$ (\Cref{alg:Iter_ADD}) and prove its correctness. Subsequently, we demonstrate the construction of a constant-size Transformer with logarithmic precision to simulate \Cref{alg:Iter_ADD}.

\begin{algorithm}
 \SetKwInOut{Input}{Input}
 \SetKwInOut{Output}{Output}
\caption{$\operatorname{IterADD}_p(n, k)$ Algorithm}\label{alg:Iter_ADD}
\SetAlFnt{\small\sf}
   \Input{ $k$ $p$-adic numbers $\va_1,\cdots,\va_k$, each of maximum length $n$}
   \Output{ The sum of the inputs ${\vo}$}
   \BlankLine
   $m = \lceil \log_p k \rceil$\;
   Compute the sum of each bit: $r_j=\sum_{i\in[k]} a_{ij}$ for $j = 0, \cdots, n - 1$\;
   Combine neighboring $m$ bits: $$s_i=\sum_{j=0}^{m-1} r_{ik+j}p^{j}$$
   for $i = 0, \cdots, \lfloor n / m \rfloor$\;
   Decompose $s_i$: $s_i = b_i p^m + q_i$, where $q_i\in[0,p^m-1]$ and $b_i,q_i\in\mathbb{N}$\;
   Initialize $c_0 = 0$\;
   \ForEach{$i = 0, \cdots, \lfloor n / m \rfloor$}{
   Compute carry bits $\vc$:\\
    $i_\wedge=\max \{j\leq i \mid q_j+b_{j-1}\geq p^m\}$\;
    $i_\vee = \max \{j\leq i \mid q_j+b_{j-1}\leq p^m - 2\}$\;
    $c_i = \vone_{i_\wedge > i_\vee}$\;
   }    
   Compute the $p^m$-adic outcome $\tilde \vo$: $\tilde{o}_i=(q_i + b_{i-1} + c_{i-1})\mod p^m$ for $i = 0, \cdots, \lfloor n / m \rfloor + 1$\;
   Covert $p^m$-adic $\tilde \vo$ to $p$-adic $\vo$: 
   $${o}_i = \left\lfloor \frac{\tilde o_j\mod p^{(l+1)}}{p^l} \right\rfloor$$ for $i=jk+l$, where $l\in \{0, \cdots, k-1\}$, $j\in \sZ$\;
\end{algorithm}

\begin{lemma}[Algorithm for $\operatorname{IterADD}_p(n, k)$]
\Cref{alg:Iter_ADD} computes $\vo = \va_1 + \cdots + \va_k$ for any inputs $\va_1, \ldots, \va_k$.
\label{lmm:alg:iter:add}
\end{lemma}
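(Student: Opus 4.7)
The plan is to verify \Cref{alg:Iter_ADD} by reducing iterated addition, via a change of base and a bounded carry-lookahead, to an instance of integer addition that the previous lemma (\Cref{lmm:alg:add}) already handles. First, I would verify the algebraic identity that regrouping the bitwise column sums gives the total, i.e.\
\begin{equation*}
   \sum_{i=1}^{k}\va_i \;=\; \sum_{j=0}^{n-1} r_j\, p^{j} \;=\; \sum_{i\geq 0} s_i\, p^{im}
\end{equation*}
after collecting blocks of $m$ consecutive bits. This is just rearranging the double sum and is the easy step.

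Next, I would establish the key size bounds that make the rest of the argument work. Since $k\leq p^{m}$ and each $a_{ij}\leq p-1$, we get $r_j\leq (p-1)p^{m}$, and hence
\begin{equation*}
   s_i \;\leq\; (p-1)p^{m}\sum_{j=0}^{m-1}p^{j} \;=\; p^{m}(p^{m}-1) \;<\; p^{2m}.
\end{equation*}
This guarantees that the decomposition $s_i=b_ip^{m}+q_i$ with $q_i\in[0,p^{m}-1]$ gives $b_i\in[0,p^{m}-1]$ as well. Consequently, after substituting and shifting the $b_i$ terms by one block, the full sum becomes
\begin{equation*}
   \sum_{i\geq 0}(q_i + b_{i-1})\, p^{im}, \qquad b_{-1}=0,
\end{equation*}
and each summand $q_i+b_{i-1}<2p^{m}$. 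So the problem reduces to adding two base-$p^{m}$ numbers $(\ldots q_1 q_0)_{p^m}$ and $(\ldots b_1 b_0)_{p^m}$, where each column sum is at most $2(p^{m}-1)<2p^{m}$, exactly the structure handled by \Cref{alg:ADD}.

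With that reduction in place, I would invoke \Cref{lmm:alg:add} (applied with base $p^{m}$) to conclude that the carry-lookahead formulas
\[
    i_\wedge=\max\{j\leq i:q_j+b_{j-1}\geq p^{m}\},\quad i_\vee=\max\{j\leq i:q_j+b_{j-1}\leq p^{m}-2\},\quad c_i=\vone_{i_\wedge>i_\vee}
\]
produce the correct carries, and hence $\tilde{o}_i=(q_i+b_{i-1}+c_{i-1})\bmod p^{m}$ gives the correct base-$p^{m}$ digits of $\va_1+\cdots+\va_k$. Finally, I would verify the base-conversion step: writing the index $i=jm+l$ with $l\in\{0,\ldots,m-1\}$, the formula $o_i=\lfloor (\tilde o_j \bmod p^{l+1})/p^{l}\rfloor$ extracts the $l$-th base-$p$ digit from the base-$p^{m}$ digit $\tilde o_j$, which is immediate from $\tilde o_j=\sum_{l=0}^{m-1}o_{jm+l}p^{l}$.

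The main obstacle I anticipate is simply being careful with off-by-one indices: the index shift introduced by pairing $q_i$ with $b_{i-1}$, the range of $i$ at the high end (the algorithm writes $i=0,\ldots,\lfloor n/m\rfloor$ but the sum may produce one extra block from a top carry, so I would need to check the output length $\lfloor n/m\rfloor+1$ suffices because $k\leq p^{m}$), and making sure the final base conversion covers all indices up to the true length of $\vo$. Beyond these bookkeeping details, no deep argument is needed since correctness of the carry mechanism is inherited wholesale from \Cref{lmm:alg:add}.
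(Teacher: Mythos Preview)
Your proposal is correct and follows essentially the same approach as the paper: bound $s_i$ so that the decomposition $s_i=b_ip^m+q_i$ yields $b_i<p^m$ and $q_i+b_{i-1}\leq 2(p^m-1)$, then argue that the carry-lookahead formulas produce the correct base-$p^m$ digits, and finally convert back to base $p$. The only minor presentational difference is that you plan to invoke \Cref{lmm:alg:add} with base $p^m$ directly, whereas the paper re-derives the carry recursion and its closed-form in place; your shortcut is valid and arguably cleaner.
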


\begin{proof}
    The initial four steps of the algorithm transform $p$-adic addition into $p^m$-adic addition. This transformation allows the sum of $k$ numbers to be represented as $\sum_{i} s_i p^{im}$, where $s_i$ are intermediate coefficients.

    At this stage, $s_i \in [0, kp^m)$. To ensure the final results are accurate, we must account for carry-over effects such that the outputs $\tilde{o}_i$ remain within the range $[0, p^m-1]$. Each $s_i$ can be decomposed as $s_i = b_ip^m + q_i$, where $q_i \in [0, p^m-1]$ and $b_i < k \leq p^m$. Consequently, the overflow $b_i$ propagates only \textit{directly} to the next subsequent digit $q_{i+1}$. Notably, $q_{i+1} + b_i \leq 2(p^m - 1)$. 

    Let $\vc$ denote the vector recording carry-over effects at each position $i$. The carry-over can be computed iteratively as:
    \begin{equation}
        \begin{aligned}
            & c_{-1} = 0, \\
            & c_0 = \vone_{q_0 + b_{-1} \geq p^m} \; (b_{-1} := 0), \\
            & c_1 = \big(c_0 \cdot \vone_{q_1 + b_0 \geq p^m - 1}\big) \vee \vone_{q_1 + b_0 \geq p^m}, \\
            & \cdots \\
            & c_i = \big(c_{i-1} \cdot \vone_{q_i + b_{i-1} \geq p^m - 1}\big) \vee \vone_{q_i + b_{i-1} \geq p^m}.
        \end{aligned}
    \end{equation}

    To avoid recursive computation, the carry-over can also be derived using:
    \begin{equation}
    \label{eq:iter:add:carry:on}
        \begin{aligned}
            & i_\wedge = \max \{j \leq i \mid q_j + b_{j-1} \geq p^m\}, \\
            & i_\vee = \max \{j \leq i \mid q_j + b_{j-1} \leq p^m - 2\}, \\
            & c_i = \vone_{i_\wedge > i_\vee}.
        \end{aligned}
    \end{equation}

    Alternatively, this can be expressed as:
    \begin{equation}
        c_i = \bigvee_{0 \leq j \leq i} \left[\vone_{q_j + b_{j-1} \geq p^m} \wedge \bigwedge_{j \leq k \leq i} \vone_{q_k + b_{k-1} \geq p^m - 2}\right].
    \end{equation}

    Here, $i_\wedge$ identifies the highest bit contributing a carry to the $i$-th position, while $i_\vee$ identifies the highest bit below $i$ such that carry propagation from bits below $i_\vee$ does not affect higher bits. Thus, $c_i = 1$ if and only if $i_\wedge > i_\vee$.

    Once the carry-over vector $\vc$ is determined, the $p^m$-adic sum can be computed as:
    \begin{equation}
        \begin{aligned}
            & \tilde{o}_0 = q_0, \\
            & \tilde{o}_1 = (q_1 + b_0 + c_0) \mod p^m, \\
            & \cdots \\
            & \tilde{o}_i = (q_i + b_{i-1} + c_{i-1}) \mod p^m.
        \end{aligned}
    \end{equation}

    Finally, to convert the $p^m$-adic representation back to a $p$-adic number $\Tilde{\vo}$, we perform the following modulus operation:
    \begin{equation*}
        o_i = \left\lfloor \frac{\tilde{o}_j \mod p^{(l+1)}}{p^l} \right\rfloor,
    \end{equation*}
    for $i = jk + l$, where $l \in \{0, \dots, k-1\}$ and $j \in \mathbb{Z}$.

    Therefore, the output $\vo$ is precisely the sum of the $k$ input $p$-adic numbers, and the algorithm in \Cref{alg:Iter_ADD} correctly computes $\operatorname{Iter ADD}_p(\va_1, \ldots, \va_k)$ for all $\va_1, \ldots, \va_k$.
\end{proof}

Now we present the proof of \Cref{thm:log-iter-add-con}.

\begin{proof}[Proof of \Cref{thm:log-iter-add-con}]
We demonstrate that a log-precision transformer, with constant depth, a fixed number of attention heads, and constant embedding dimensions, is capable of simulating \Cref{alg:Iter_ADD}. As a result, this model can reliably produce the correct output for any input integers $\va_1, \dots, \va_k$.

    \textbf{Initial Embeddings: } The total length of the input sequence is at most $k(n + 1)$. Tokens in the sequence are divided into two categories: numeric tokens ($0, 1, \dots, p - 1$) and auxiliary tokens ($+$, $=$, \texttt{<SOS>}, \texttt{<EOS>}). Given the parameters $k$ and $n$, we define the parameter $m = \lceil \log_p k \rceil$, as specified in \Cref{alg:Iter_ADD}. The embeddings for these tokens are constructed as follows:

    \begin{itemize}
        \item \textbf{Embedding of input token $a_{i,j}$: } 
        \[
        \ve_{i,j}^0 = \left( a_{i,j}, 0, 0, i, j, j \mod m, \floor{\frac{j}{m}}, p^{j \mod m}, p^{-(j \mod m)}, \ape_{i,j} \right).
        \]

        \item \textbf{Embedding of the $i$-th ``$\operatorname{+}$'' token: } 
        \[
        \ve_{i,+}^0 = \left( 0, 1, 0, i, -1, -1, -1, 0, 0, \ape_{i,+} \right).
        \]

        \item \textbf{Embedding of the ``$\operatorname{=}$'' token: } 
        \[
        \ve_{=}^0 = \left( 0, 1, 0, k+1, -1, -1, -1, 0, 0, \ape_{=} \right).
        \]

        \item \textbf{Embedding of the $\texttt{<SOS>}$ token: } 
        \[
        \ve_\texttt{<SOS>}^0 = \left( 0, 1, 0, 0, -1, -1, -1, 0, 0, \ape_{\texttt{<SOS>}} \right).
        \]

        \item \textbf{Embedding of the $\texttt{<EOS>}$ token: } 
        \[
        \ve_\texttt{<EOS>}^0 = \left( 0, 0, 1, 0, -1, -1, -1, 0, 0, \ape_{\texttt{<EOS>}} \right).
        \]

        \item \textbf{Embedding of output token $o_i$: } 
        \[
        \ve_{o,i}^0 = \left( o_i, 0, 0, 0, i, i \mod m, \floor{\frac{i}{m}}, p^{i \mod m}, p^{-(i \mod m)}, \ape_{o,i} \right).
        \]
    \end{itemize}

Here, $\ape_{\cdots}$ represents the absolute positional encoding. In this construction, the first three dimensions of each embedding correspond to the word embedding, while the remaining six dimensions capture the positional embedding.

\textbf{Block 1.} The first block of the Transformer computes the following quantities:

\begin{enumerate}
    \item $l_{i,j}$: The number of preceding tokens (inclusive) $a_{i',j'}$ satisfying $i' = i$ and $\floor{\frac{j'}{m}} = \floor{\frac{j}{m}}$. The value $l_{i,j}$ is defined only for input number tokens $a_{i,j}$. If undefined, we set $l = -1$.
    \item $f_{i,j}$: Defined as $f_{i,j} = 1$ if no preceding tokens (exclusive) $a_{i',j'}$ exist such that $\floor{\frac{j'}{m}} = \floor{\frac{j}{m}}$; otherwise, $f_{i,j} = 0$. This value is defined only for input number tokens $a_{i,j}$, and if undefined, we set $f = -1$. 
\end{enumerate}

To compute $l_{i,j}$, we define the query, key, value, and $r$ in \Cref{sec:lem-attn} as follows:
\begin{itemize}
    \item Query: $\vq_{i,j} = \left( -1, 2i, -i^2, -1, 2\floor{\frac{j}{m}}, -\floor{\frac{j}{m}}^2 \right)$.
    \item Key: $\vk_{i',j'} = \left( i'^2, i', 1, \floor{\frac{j'}{m}}^2, \floor{\frac{j'}{m}}, 1 \right)$.
    \item Value: $\vv_{i',j'} = (\ape_{i',j'})$.
    \item $r$: $r_{i',j'} = -\ape_{i',j'}$.
\end{itemize}

Using \Cref{lemma:MLP_multip}, the components of the query and key can be computed by preceding MLP layers. The result of the dot product is given by:

\[
\langle \vq_{i,j}, \vk_{i',j'} \rangle = -\left( \floor{\frac{j'}{m}} - \floor{\frac{j}{m}} \right)^2 - (i' - i)^2.
\]

This implies that $\langle \vq_{i,j}, \vk_{i',j'} \rangle = 0$ if $\floor{\frac{j'}{m}} = \floor{\frac{j}{m}}$ and $i = i'$, and $\langle \vq_{i,j}, \vk_{i',j'} \rangle \leq -1$ otherwise. By \Cref{lem:attn-log-copy}, one attention head can be used to copy the absolute position $j''$ of the first token satisfying these conditions. The value of $l_{i,j}$ is then computed as $j - j'' + 1$. 

To compute $f_{i,j}$, we redefine the query, key, and $r$ as follows:
\begin{itemize}
    \item Query: $\vq_{i,j} = \left( -1, 2\floor{\frac{j}{m}}, -\floor{\frac{j}{m}}^2 \right)$.
    \item Key: $\vk_{i',j'} = \left( \floor{\frac{j'}{m}}^2, \floor{\frac{j'}{m}}, 1 \right)$.
    \item Value: $\vv_{i',j'} = (\ape_{i',j'})$.
    \item $r$: $r_{i',j'} = -\ape_{i',j'}$.
\end{itemize}

In this case, the dot product is given by:

\[
\langle \vq_{i,j}, \vk_{i',j'} \rangle = -\left( \floor{\frac{j'}{m}} - \floor{\frac{j}{m}} \right)^2.
\]

This yields $\langle \vq_{i,j}, \vk_{i',j'} \rangle = 0$ if $\floor{\frac{j'}{m}} = \floor{\frac{j}{m}}$, and $\langle \vq_{i,j}, \vk_{i',j'} \rangle \leq -1$ otherwise. By \Cref{lem:attn-log-copy}, one attention head can copy the absolute position $j''$ of the first token satisfying these conditions. The value $f_{i,j}$ is then determined by checking whether $j'' = j$. Specifically, we evaluate:

\[
\vone_{j'' = j} = \text{ReLU}[1 - (j - j'')^2],
\]
which allows $f_{i,j}$ to be computed by a constant-width MLP via \Cref{lemma:MLP_multip}.

Finally, for undefined values, $l$ and $f$ are set to $-1$ in the MLP stage using conditional selection (\Cref{lemma:MLP_select}) based on positional embedding information. In summary, the embeddings generated in this block are $\ve^1 = (l, f)$, and these embeddings are concatenated with the original input embeddings.

\textbf{Block 2.} The second block of the Transformer is designed to compute the first three lines of \Cref{alg:Iter_ADD}. Specifically, this block utilizes the attention mechanism to aggregate the adjacent $m$ bits and derive $s_i$. For each token $a_{i,j}$, let $t_{i,j}$ denote the number of preceding tokens (including $a_{i,j}$ itself) $a_{i',j'}$ such that $\floor{\frac{j}{m}} = \floor{\frac{j'}{m}}$. This block computes the following values:

\begin{enumerate}
    \item $\frac{1}{t_{i,j}}$, where $t_{i,j}$ is as defined above. If $t_{i,j}$ is undefined, its value is set to $-1$.
    \item $c_{i,j}$, the mean value computed over $a_{i',j'} p^{j' \mod m}$ for previous tokens (including $a_{i,j}$) $a_{i',j'}$, where $\floor{\frac{j'}{m}} = \floor{\frac{j}{m}}$. If this value is undefined, it is also set to $-1$.
\end{enumerate}

Using these, we derive $s_w = c_{i, mw} t_{i, mw}$, where $i$ is the largest index such that the length of $\va_i$ exceeds $mk$.

To compute the first value, the query, key, and value vectors are defined as follows:
\begin{itemize}
    \item Query: $\vq_{i,j} = \left( -1, 2\floor{\frac{j}{m}}, -\floor{\frac{j}{m}}^2 \right)$.
    \item Key: $\vk_{i',j'} = \left( \floor{\frac{j'}{m}}^2, \floor{\frac{j'}{m}}, 1 \right)$.
    \item Value: $\vv_{i',j'} = (f_{i',j'})$.
\end{itemize}

Using \Cref{lemma:MLP_multip}, the components of the query and key can be computed by preceding MLP layers. The result of the dot product is given by: $\langle \vq_{i,j}, \vk_{i',j'} \rangle = - \left( \floor{\frac{j'}{m}} - \floor{\frac{j}{m}} \right)^2$. This result implies that $\langle \vq_{i,j}, \vk_{i',j'} \rangle = 0$ when $\floor{\frac{j'}{m}} = \floor{\frac{j}{m}}$, and $\langle \vq_{i,j}, \vk_{i',j'} \rangle \leq -1$ otherwise. By the definition of $f_{i,j}$ and \Cref{lem:attn-log-mean}, the output of the attention mechanism is $\frac{1}{t_{i,j}}$, as required.

To compute the second value, we redefine the query, key, and value vectors as follows:
\begin{itemize}
    \item Query: $\vq_{i,j} = \left( -1, 2\floor{\frac{j}{m}}, -\floor{\frac{j}{m}}^2 \right)$.
    \item Key: $\vk_{i',j'} = \left( \floor{\frac{j'}{m}}^2, \floor{\frac{j'}{m}}, 1 \right)$.
    \item Value: $\vv_{i',j'} = (a_{i',j'} p^{j' \mod m})$.
\end{itemize}

Similar to the computation of the first value, we use \Cref{lemma:MLP_multip} to compute the components of the query and key. In this case, the dot product is given by: $\langle \vq_{i,j}, \vk_{i',j'} \rangle = - \left( \floor{\frac{j'}{m}} - \floor{\frac{j}{m}} \right)^2$. Thus, $\langle \vq_{i,j}, \vk_{i',j'} \rangle = 0$ when $\floor{\frac{j'}{m}} = \floor{\frac{j}{m}}$, and $\langle \vq_{i,j}, \vk_{i',j'} \rangle \leq -1$ otherwise. By applying \Cref{lem:attn-log-mean}, the attention output is $c_{i,j}$, as required.

Finally, for undefined values, we assign $-1$ during the MLP stage by employing conditional selection, as outlined in \Cref{lemma:MLP_select}, utilizing information encoded in the positional embeddings. 

In summary, the new embeddings generated in this block can be expressed as $\ve^2 = \left( \frac{1}{t}, c \right)$. These embeddings are subsequently concatenated with the original embeddings.

\textbf{Block 3.} The third block of the Transformer computes the value of $c_{i,j} t_{i,j}$. This is achieved by first determining $t_{i,j}$ via the attention layer and $\frac{1}{t_{i,j}}$ from the previous block. Subsequently, $c_{i,j} t_{i,j}$ is computed using \Cref{lemma:MLP_multip}.

Notice that $t_{i,j}$ does not exceed the absolute positional value of the current token. We define the query, key, and value vectors as follows:
\begin{itemize}
    \item \textbf{Query:} $\vq_{i,j} = \left( \frac{1}{t_{i,j}^2}, -\frac{2}{t_{i,j}}, 1 \right)$
    \item \textbf{Key:} $\vk_{i',j'} = \left( \ape_{i',j'}^2, \ape_{i',j'}, 1 \right)$
    \item \textbf{Value:} $\vv_{i',j'} = (\ape_{i',j'})$
\end{itemize}
These vectors can be constructed using \Cref{lemma:MLP_multip}. It follows that the inner product 
$$\langle \vq_{i,j}, \vk_{i',j'} \rangle = - \left( \frac{\ape_{i',j'}}{t_{i,j}} - 1 \right)^2,$$
which implies $\langle \vq_{i,j}, \vk_{i',j'} \rangle = 0$ if $\ape_{i',j'} = t_{i,j}$ and $\langle \vq_{i,j}, \vk_{i',j'} \rangle \leq -\frac{1}{n^2k^2}$ otherwise, given that $t_{i,j} \leq nk$. By leveraging \Cref{lem:attn-log-mean}, the attention output is confirmed to be $t_{i,j}$, as required.

Finally, the computation of $c_{i,j} t_{i,j}$ is performed via the subsequent MLP layer. In summary, the embeddings generated in this block are represented as $\ve^3 = (ct)$. These new embeddings are concatenated with the original embeddings to produce the final output of this block.

\textbf{Block 4.} This block of the Transformer corresponds to the fourth step in \Cref{alg:Iter_ADD}, which decomposes $c_{i,j} t_{i,j}$ as $b_{i,j} p^m + q_{i,j}$. It is important to observe that $b_{i,j} \leq i$, meaning that $b_{i,j}$ does not exceed the absolute positional index of the current token. To achieve this decomposition, we define the query, key, and value as follows:
\begin{itemize}
    \item \textbf{Query:} $\vq_{i,j} = \left( -(c_{i,j} t_{i,j} + \frac{1}{2})^2 , 2 p^m (c_{i,j} t_{i,j} + \frac{1}{2}), -p^{2m}\right)$
    \item \textbf{Key:} $\vk_{i',j'} = \left( 1, \ape_{i',j'} - \frac{1}{2}, (\ape_{i',j'} - \frac{1}{2})^2 \right)$
    \item \textbf{Value:} $\vv_{i',j'} = \ape_{i',j'}$
\end{itemize}

The above components can be computed using \Cref{lemma:MLP_multip}. Consequently, the inner product of the query and key is given by:
\begin{equation*}
    \langle \vq_{i,j}, \vk_{i',j'} \rangle = - \left[ c_{i,j} t_{i,j} - \left( \ape_{i',j'} - \frac{1}{2} \right) p^m + \frac{1}{2} \right]^2.
\end{equation*}

This expression implies that $|\langle \vq_{i,j}, \vk_{i',j'} \rangle| \leq \left( \frac{p^m - 1}{2} \right)^2$ if $\ape_{i',j'} = \floor{\frac{c_{i,j} t_{i,j}}{p^m}}$, and $\langle \vq_{i,j}, \vk_{i',j'} \rangle \leq -\left( \frac{p^m + 1}{2} \right)^2$ otherwise. By applying \Cref{lem:attn-log-single-copy}, we obtain:
\begin{equation*}
    c = \frac{(p^m + 1)^2 - (p^m - 1)^2}{(p^m - 1)^2} \geq \frac{4}{p^m}.
\end{equation*}

From this, it follows that $1/c = O(p^m) = O(k)$. Hence, we can design the query, key, and value such that the attention output satisfies $\floor{\frac{c_{i,j} t_{i,j}}{p^m}} = b_{i,j}$. Finally, $q_{i,j}$ can be computed as $q_{i,j} = c_{i,j} t_{i,j} - p^m b_{i,j}$ using the subsequent MLP. The embeddings generated by this block are thus given by $\ve^4 = (b, q)$.

\textbf{Block 5.} This block of the Transformer computes $q_{w+1} + b_w$ for $s_w$. Recall that $s_w = c_{i, mw} t_{i, mw}$, where $i$ is the largest index such that the length of $\va_i$ is greater than $mw$. The goal is to compute these values at their corresponding positions.

First, we use the attention mechanism to copy $q_{w+1}$ for the token $\va_i$ defined above. Note that it is always possible to retrieve the correct value because the position associated with the correct value of $s_{w+1}$ precedes that of $s_k$. To achieve this, we utilize the attention mechanism to copy from the position containing the value $s_{w+1}$. This is implemented by appropriately configuring the query, key, value, and $r$ as described in \Cref{sec:lem-attn}:

\begin{itemize}
    \item Query: $\vq_{i,j} = \left( -1, 2\floor{\frac{j}{m}}, -\floor{\frac{j}{m}}^2, -1 \right)$.
    \item Key: $\vk_{i',j'} = \left( (\floor{\frac{j'}{m}} - 1)^2, \floor{\frac{j'}{m}} - 1, 1, (j' \mod m)^2 \right)$.
    \item Value: $\vv_{i',j'} = (q_{i', j'}, \floor{\frac{j'}{m}})$.
    \item $r$: $r_{i',j'} = \ape_{i', j'}$.
\end{itemize}

The values required for the query or key can be computed using previous MLPs, as shown in \Cref{lemma:MLP_multip}. The dot product $\langle \vq_{i,j}, \vk_{i',j'} \rangle$ evaluates to 
\[
\langle \vq_{i,j}, \vk_{i',j'} \rangle = - \left( \floor{\frac{j'}{m}} - \floor{\frac{j}{m}} - 1 \right)^2 - (j' \mod m)^2.
\]
This implies $\langle \vq_{i,j}, \vk_{i',j'} \rangle = 0$ if $\floor{\frac{j'}{m}} = \floor{\frac{j}{m}} + 1$ and $j' \mod m = 0$, and $\langle \vq_{i,j}, \vk_{i',j'} \rangle \leq -1$ otherwise.

Using \Cref{lem:attn-log-copy}, one attention head suffices to copy the values $q_{i',j'}$ and $\floor{\frac{j'}{m}}$ from the last token satisfying the conditions. Consequently, the first dimension of the attention output equals $q_{w+1}$ if an input number with length greater than $m(w+1)$ exists, as required. Otherwise, $q_{w+1}$ should be zero, and the attention output remains undefined. These two cases can be distinguished by inspecting the second dimension of the attention output: if no input number has a length greater than $m(w+1)$, the second dimension of the attention output is at most $\floor{\frac{j}{m}}$. Using \Cref{lemma:MLP_select}, we can identify these cases and set $q_{w+1} = 0$ when necessary. 

Finally, a subsequent MLP computes the correct value of $q_{w+1} + b_w$ for $s_w$. Additionally, this MLP calculates the indicators $\vone_{q_{w+1} + b_w \geq p^m}$, $\vone_{q_{w+1} + b_w \leq p^m - 2}$, $\vone_{b_w \geq p^m}$, and $\vone_{b_w \leq p^m - 2}$ using the formulation:
\[
\vone_{q_{w+1} + b_w \geq p^m} = \text{ReLU}[q_{w+1} + b_w - (p^m - 1)] - \text{ReLU}[q_{w+1} + b_w - p^m],
\]
as described in \Cref{lemma:MLP_relu}.

To summarize, the embeddings generated in this block are as follows:
\begin{itemize}
    \item For positions with the correct value of $s_w$:
    \[
    \ve^5 = (q_{w+1} + b_w, b_w, \vone_{q_{w+1} + b_w \geq p^m}, \vone_{q_{w+1} + b_w \leq p^m - 2}, \vone_{b_w \geq p^m}, \vone_{b_w \leq p^m - 2}, w).
    \]
    \item For all other positions:
    \[
    \ve^5 = (-1, -1, -1, -1, -1, -1, -1).
    \]
    This can be achieved by filtering out infeasible values using \Cref{lemma:MLP_select}.
\end{itemize}

\textbf{Block 6.} This block of the Transformer computes the following values for positions with the correct value of $s_w$:
\begin{itemize}
    \item The smallest $w_1 \geq w$ such that $\vone_{q_{w+1} + b_w \geq p^m} = 1$. 
    \item The smallest $w_2 \geq w$ such that $\vone_{q_{w+1} + b_w \leq p^m - 2} = 1$.
\end{itemize}
Both calculations rely on the standard COPY operation, which can be implemented using \Cref{lem:attn-log-copy}. To ensure the validity of $w_1$ and $w_2$ (i.e., the existence of such indices), we COPY the values $\vone_{q_{w_1+1} + b_{w_1} \geq p^m}$ and $\vone_{q_{w_2+1} + b_{w_2} \leq p^m - 2}$, verifying that they equal $1$. Invalid values can then be filtered out using an MLP, as described in \Cref{lemma:MLP_select}. 

The embeddings generated in this block are as follows:
\begin{itemize}
    \item For positions with the correct value of $s_w$: $\ve^6 = (w_1, w_2)$.
    \item For all other positions: $\ve^6 = (-1, -1)$. (This can be achieved by filtering infeasible values using \Cref{lemma:MLP_select}.)
\end{itemize}

\textbf{Block 7.} The last block of the Transformer executes the final four steps of \cref{alg:Iter_ADD}. This layer calculates the carry-over bits $\vc$ and $p^m$-adic representation of the final output $\vo$ via the attention mechanism and the MLP, subsequently converting the $p^m$-adic number into a $p$-adic number.

The computation of carry-on bits, as described in \Cref{eq:iter:add:carry:on} within \cref{alg:Iter_ADD}, adheres to the following equations:
\begin{equation}
       \begin{aligned}
            & i_\wedge=\max \{w\leq i \mid q_w+b_{w-1}\geq p^m\}, \\
            & i_\vee = \max \{w\leq i \mid q_w+b_{w-1}\leq p^m-2\}, \\
            & c_i = \vone_{i_\wedge > i_\vee}.
        \end{aligned}
    \end{equation}
In the attention layer, operations are restricted to output tokens and other tokens will maintain the embeddings via the residual connection and the filter operation by MLP. Let's consider the token $o_{(i+1)m+j+1}$, where $j \in \{0,\cdots, m - 1\}$, we want to predict the next token ${o}_{(i+1)k+j}$. The model executes the COPY operation, duplicating the previous embeddings to extract $q_{i+1} + b_i$, $i_{\wedge}$, and $i_{\vee}$. The extraction is similar to previous blocks, but here we only need to focus on positions with correct value of $s_w$. To find out the value of $i_{\wedge}, i_{\vee}$, we first COPY the embedding of the position with the correct value of $s_i$, and find the minimum $w'$ which shares the same value of $w_1,w_2$ with $s_i$. Again, this can be implemented by several COPY operation with \Cref{lem:attn-log-copy}.

The carry-over bit $c_i$ and the $p^m$-adic results $\tilde o_{i+1}$ are then computed as follows:
\begin{equation*}
c_i = \vone_{i_\wedge > i_\vee}, \;\tilde o_{i+1} = b_i + c_i + q_{i+1}.
\end{equation*}

This computation is facilitated by a constant-size MLP. Subsequently, for the output token $\Tilde{o}_{(i+1)k+j}$, the result $o_{(i+1)k+j} = \tilde o_{i+1} \mod p^{j+1}$ is required. We first calculate $\tilde o_{i+1} / p^{j+1}$ using the positional embedding and \Cref{lemma:MLP_multip}, then calculate $\floor{\tilde o_{i+1} / p^{j+1}}$ using the similar fashion to what we did in Block 4, and then calculate $\tilde o_{i+1} \mod p^{j+1}$ using MLP. Finally, we can get the value of $\floor{\frac{\tilde o_{i+1} \mod p^{j+1}}{p^j}}$ using the similar fashion to what we did in Block 4.

Upon outputting the token ${o}_0$, the model anticipates the \texttt{<EOS>} token, employing an MLP to filter the hidden embeddings and output the word embedding for \texttt{<EOS>}. 
Thus, the final output from this layer is characterized by the equation:
\begin{equation*}
    \ve_{o,i}^7 = \begin{cases}
    ({o}_{i-1},i,0) & \text{if \(i > 0\)}, \\
    (-1, -1, 1) & \text{if \(i = 0\)}.
\end{cases}
\end{equation*}

    \textbf{Predict Next Token. } Given the output embeddings of the last transformer layer $\ve_{o,i}^7$, and the word embeddings, the transformer can simply predict the next token by finding the nearest word embeddings.

    In this construction, the norm of the parameters is bounded by $poly(n,k)$, therefore, this construction can be implemented by a log-precision transformer with arbitrarily small error.
\end{proof}

\subsection{Proof for Theorem \ref{thm:log-mul-mod-con}}
\label{sec:proof-log-mul-mod-con}
\begin{thmbis}{thm:log-mul-mod-con}
Fix integers $p\geq 2$ and $c\in \mathbb{N}^*$. Consider the tokenizer $\mT_c$ defined in \cref{eq:tokenizer} for processing the input and output sequences. For any integers $n$ and $l\leq 2n$, there exists a logarithmic-precision Transformer with constant depth (independent of $n$ and $k$) and hidden dimensions $O(n^2)$ that can generate the correct output for any input on the $\operatorname{MUL}_p(n, l)$ task.

\end{thmbis}

Here, we first describe an algorithm to perform $\operatorname{MUL}_p(n, l)$ (\Cref{alg:Mul}) and prove the correctness of \Cref{alg:Mul}. Then, we construct a Transformer with the configurations in \Cref{thm:log-mul-mod-con} capable for simulating \Cref{alg:Mul}.

\begin{algorithm}
 \SetKwInOut{Input}{Input}
 \SetKwInOut{Output}{Output}
\caption{$\operatorname{MUL}_p(n, l)$ Algorithm}\label{alg:Mul}
\SetAlFnt{\small\sf}
   \Input{ Two $p$-adic numbers $\va, \vb$ no longer than $n$ bits, truncating length $l$}
   \Output{ ${\vo} := \va \vb \mod p^l$}
   \BlankLine
   $m = \lceil \log_p n \rceil + 1$\;
   Compute the product of each pair of bits: $d_{i,j} = a_i b_j$\;
   Compute each bit as $$r_j = \sum_{k=\max(0, j - (n-1))}^{\min(n-1, j)} d_{k,j-k}$$
   for $j = 0, \cdots, 2n-1$\;
   Combine neighboring $m$ bits: $$s_i=\sum_{j=0}^{m-1} r_{ik+j}p^{j}$$
   for $i = 0, \cdots, \lfloor (2n-1) / m \rfloor$\;
   Decompose $s_i$ by $s_i = b_i p^m + q_i$, where $q_i\in[0,p^m-1]$ and $b_i,q_i\in\mathbb{N}$\;
   $b_{-1} = 0$\;
   \ForEach{$i = 0, \cdots, \lfloor (2n-1) / m \rfloor$}{
   $f_i = \vone_{q_i + b_{i-1} \geq p^m}$\;
   $g_i = \vone_{q_i + b_{i-1} \geq p^m -2}$\;
   }    
   Compute the carry-on bits $\vc$:
   $$c_i = \bigvee_{0 \leq j \leq i} \left(f_j \wedge \bigwedge_{j \leq k \leq i} g_k\right)$$
   for $i = 0, \cdots, \lfloor (2n-1) / m \rfloor$\;
   
   Compute the $p^m$-adic outcome $\tilde \vo$: $\tilde{o}_i=(q_i + b_{i-1} + c_{i-1})\mod p^m$ for $i = 0, \cdots, \lfloor (2n-1) / m \rfloor$\;
   Covert $p^m$-adic $\tilde \vo$ to $p$-adic $\vo$: 
   $${o}_i = \left\lfloor \frac{\tilde o_j\mod p^{(l+1)}}{p^l} \right\rfloor$$ for $i=jk+l$ where $l\in \{0, \cdots, k-1\}$, $j\in \sZ$\;
\end{algorithm}

\begin{lemma}[An algorithm to perform $\operatorname{MUL}_p(n, l)$] 
\Cref{alg:Mul} outputs $\vo = \va\vb \mod p^l$ for all inputs $\va,\vb$.
\label{lmm:alg:mul}
\end{lemma}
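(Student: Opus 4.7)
The plan is to prove \cref{lmm:alg:mul} by reducing the correctness of \cref{alg:Mul} to algebraic identities together with the carry-propagation analysis already established in \cref{lmm:alg:iter:add}. I would structure the argument in four stages, moving from the raw product expansion through the $p^m$-adic reduction to the final $p$-adic output.

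First, I would verify the identity $\va\vb = \sum_{j=0}^{2n-1} r_j p^j$. Expanding the product gives $\va\vb = \sum_{i=0}^{n-1}\sum_{j'=0}^{n-1} a_i b_{j'} p^{i+j'}$, and grouping terms by the exponent $j = i + j'$ yields exactly the definition of $r_j$ (noting that the inner summation index ranges over $k = \max(0, j-(n-1)) \leq k \leq \min(n-1, j)$ to keep both $a_k$ and $b_{j-k}$ well-defined). Combining $m$ consecutive $r_j$'s into $s_i$ then gives $\va\vb = \sum_{i=0}^{\lfloor(2n-1)/m\rfloor} s_i p^{im}$ by a straightforward regrouping of powers of $p$.

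The second stage is the key estimate. Because each $r_j$ is a sum of at most $n$ products of single digits bounded by $p-1$, we have $r_j \leq n(p-1)^2$, and hence $s_i \leq n(p-1)^2 \cdot \frac{p^m-1}{p-1} < n(p-1) p^m$. The choice $m = \lceil \log_p n \rceil + 1$ guarantees $p^m \geq np$, so $s_i < p^{2m}$, which in turn ensures $b_i = \lfloor s_i/p^m \rfloor < p^m$ and $q_i \in [0, p^m - 1]$. This bound is exactly what makes each $b_i$ propagate at most one digit forward in the base-$p^m$ representation, so the arithmetic reduces to a base-$p^m$ iterated sum where every overflow travels by only one position.

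The third stage reuses the carry-propagation argument from \cref{lmm:alg:iter:add} verbatim. Writing $\va\vb = \sum_i (q_i + b_{i-1}\cdot \mathbf{1}_{i\geq 1}) p^{im} + (\text{higher terms from } b_{\text{last}})$, we are left with the problem of adding two $p^m$-adic digits $q_i$ and $b_{i-1}$ (each bounded by $p^m-1$) at position $i$, whose carries are precisely what $f_i, g_i$, and the closed-form expression for $c_i$ capture. The same identity used in the iterated-addition lemma,
\begin{equation*}
    c_i = \bigvee_{0\le j\le i}\Bigl(f_j \wedge \bigwedge_{j\le k\le i} g_k\Bigr) = \mathbf{1}_{i_\wedge > i_\vee},
\end{equation*}
shows that $\tilde o_i = (q_i + b_{i-1} + c_{i-1}) \bmod p^m$ gives the correct $p^m$-adic digits of $\va\vb$ (truncated if the highest carry is discarded, which is harmless modulo $p^l$ since $l \le 2n \le m \cdot (\lfloor (2n-1)/m\rfloor + 1)$).

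Finally, I would convert the $p^m$-adic digits back to $p$-adic form. By definition, the digit at position $i = jm + l$ of $\tilde o_j$ in base $p$ is exactly $\lfloor (\tilde o_j \bmod p^{l+1}) / p^l \rfloor$, and taking only indices $i < l$ gives $\va\vb \bmod p^l$. I expect the main obstacle to be carefully handling the boundary case where $l$ does not align with a multiple of $m$, and verifying that truncating the $p^m$-adic representation before performing the digit extraction does not corrupt any digit below position $l$; this is resolved by noting that carries only propagate upward, so all digits strictly below $l$ are unaffected by what happens at positions $\geq \lceil l/m \rceil$.
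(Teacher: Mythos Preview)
Your proposal is correct and follows essentially the same approach as the paper: verify that $\sum_i s_i p^{im}$ equals $\va\vb$, then invoke the carry-propagation analysis from \cref{lmm:alg:iter:add} for the remaining steps. The paper's own proof is extremely terse (two sentences), so your version simply fills in the details---in particular the bound $s_i < p^{2m}$ via $m = \lceil \log_p n\rceil + 1$, which is the analogue of the $b_i < k \le p^m$ bound in \cref{lmm:alg:iter:add} and is what justifies reusing that lemma.
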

\begin{proof}
    It's easy to verify $\sum_{i} s_i p^{im}$ accurately represents the product of $\va, \vb$. For the subsequent steps, the proof is the same as that of \Cref{lmm:alg:iter:add} since they share the same procedures.
\end{proof}

Next, we provide the proof for \Cref{thm:log-mul-mod-con}.

\begin{proof}[Proof for \Cref{thm:log-mul-mod-con}]
    Now, we demonstrate that a log-precision transformer, with a constant depth, a fixed number of attention heads, and $O(n^2)$ embedding dimensions, is capable of simulating \Cref{alg:Mul}. Consequently, this model can accurately generate correct output for any input integers $\va, \vb$.

    \textbf{Initial Embeddings: } The total length of the input sequence is no longer than $2(n + 1)$. We categorize the tokens into two classes: number tokens ($0,1,\cdots, p - 1$) and auxiliary tokens ($+$, $=$, \texttt{<SOS>} and \texttt{<EOS>}). Given the parameters $k,n$, we determine the parameter $m = \lceil \log_p k \rceil + 1 \geq 2$, as specified in \cref{alg:Mul}. The embeddings for these classes are defined as follows:

    \begin{itemize}
        \item \textbf{Embedding of input token $a_{i}$: } $\vu_{a,i}^0 = \left( a_i \ve_{i+1}, 0, -1, -1,  0, 1, i, 0, \ape_{a, i} \right)$.
        \item \textbf{Embedding of input token $b_{i}$: } $\vu_{b,i}^0 = \left( 0, b_i \ve_{i+1}, -1,  -1,0, 2, i, 0, \ape_{b, i} \right)$.
        \item \textbf{Embedding of the ``$\times$'' token: } $\vu_{\times}^0 = (-1, -1, -1,  -1,-1, 4, -1, 0, \ape_{\times})$.
        \item \textbf{Embedding of the ``$\operatorname{=}$'' token: } $\vu_{=}^0 = (-1, -1, -1,  -1,-1, 5, -1, 0, \ape_{=})$.
        \item \textbf{Embedding of the $\texttt{<SOS>}$ token: } $\vu_\texttt{<SOS>}^0 = (-1, -1, -1, -1, -1, 6, -1, 0, \ape_{\texttt{<SOS>}})$.
        \item \textbf{Embedding of the $\texttt{<EOS>}$ token: } $\vu_\texttt{<EOS>}^0 = (-1, -1, -1, -1, -1, 7, -1, 0, \ape_{\texttt{<EOS>}})$.
        \item \textbf{Embedding of output token $o_i$: }  $\vu_{o,i}^0 = ( -1, -1, o_i, \ve_{\floor{i/m}}, -1, 3, i, p^{-(i \mod m)}, \ape_{o,i})$.
    \end{itemize}
    where $\ve_i \in \sR^{n}$ is one-hot vector, and $\ape_{\cdots}$ is absolute positional embedding.
    In this construction, the first $3n + 3$ dimensions of each initial embedding represent the word embedding, while the last three dimensions accounts for the position embedding.

    \textbf{Block 1.} The first block of the Transformer executes the first three lines of \Cref{alg:Mul}. To be specific, we first aggregate the input number $\va, \vb$ to the positions of $b_0$, and then calculate the values of $r_j$.

    To aggregate the input number $\va, \vb$ to the positions of $b_0$, we set the query, key and value as follows:
    \begin{itemize}
        \item Query: $\vq = (\ve^0[2n+2])$, i.e., $\vq = (0)$ for input number $\va, \vb$, and $\vq = (-1)$ otherwise.
        \item Key: $\vk = (1)$.
        \item Value: $\vv = \ve^0[1, \cdots, 2n]$. 
    \end{itemize}
    Thus $\langle \vq, \vk\rangle = 0$ for key of input number tokens, and $\langle \vq, \vk \rangle \leq  -1$ otherwise. By \Cref{lem:attn-log-mean}, the attention output is 
    \begin{equation*}
        \frac{1}{\ape_{b,0} - 2} (a_0, \cdots, a_{n-1}, b_0, \cdots, b_{n-1}).
    \end{equation*}
    By \Cref{lemma:MLP_multip}, we can use the subsequent MLP to get $(a_0, \cdots, a_{n-1}, b_0, \cdots, b_{n-1})$ given the value of $\ape_{b,0}$. Then we can calculate all $d_{i,j}$ using the MLP, which requires $O(n^2)$ hidden dimension by \Cref{lemma:MLP_multip}.
    
    Finally, we calculate $(r_{2n-1}, \cdots, r_0)$ by 
    \begin{equation*}
    r_j = \sum_{k=\max(0, j - (n-1))}^{\min(n-1, j)} d_{k,j-k}.
    \end{equation*}

    \textbf{Block 2.} This block of the Transformer uses several MLPs to executes line 4-12 of \Cref{alg:Mul}. All the calculations below are also calculated at the position of $b_0$, subsequent to what we did in Block 1.
    \begin{itemize}
        \item For the calculation of $s_i$, it's easy to get the values via $(r_{2n-1}, \cdots, r_0)$.
        \item For the calculation of $b_i,q_i$, notice that $b_i \leq p^m \leq np^2$, thus we can use 
        \begin{equation*}
            b_i = \sum_{j = 0}^{np^2} \text{ReLU}(s_i - p^m)
        \end{equation*}
        for each $b_i$, which requires $O(n^2)$ hidden dimension in total by \Cref{lemma:MLP_relu}. Then $q_i = s_i - b_i p^m$, which can be easily implemented by MLP as well.
        \item For the calculation of $f_i, g_i$, we can get those values by 
        \begin{equation*}
        \begin{aligned}
            & f_i = \text{ReLU}[q_i + b_{i-1} - (p^m - 1)] - \text{ReLU}[q_i + b_{i-1} - p^m], \\
            & g_i = \text{ReLU}[q_i + b_{i-1} - (p^m - 2)] - \text{ReLU}[q_i + b_{i-1} - (p^m - 1)] 
        \end{aligned}
        \end{equation*}
        and \Cref{lemma:MLP_relu}, which requires $O(n)$ hidden dimension in total. 
        \item For the calculation of $c_i$, notice that 
        \begin{equation*}
            \bigwedge_{1\leq i\leq \gamma} \alpha_i = \text{ReLU}\left( \sum_{i=1}^\gamma \alpha_i - \gamma + 1 \right), \;
            \bigvee_{1 \leq i\leq \gamma} \alpha_i = 1 - \text{ReLU} \left( 1 - \sum_{i=1}^\gamma \alpha_i \right).
        \end{equation*}
        Combining with \Cref{lemma:MLP_relu}, we can calculate the value of each $c_i$ with $O(n)$ hidden dimension. 
        \item Finally, for the calculation of $\tilde o_i$, we can use the similar fashion of the calculation of $q_i$. Since $q_i + b_{i-1} + c_{i-1} < 2p^m$, we can calculate each $\tilde o_i$ using constant hidden dimension, which implies we can calculate $\tilde \vo$ using $O(n)$ hidden dimension in total.
    \end{itemize}

    \textbf{Block 3.} The last block of the Transformer executes the last step of \cref{alg:Mul}. Let's consider the token $o_{(i+1)m+j+1}$, where $j \in \{0,\cdots, m - 1\}$, we want to predict the next token $o_{(i+1)k+j}$. We first COPY the value of $\tilde \vo$ from the position of $b_0$, then extracts $\tilde o_{i+1}$ by $\tilde o_{i + 1} = \langle \tilde \vo, \ve_{i + 1} \rangle$
    using the positional embedding of $\vu_{o,i}^0$. 
    
    Subsequently, for the output token ${o}_{(i+1)k+j}$, the result $o_{(i+1)k+j} = \tilde o_{i+1} \mod p^{j+1}$ is required. We first calculate $o_{i+1} / p^{j+1}$ using the positional embedding and \Cref{lemma:MLP_multip}, then calculate $\floor{\tilde o_{i+1} / p^{j+1}}$ using the similar fashion to what we did when calculating $s_i, b_i$ in Block 2. Since $\tilde o_{i+1} < 2p^m \leq np^2$, this can be implemented by a MLP with $O(n)$ hidden dimension. 
    Then we can calculate $\tilde o_{i+1} \mod p^{j+1}$ using MLP. Similarly, we can finally get the value of $\floor{\frac{\tilde o_{i+1} \mod p^{j+1}}{p^j}}$ using a MLP with $O(n)$ hidden dimension.

Upon outputting the token ${o}_0$, the model anticipates the \texttt{<EOS>} token, employing an MLP to filter the hidden embeddings and output the word embedding for \texttt{<EOS>}. 
Thus, the final output from this layer is characterized by the equation:
\begin{equation*}
    \ve_{o,i}^3 = \begin{cases}
    ({o}_{i-1},i,3) & \text{if \(i > 0\)}, \\
    (-1, -1, 7) & \text{if \(i = 0\)}.
\end{cases}
\end{equation*}

    \textbf{Predict Next Token. } Given the output embeddings of the last transformer layer $\ve_{o,i}^3$, and the word embeddings, the transformer can simply predict the next token by softmax.

    In this construction, the norm of the parameters is bounded by $O(n^2)$, therefore, this construction can be implemented by a log-precision transformer with arbitrarily small error.
\end{proof}

\section{Experimental Details}
\label{app:experimental-details}

In this section, we present the experimental details.

\subsection{Datasets}
\label{sec:gen_data}
The iterated addition and integer addition data are generated according to \Cref{alg:itadd-implement}. The multiplication data are generated according to \Cref{alg:mult-implement}. Both datasets are used online for training and testing.
\newpage

\begin{algorithm}[H]
    \caption{Iterated Addition Data Generation}
    \label{alg:itadd-implement}
    \SetKwFunction{largeNumberAdd}{large\_number\_add}
    \SetKwFunction{getData}{get\_data}

    \SetKwProg{Fn}{Function}{:}{}
    \Fn{\largeNumberAdd{$a, b, base$}}{
        \textbf{Input:} $a$: List of digits of the first number\\
        \phantom{\textbf{Input:}} $b$: List of digits of the second number\\
        \phantom{\textbf{Input:}} $base$: The numerical base\\
        \textbf{Output:} $result$: List of digits of the sum of $a$ and $b$

        carry $\gets 0$, result $\gets [ ]$\\
        max\_length $\gets$ max(length(a), length(b))\\
        \For{$i\gets 0$ \KwTo \text{max\_length - 1}}{
            sum $\gets$ carry\\
            \If{$i<$ length(a)}{
                sum $\gets$ sum + a[i]\\
            }
            \If{$i<$ length(b)}{
                sum $\gets$ sum + b[i]\\
            }
            carry $\gets$ floor(sum / base)\\
            result.\text{append}(sum $\mod$ \text{base}) \\
        }
        \If{carry $\neq 0$}{
            result.\text{append}(carry)\\
        }
        \KwRet result\\
    }
    
    \Fn{\getData{$batch, length, num\_count, base$}}{
        \textbf{Input:} \\
        $batch$: Number of samples \\
        $length$: Maximum length of addends \\
        $num\_count$: Number of addends \\
        $base$: The numerical base \\
        \textbf{Output:} tokenized\_data: Tensor of generated sequences\\
        data $\gets$ \text{random integers in range }[0, base) \text{ with shape } (batch, length, num\_count)\\
        tokenized\_data $\gets []$\\

        \For{$i\gets 0$ \KwTo $batch - 1$}{
            numbers $\gets$ data[i, :, :]\\
            strip leading zeros of numbers and get stripped\_numbers\\
            \For{num in numbers}{sum\_digits $\gets$ \text{large\_number\_add}(sum\_digits, num, base)\\}
            reverse stripped\_numbers and sum\_digits\\
            add token of '+' and '=' and '<EOS>' to form sequence
            pad the sequence into the same length\\
            tokenized\_data.append(sequence)\\
        }
        convert tokenized\_data to tensor\\
        \KwRet tokenized\_data\\
    }
\end{algorithm}

\begin{algorithm}[H]
    \caption{Integer Multiplication Data Generation}
    \label{alg:mult-implement}
    \SetKwFunction{largeNumberMult}{large\_number\_mult}
    \SetKwFunction{getMultData}{get\_mult\_data}

    \SetKwProg{Fn}{Function}{:}{}
    \Fn{\largeNumberMult{$a, b, base$}}{
        \textbf{Input:} $a$: List of digits of the first number\\
        \phantom{\textbf{Input:}} $b$: List of digits of the second number\\
        \phantom{\textbf{Input:}} $base$: The numerical base\\
        \textbf{Output:} $result$: List of digits of the product of $a$ and $b$\\
        
        result $\gets$ [0] * (length(a) + length(b))\\
        \For{$i \gets 0$ \KwTo $length(a) - 1$}{
            carry $\gets$ 0\\
            \For{$j \gets 0$ \KwTo $length(b) - 1$}{
                product $\gets$ $a[i] * b[j] + \text{result}[i+j] $ + carry\\
                carry $\gets$ \text{floor}(product / base)\\
                result$[i+j]$ $\gets$ product $\mod$ base\\
            }
            \If{carry > 0}{
                result[i + length(b)] $\gets$ result[i + length(b)] + carry\\
            }
        }
        strip leading zeros from result\\
        \KwRet result\\
    }
    \Fn{\getMultData{$batch, length, base$}}{
        \textbf{Input:} \\
        $batch$: Number of samples \\
        $length$: Maximum length of multiplicands \\
        $base$: The numerical base \\
        \textbf{Output:} tokenized\_data: Tensor of generated sequences\\
        data $\gets$ \text{random integers in range }[0, base) \text{ with shape } (batch, length, 2)\\
        tokenized\_data $\gets []$\\

        \For{$i\gets 0$ \KwTo $batch - 1$}{
            num\_1 $\gets$ data[i, :, 0]\\
            num\_2 $\gets$ data[i, :, 1]\\
            strip leading zeros of numbers and get stripped\_numbers\\
            product\_digits $\gets$ \text{large\_number\_mult}(num\_1, num\_2, base)\\
            reverse stripped\_numbers and product\_digits\\
            add token of '$\times$' and '=' and '<EOS>' to form sequence
            pad the sequence into the same length\\
            tokenized\_data.append(sequence)\\
        }
        convert tokenized\_data to tensor\\
        \KwRet tokenized\_data\\
    }
\end{algorithm}

\subsection{Model Training}
\label{sec:training}
    The experiments were conducted on a single NVIDIA GeForce RTX 4090 GPU over a duration of two weeks, investigating the differences in performance between standard precision and low precision operations. To avoid some unexpected issues of hardware, we also conduct the same experiments on NVIDIA A100 GPUs, and the results are consistent with the results on NVIDIA GeForce RTX 4090 GPU. We try 3 different seeds and select the maximum accuracy for each task. 

    The model configuration in our experiments is presented in \cref{tab:model_config}, and the training configuration is presented in \cref{tab:training_config}.

\begin{table}[H]
    \centering
    \begin{tabular}{ll}
      \toprule
    \multicolumn{2}{l}{\textbf{Model Configuration} } \\
    \midrule
        Model Depth & $\{3,5\}$ \\
        Hidden Dimension & 256 \\
        Attention Heads & 4\\
        Positional Embeddings & RoPE\\
        Activation & NewGeLU\\
        \bottomrule
    \end{tabular}
    \caption{Model Configuration for Transformer in Experiments.}
    \label{tab:model_config}
\end{table}

\begin{table}[H]
    \centering
    \begin{tabular}{ll}
      \toprule
    {\centering \textbf{Training Configuration} }\\
    \midrule
        Epochs & 1\\
        Learning Rate & 1e-3 \\
        Optimizer & AdamW \\
        $\beta_1$ & 0.9 \\
        $\beta_2$ & 0.999 \\
        Weight Decay & 0.01\\
        Learning Rate Scheduler & Cosine Scheduler with Warmup \\
        Numerical Precision & $\{\texttt{float32},\texttt{bfloat16}\}$\\
        \bottomrule
    \end{tabular}
    \caption{Training Configuration in Experiments.}
    \label{tab:training_config}
\end{table}

\subsection{Integer Addition Results}
\label{sec:int_add_result}
The results of the experiments are presented in \cref{tab:intadd-results}.

\begin{table}[H]
    \centering
    \begin{tabular}{ccccc}
      \toprule
      & \multicolumn{2}{c}{\textbf{Base-2}} & \multicolumn{2}{c}{\textbf{Base-10}}\\
    \textbf{Length} & \texttt{float32} \textbf{Accuracy} & \texttt{bfloat16} \textbf{Accuracy} & \texttt{float32} \textbf{Accuracy} & \texttt{bfloat16} \textbf{Accuracy}\\
    \midrule
    8 & 99.8\% & 99.6\% & 99.4\% & 99.0\% \\
    16 & 99.3\% & 98.4\% & 99.2\% & 98.1\% \\
    24 & 98.9\% & 96.3\% & 99.2\% & 97.4\% \\
    32 & 99.3\% & 95.9\% & 99.2\% & 94.1\% \\
        \bottomrule
    \end{tabular}
    \caption{Evaluation of integer addition accuracy across various length with both 32-bit and 16-bit precision.}
    \label{tab:intadd-results}
\end{table}

\subsection{Fine-tuing Configuration, Generation Configuration, and Prompt For LLM}
\label{sec:detail-llm}
    The fine-tuning configuration and generation configuration for LLMs is listed in \cref{tab:lora_config,tab:generation_config}. The detailed prompts for the three elementary arithmetic tasks are listed in the \cref{tab:prompt_add,tab:prompt_mul} and generation configuration can be found in the \cref{tab:generation_config}.

\begin{table}[H]
    \centering
    \begin{tabular}{ll}
      \toprule
    {\centering \textbf{Generation Configuration} }\\
    \midrule
        TopK & 50 \\
        TopP & 0.95 \\
        Temperature & 0.1\\
        \bottomrule
    \end{tabular}
    \caption{Generation Configuration for LLAMA 3.1 8B Instruct in arithmetic tasks.}
    \label{tab:generation_config}
\end{table}

\begin{table}[H]
    \centering
    \begin{tabular}{ll}
      \toprule
    {\centering \textbf{Fine-tuning Configuration} }\\
    \midrule
        Rank & 8 \\
        Scaling Factor & 16 \\
        Dropout Rate & 0.05\\
        Epochs & 1\\
        Learning Rate & 2e-4 \\
        Optimizer & AdamW \\
        $\beta_1$ & 0.9 \\
        $\beta_2$ & 0.999 \\
        Weight Decay & 0.01\\
        Learning Rate Scheduler & Cosine Scheduler with Warmup \\
        Warmup Ratio & 0.1 \\
        Numerical Precision & $\{\texttt{bfloat16},\texttt{int4}\}$\\
        \bottomrule
    \end{tabular}
    \caption{Generation Configuration for LLAMA 3.1 8B Instruct in arithmetic tasks.}
    \label{tab:lora_config}
\end{table}

\begin{table}[H]
    \centering
    \begin{tabular}{p{0.97\columnwidth}}
    \toprule
    {\centering \textbf{Prompt for LLAMA 3.1 8B Instruct in Integer Addition and Iterated Addition tasks.} }\\
    \midrule
       Please directly calculate the following arithmetic expression in base <base> with the following format:\\
<Expression> = <Result>\\
It is important that you should not show any intermediate steps in your calculation process.\\
The final answer should be computed in one step and provided the final result immediately without any explanation.\\
Here are some examples\\
32 + 78= 110\\
1234 + 4567 + 2134 + 4567 = 12502\\
2135 + 523 + 2135 + 523 = 5316\\
2314 + 4567 + 2314 + 4567 = 13762\\
Arithmetic Expression:\\
<Expression>\\
\bottomrule
    \end{tabular}
    \caption{Prompt for LLAMA 3.1 8B Instruct in Integer Addition and Iterated Addition tasks.}
    \label{tab:prompt_add}
\end{table}

\begin{table}[H]
    \centering
    \begin{tabular}{p{0.97\columnwidth}}
    \toprule
    {\centering \textbf{Prompt for LLAMA 3.1 8B Instruct in Integer Multiplication task.} }\\
    \midrule
       Please directly calculate the following arithmetic expression in base <base>.\\
It is important that you should not show any intermediate steps in your calculation process.\\
The final answer should be computed in one step and provided the final result immediately without any explanation.\\
Here are some examples\\
Examples:\\
32 * 56 = 1792\\
867 * 467 = 404889\\
123 * 456 = 56088\\
Arithmetic Expression:\\
<Expression>\\
\bottomrule
    \end{tabular}
    \caption{Prompt for LLAMA 3.1 8B Instruct in Integer Multiplication task.}
    \label{tab:prompt_mul}
\end{table}

\newpage

\subsection{Reference Results for LLMs}
We also provide the results of GPT-4o and GPT-4o-mini as a baseline for these arithmetic tasks base-10 for reference. The results are presented in \cref{tab:gpt4}.

\begin{table}[H]
    \centering
    \begin{tabular}{cccc}
      \toprule
    \textbf{Task} & \textbf{Length} & \textbf{GPT-4o} & \textbf{GPT-4o-mini}\\
    \midrule
    \multirow{5}{*}{Addition of 2 numbers} & 1 & 100.0\% & 100.0\% \\
    & 4 & 99.9\% & 98.8\% \\
    & 7 & 97.5\% & 51.4\%\\
    & 10 & 96.3\% & 46.0\%\\
    & 13 & 93.3\% & 44.0\%\\
    \midrule
    \multirow{5}{*}{Addition of 3 numbers} & 1 & 100.0\% & 100.0\% \\
    & 3 & 99.8\% & 99.6\% \\
    & 5 & 98.9\% & 73.4\%\\
    & 7 & 69.2\% & 9.1\%\\
    & 9 & 38.5\% & 5.8\%\\
     \midrule
    \multirow{5}{*}{Addition of 5 numbers} & 1 & 100.0\%  & 100.0\%  \\
    & 2 & 100.0\% & 99.4\%  \\
    & 3 & 100.0\%  & 89.5\% \\
    & 4 & 88.4\% & 31.1\%  \\
    & 5 & 86.8\% & 24.7\%  \\
     \midrule
    \multirow{5}{*}{Multiplication of 2 numbers} & 1 & 100.0\% & 100.0\%\\
    & 2 & 100.0\% & 97.5\% \\
    & 3 & 76.6\% & 44.7\%\\
    & 4 & 21.5\% & 7.6\% \\
    & 5 & 4.1\% & 0.7\%\\
    \bottomrule
    \end{tabular}
    \caption{The Performance of GPT-4o and GPT-4o-mini on the arithmetic tasks.}
    \label{tab:gpt4}
\end{table}

\end{document}